\documentclass[11pt]{article}
\usepackage[margin=1in]{geometry}
\usepackage{times}
\usepackage[round]{natbib}

\usepackage{amsmath,amsfonts,bm}

\def\eqref#1{equation~\ref{#1}}

\def\1{\bm{1}}

\DeclareMathAlphabet{\mathsfit}{\encodingdefault}{\sfdefault}{m}{sl}
\SetMathAlphabet{\mathsfit}{bold}{\encodingdefault}{\sfdefault}{bx}{n}

\newcommand{\E}{\mathbb{E}}

\newcommand{\R}{\mathbb{R}}

\DeclareMathOperator*{\argmax}{arg\,max}
\DeclareMathOperator*{\argmin}{arg\,min}

\usepackage{hyperref}
\hypersetup{colorlinks=true,linkcolor=blue,citecolor=blue,urlcolor=blue}
\usepackage{url}
\usepackage{algorithm}
\usepackage{algorithmic}

\usepackage{mystyle_1}
\title{Uncertainty-Aware Selection of Online Algorithms with Simulator Ensembles}

\author{Yongyi Guo\textsuperscript{1} \quad Zifan Xu\textsuperscript{2} \quad Ziping Xu\textsuperscript{3} \quad Kelly W. Zhang\textsuperscript{4}\\[1.5ex]
\normalsize
\textsuperscript{1}Department of Statistics, University of Wisconsin--Madison\\
\textsuperscript{2}Department of Computer Science, The University of Texas at Austin\\
\textsuperscript{3}School of Data and Information Sciences, University of North Carolina at Chapel Hill\\
\textsuperscript{4}Department of Mathematics, Imperial College London\\
\textsuperscript{ } \\
\textsuperscript{*}Authors are listed in alphabetical order.}
\date{}

\usepackage{graphicx}

\newif\ifshowcomments
\showcommentsfalse

\usepackage{amsthm}
\usepackage{booktabs}
\usepackage{enumitem}
\usepackage{mathtools}
\newtheorem{theorem}{Theorem}
\newtheorem{lemma}[theorem]{Lemma}
\newtheorem{proposition}[theorem]{Proposition}

\newcommand{\eqrefff}[1]{(\ref{#1})}
\newtheorem{innercustomgeneric}{\customgenericname}
\providecommand{\customgenericname}{}
\newcommand{\newcustomtheorem}[2]{%
  \newenvironment{#1}[1]
  {%
   \renewcommand\customgenericname{#2}%
   \renewcommand\theinnercustomgeneric{##1}%
   \innercustomgeneric
  }
  {\endinnercustomgeneric}
}
\newcustomtheorem{customthm}{Theorem}
\newcustomtheorem{customlemma}{Lemma}
\newcustomtheorem{customprop}{Proposition}
\newcustomtheorem{customcond}{Condition}
\newcustomtheorem{customcor}{Corollary}

\begin{document}

\maketitle

\begin{abstract}
The performance of online reinforcement learning depends critically on design choices, especially those that affect exploration. These choices are often selected by fitting a simulator to offline data, evaluating candidate algorithms in that simulator, and deploying the best-performing one. The simplest \emph{Plug-In selection rule} simply selects the best performing algorithm on the fitted simulator, making evaluations unreliable when the offline data used to fit the simulator are limited.
We investigate \emph{Uncertainty-Aware selection}, which forms an ensemble of simulators---for example, obtained by bootstrap resampling---and selects the online algorithm with the best average performance across the ensemble. While ensemble-based approaches have been used to mitigate distribution shift and facilitate sim-to-real transfer, we formally show that this approach can mitigate the effects of limited data when fitting the simulator and theoretically has significant regret gains compared to Plug-In selection in multi-armed bandits. We also empirically investigate the Uncertainty-Aware selection approach in deep RL experiments on robotic control tasks that involve selecting reward-shaping hyperparameters, and show that it leads to more reliable selection and improved online performance.
\end{abstract}

\section{Introduction}
Selecting an effective online reinforcement learning (RL) algorithm to deploy in practice is challenging because the deployment performance of online RL algorithms can vary substantially with design decisions (e.g., architecture, hyperparameters, and reward scaling or shaping) \citep{henderson2018deep,wang2022no,fan2025fragility}. Moreover, deployment of online RL in important domains, including robotics \citep{kober2013reinforcement,dulac2019challenges,ibarz2021how}, healthcare \citep{trella2022designing,figueroa2021adaptive}, power grids \citep{zhang2018review,chen2022reinforcement}, is often very expensive and thus, it can be infeasible to test out many candidate online RL algorithms in a real deployment.

In many
domains, however, one may have access to a (potentially limited) offline dataset before deployment to warm start or inform online RL algorithm design decisions. A common approach is to use the offline dataset to form an environment simulator that can be used as a surrogate to assess candidate
algorithms \citep{mandel2016offline}.
In robotics, system identification
approaches use a limited
offline data to calibrate a physical simulator for policy training
prior to deployment \citep{ljung1999system,peng2018sim,muratore2022robot}. In digital health interventions, digital twins built using offline data are used to simulate patients for
assessing candidate online intervention algorithms before deployment on a study population \citep{bjornsson2020digital,trella2022designing,xu2026diffusion}.

The appeal of the simulator approach is its generality: a simulator can run any candidate algorithm end to end no matter how complex the online RL algorithm design grows. Simulators can be constructed nonparametrically, by replaying logged transitions through queues \citep{mandel2016offline}, nearest-neighbor retrieval \citep{wang2022no}, rejection sampling \citep{tang2022towards} or parametrically, by fitting a model of the environment to the data \citep{trella2022designing,ghosh2024rebandit,m2023model,yang2023foundation}. The evaluated algorithm can be adaptive, history-dependent, or otherwise non-trivial, whereas offline policy evaluation \citep{precup2000eligibility, jiang2016doubly, uehara2022review} scores a fixed policy and does not directly extend to online RL algorithms.

Many existing approaches to fitting a simulator from offline data involve fitting a single model of the environment and selecting the candidate algorithm that performs the best on that fitted simulator. We call this selection approach the \textit{Plug-In} selection rule.
This approach has been used in practice in a variety of settings, including the Oralytics mobile health trial \citep{trella2022designing,trella2024oralytics} and the educational game study \citep{mandel2016offline}.
In this paper, we identify a failure mode of the Plug-In simulator: it carries no information about how uncertain
its estimate is, and we show that this omission can be costly when the offline data are scarce and the online horizon is long.

In this work, we advocate for an Uncertainty-Aware selection approach that constructs an \emph{ensemble} of simulators (Figure~\ref{fig:selection}). The ensemble consists of copies of the fitted simulator whose parameters are perturbed according to their estimation uncertainty, for example by refitting on bootstrap resamples of the offline data. Each candidate algorithm is scored by its average return across the simulators in the ensemble.
Our contributions are as follows.
\begin{enumerate}[leftmargin=*]
\item\textbf{Provable benefits of Uncertainty-Aware selection in bandits.} For UCB and Thompson Sampling in two-armed Gaussian bandits, we establish a provable and empirically validated mechanism by which accounting for simulator uncertainty improves algorithm selection. Specifically, Theorem~\ref{thm::UCB-regret-main} characterizes the asymmetric costs of exploration: underexploration incurs polynomial regret, whereas overexploration preserves logarithmic regret. Building on this result, we show that Uncertainty-Aware selection asymptotically chooses at least as much exploration as Plug-In (Proposition~\ref{prop:ucb-exploration}), resulting in no larger asymptotic regret growth and, with nonvanishing probability, a polynomial-to-logarithmic improvement (Theorem~\ref{thm::UCB-selection-main-paper}). Simulations support this mechanism, showing reductions in both the mean and variability of regret.
    \item \textbf{Uncertainty-Aware selection as a curvature regularized version of Plug-In selection.} In Proposition \ref{prop:regularized1}, we formally show that Uncertainty-Aware selection criterion can be interpreted as a regularized version of the Plug-In selection criterion that depends on the curvature of the objective function with respect to the simulator parameters. This result holds for general decision-making settings (including bandits and Markov decision processes (MDPs)).
    \item \textbf{Empirical benefit for specifying reward-shaping for deep RL.} We demonstrate the benefit of Uncertainty-Aware selection empirically in a deep RL reward-shaping task in which physics parameters of a MuJoCo simulator are identified from offline trajectories.
\end{enumerate}
Taken together, our results show that ensembles of simulators are not only a tool for training robust policies: they provably improve the selection of online learning algorithms from limited offline data.

\begin{figure}[t]
  \centering
  \vspace{-3mm}
  \includegraphics[width=0.9\linewidth]{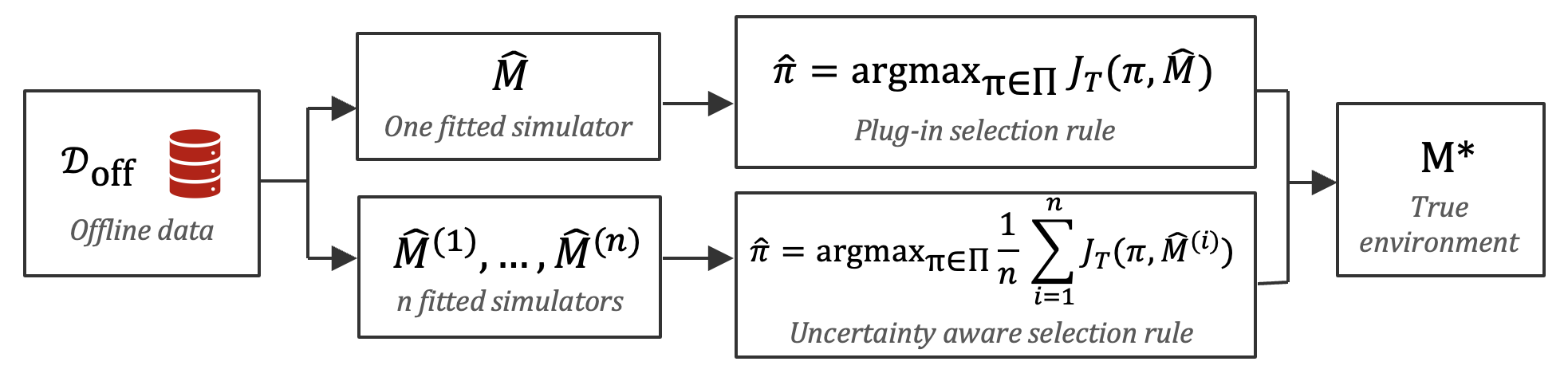}
  \vspace{-5mm}
  \caption{\textbf{Selecting an online RL algorithm using limited offline data.} Both rules fit simulators to $\mathcal{D}_{\textnormal{off}}$ and then deploy the selected algorithm in the true environment $M^*$. The Plug-In rule commits to a single fitted simulator $\widehat{M}$; the Uncertainty-Aware rule averages the objective over $\widehat{M}^{(1)}, \dots, \widehat{M}^{(n)}$, favoring algorithms that perform consistently across plausible realizations.}
  \label{fig:selection}
  \vspace{-5mm}
\end{figure}

\section{Problem setup}
\label{sec:setup}
\textbf{Decision-making environment.}
States $S_{t,h}$ take values in state space $\mathcal{S}$ and actions $A_{t,h}$
in action space $\mathcal{A}$.
$M^*$ denotes the general decision-making environment, which specifies the distribution of the reward and next state given the most recent action, state, and history, i.e., $\mathbb{P}( R_{t,h}, S_{t,h+1} \mid A_{t,h}, S_{t,h}, \mathcal{H}_{t,h-1})$, where the history up to episode $t$ and the within-episode timestep $h$:
\begin{align*}
    \mathcal{H}_{t,h-1} := \big\{ (S_{t,h'}, A_{t,h'}, R_{t,h'}) \big\}_{h'=1}^{h-1} \cup \big\{ (S_{t',h}, A_{t',h}, R_{t',h}) \big\}_{t'=1,h=1}^{t'=t-1,h=H}.
\end{align*}
In our theory and experiments $M^*$ is either a finite-horizon MDP \citep{puterman2014markov} or a multi-armed bandit (MAB) environment \citep{lattimore2020}. In the MAB setting, $H=1$ and $|\mathcal{S}| = 1$ so we use $\mathcal{H}_t = \{ (A_{t^\prime}, R_{t^\prime}) \}_{t^\prime=1}^{t}$ to denote the history.

\textbf{Offline dataset.}
We assume we have access to an offline dataset consisting of $T_{\textnormal{off}}$ episodes from environment $M^*$ each of horizon $H$ collected by a behavior policy $\pi_b$:
\begin{align*}
    \mathcal{D}_{\textnormal{off}} := \big\{ (S_{t,1}, A_{t,1}, R_{t,1}), (S_{t,2}, A_{t,2}, R_{t,2}), \dots (S_{t,H}, A_{t,H}, R_{t,H}) \big\}_{t=1}^{T_{\textnormal{off}}}.
\end{align*}
The policy $\pi_b$ takes $(\mathcal{H}_{t,h-1}, S_{t,h})$ as input and outputs a distribution over actions $\mathcal{A}$. Note, in the MAB special case where $H=1$ and $|\mathcal{S}| = 1$, for simplicity, we just write $\mathcal{D}_{\textnormal{off}} := \{ (A_{t}, R_{t}) \}_{t=1}^{T_{\textnormal{off}}}$.

\textbf{Algorithm selection objective.}
Our goal is to use the offline dataset $\mathcal{D}_{\textnormal{off}}$ to inform the selection of an online RL algorithm $\pi \in \Pi$, which specifies action selection probabilities $\pi(A_{t,h} \mid S_{t,h}, \mathcal{H}_{t,h-1}) := \mathbb{P}(A_{t,h} \mid S_{t,h}, \mathcal{H}_{t,h-1})$. The algorithm $\pi$ will be deployed for $T$ episodes in the same environment $M^*$ where $\mathcal{D}_{\textnormal{off}}$ was collected. Specifically, we want to select algorithm $\pi$ to maximize an objective function $J_T(\pi, M^*)$. Unless stated otherwise, throughout this work, the objective could be the expected cumulative reward
\begin{align}
J_T(\pi, M^*) = \E_{\pi, M^*} \bigg[ \sum_{t=1}^T \sum_{h=1}^H R_{t,h} \bigg],
\end{align}
where the expectation is taken over the random trajectories induced by the online algorithm $\pi$. In other settings like robotic learning, one might also be interested in the expected reward on the final episode $J_T(\pi, M^*) = \E_{\pi, M^*} \big[ \sum_{h=1}^H R_{T,h} \big]$.

\textbf{Candidate algorithm class.}
The class of candidate history-dependent policies $\Pi$ could include exploration strategies of algorithm (e.g., $\epsilon$-greedy \citep{wunder2010classes}, Thompson Sampling \citep{daniel2018tutorial}, bonus-based exploration \citep{bellemare2016unifying}),
model architectures, state specification, reward specification/reward shaping, and algorithm hyperparameters.
We index candidate online algorithms by a tuning parameter $\theta \in \Theta$, so $\Pi = \{\pi_\theta : \theta \in \Theta\}$. For example, in UCB, $\theta$ controls the width of the exploration bonus; in $\epsilon$-greedy Q-learning, it may control the exploration rate. In general, $\theta$ may index other design choices, such as model architecture, state representation, or reward shaping. We use $\theta_T^* = \mathrm{argmax}_{\theta \in \Theta} J_T(\pi_\theta,M^*)$ to denote the optimal $\theta$ for a given $T$.

\textbf{Plug-In simulation selection rule.}
We use the offline dataset $\mathcal{D}_{\textnormal{off}}$ to fit an approximate environment model $\widehat M$ of $M^*$, which may be an explicit dynamics model or a learned generative world model. We then use $\widehat M$ as a proxy environment in which to evaluate candidate decision procedures.

A simple and commonly used simulator-based selection rule \citep{m2023model,wang2022no,yang2023foundation} is the Plug-In selection rule which forms an estimated simulator $\widehat{M}$
\begin{align}
    \label{eq::plugin-selection}
    \widehat\pi_{\textnormal{plug}}
    \in \argmax_{\pi\in\Pi} J_T(\pi,\widehat M).
\end{align}
 The above solution can be solved empirically by repeatedly rolling out the algorithm $\pi$ in environment $\widehat{M}$ for different choices of algorithm $\pi \in \Pi$.
\footnote{Picking the best RL algorithm using replay-style simulators \citep{mandel2016offline,tang2022towards,wang2022no} is also Plug-In selection where the simulator is an empirical distribution fit non-parametrically.}

\textbf{Uncertainty-Aware simulation selection rule.}
In this work we advocate for an Uncertainty-Aware (UA) selection approach. This approach involves using $\mathcal{D}_{\textnormal{off}}$ to fit an ensemble of simulators and choosing the algorithm candidate that does well on average across an ensemble of fitted simulators $\widehat M^{(1)}, \dots \widehat M^{(n)}$. Then the Uncertainty-Aware selection rule selects the algorithm as follows:
\begin{align}
    \widehat\pi_{\UA} = \argmax_{\pi \in \Pi} \, \frac{1}{n} \sum_{i=1}^n J_T \big(\pi, \widehat M^{(i)} \big),
    \label{eqn:ensembleObjective}
\end{align}
Rather than
maximizing performance on the single fitted simulator $\widehat M$, \eqrefff{eqn:ensembleObjective} instead evaluates candidates on average across ensembles $\widehat M^{(n)}$, which favors algorithms with consistently strong performance across plausible realizations of the simulator $\widehat{M}$. There are a variety of ways to construct the ensemble of simulators---including bootstrap and adding noise to the fitted simulator $\widehat{M}$.

Throughout, we construct the ensemble of simulators $\widehat M^{(1)}, \dots \widehat M^{(n)}$ using bootstrap-based approaches. In particular, we use both empirical and parametric bootstrapping. Empirical bootstrap resamples the episodes in $\mathcal{D}_{\textnormal{off}} = \big\{ (S_{t,1}, A_{t,1}, R_{t,1}), (S_{t,2}, A_{t,2}, R_{t,2}), \dots (S_{t,H}, A_{t,H}, R_{t,H}) \big\}_{t=1}^{T_{\textnormal{off}}}$ with replacement. In simpler models, e.g., Gaussian bandits, we use parametric bootstrap methods to construct the ensemble \citep[Chapter 23.1]{van2000asymptotic}.

\subsection{Related work}

Our main finding is a more robust and plausible algorithm selection through ensembled simulators that are uncertainty-aware.

This approach is similar to domain randomization (DR) and its data-calibrated variants \citep{tobin2017domain,peng2018sim,ramos2019bayessim,chebotar2019closing,muratore2022neural,tiboni2023dropo} that also train against a distribution of simulators.
Existing literature on DR primarily concerns the mismatch between the simulator and the deployment environment, and its purpose is to make one fixed policy robust to that mismatch.
In our setting there is no mismatch, since the offline data and the deployment come from the same environment; the spread of the ensemble reflects only the sampling variability of a finite dataset and vanishes as the data grow, and the gain arises because the candidates are online learners whose regret responds asymmetrically to the estimation error.

Ensembles are also standard inside model-based RL, where bootstrapped or probabilistic ensembles quantify epistemic uncertainty to drive exploration \citep{osband2016deep,chua2018deep} or to penalize policies that visit states where the members disagree \citep{yu2020mopo,kidambi2020morel}.
In those methods the ensemble is part of the algorithm being run and changes how it acts; in ours the ensemble sits outside the candidates, which are evaluated as black boxes, and it changes only which candidate algorithm is chosen.

A similar distinction separates our rule from robust and distributionally robust RL \citep{iyengar2005robust,nilim2005robust,eysenbach2021maximum}, which optimize a worst case over an uncertainty set by solving an optimization problem: we show that a plain average over the ensemble is already a curvature regularization of the Plug-In criterion (Proposition~\ref{prop:regularized1}), and that this average suffices for the regret gains of Section~\ref{sec:mab}.

Finally, the object being selected differs from that of offline policy selection \citep{konyushova2021active,yang2022offline,zitovsky2023revisiting},
which ranks fixed policies through fitted value functions; we rank learners that keep learning after deployment, so what the simulator must get right is their ordering by online regret.
Among the works that select online algorithms \citep{wang2022no,tang2022towards}, no prior work asks how the estimation error of a data-fitted simulator propagates into the choice of an online learning algorithm, or shows that an ensemble corrects it.
See Appendix~\ref{app:related} for a more complete review of the related literature.

\section{Plug-In vs. UA selection in multi-armed bandits}\label{sec:mab}
In this section, we characterize the cost of poor hyperparameter selection and compare the Plug-In and Uncertainty-Aware selection rules in a simple \textbf{Gaussian multi-armed bandit environment} $M^*$, in which $H=1$, $|\mathcal{S}| = 1$, $\mathcal{A} = \{ 1, 2 \}$, and rewards $R_t|A_t\!=\!a \sim \mathcal{N}(\mu_a, \sigma^2)$. More specifically, we study two common MAB algorithms with a hyperparameter $\theta$ that impacts the amount of exploration. Our results center around two key findings, which we support empirically and theoretically:
\begin{enumerate}[label=(\alph*), leftmargin=*]
    \item Underexploration can be substantially more costly than overexploration: an exploration parameter $\theta$ that is too small can lead to much larger regret than  one that is too large.
    \item Uncertainty-Aware selection tends to favor greater exploration than Plug-In selection, thereby reducing the risk of costly underexploration and leading to improved regret.
\end{enumerate}

\textbf{Bandit algorithms.}
We focus on two MAB algorithms. For both algorithms, $\theta$ controls the amount of exploration: larger values of $\theta$ induce more exploration, while smaller values induce less exploration. The first, UCB \citep{auer2002,lattimore2020,abbasi2011improved} at round $t$ pulls the arm
\begin{align}
    A_t\in\argmax_{a\in\mathcal A}\big\{\widehat\mu_a(t-1) + \sqrt{2\theta \log t / N_a(t-1)}\big\},
    \label{eqn:ucb1}
\end{align}
where $N_a(t)\!= \!\sum_{\tau=1}^{t} \ind(A_{\tau} \!=\!a)$ and $\widehat\mu_a(t)\!=\!\sum_{\tau=1}^{t} \ind(A_{\tau}\!=\!a) R_{\tau} / N_a(t)$  are the running pull count and mean reward of arm $a$, respectively. Here the hyperparameter $\theta>0$ is the \emph{exploration radius}.

The second algorithm, Thompson Sampling (TS) \citep{daniel2018tutorial} forms a Bayesian model of the reward means where we have a flat prior over arm means $\mu_a$, i.e., $P(\mu_a) \propto 1$, and rewards are conditionally Gaussian with variance $\theta$, i.e., $R_t |A_t\!=\!a, \mu_a \sim \cN(\mu_a, \theta)$. At each time $t$, the posterior distribution is then $\mu_a |\mathcal{H}_{t-1} \sim \cN( \widehat\mu_a(t-1), \, \theta / N_a(t-1) )$. TS selects actions via probability matching in that actions are selected according to the posterior probability that they are optimal:
\begin{align}
    \PP( A_t = a \mid \mathcal{H}_{t-1}) = \PP( a = \textnormal{argmax}_{a' \in \mathcal{A}} \, \mu_{a'} \mid \mathcal{H}_{t-1}).
    \label{eqn:TS}
\end{align}
We refer to the hyperparameter $\theta>0$ as the
\emph{working reward variance}.

\begin{figure}[t]
    \centering
    \vspace{-5mm}
    \includegraphics[width=\linewidth]{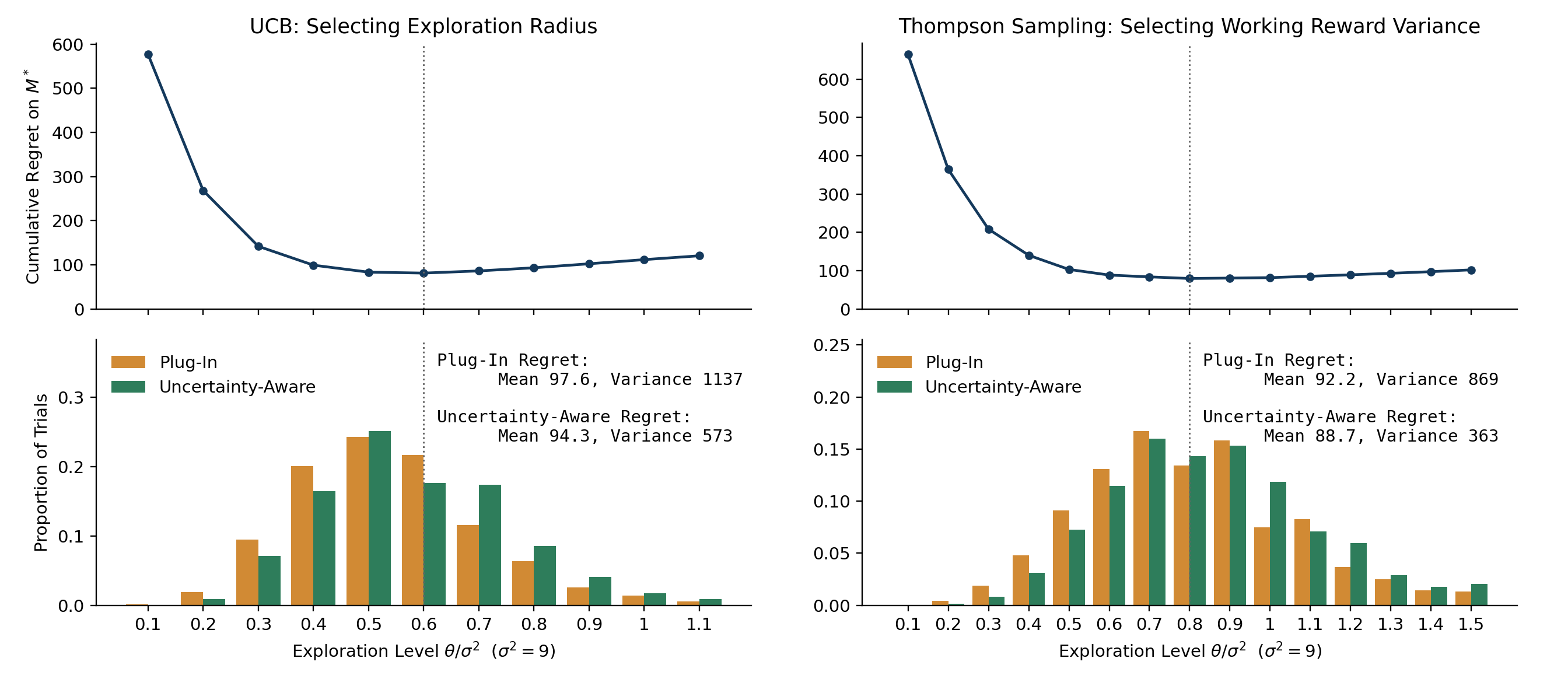}
    \vspace{-5mm}
    \caption{\textbf{Asymmetric costs of exploration and the benefit of Uncertainty-Aware selection.} Simulations are in a two-armed Gaussian bandit with $T=5000$, $T_{\textnormal{off}}=25$, margin $\mu_1 - \mu_2 = 1$, and noise standard deviation $\sigma=3$.
    \emph{Top:} Regret of UCB (left) and Thompson Sampling (right) as a function of the exploration level $\theta$ (see \eqrefff{eqn:ucb1} and \eqrefff{eqn:TS}). Regret rises far more steeply below the optimal value $\theta_T^*$ (dotted vertical line) than above it, i.e., underexploration is more costly.
    \emph{Bottom:} Using multiple samples of $\mathcal{D}_{\textnormal{off}}$,  Uncertainty-Aware selection (ensemble models constructed via bootstrap) more often selects higher values of $\theta$ compared to Plug-In, which leads to lower regret on average. Additionally, Uncertainty-Aware selection has lower variance in regret across trials.}
    \label{fig:selection-regret}
    \vspace{-5mm}
\end{figure}

\subsection{Asymmetric costs of under- and overexploration}
\label{sec:UCB-asymmetry}
For both UCB and TS, given an exploration parameter $\theta$, we measure the online performance of $\pi_\theta$ by its expected cumulative regret:
\begin{align}
    \label{eq:bandit-regret}
    \textnormal{Reg}_T(\pi_\theta, M^*) = T\mu^*(M^*)-J_T(\pi_\theta,M^*),
\end{align}
where $\mu^*(M^*) = \max_{a \in \{1,2\}} \mu_a$, and $\pi_\theta$ denotes either UCB or TS with hyperparameter $\theta$. Figure~\ref{fig:selection-regret} (top) illustrates a pronounced asymmetry: regret increases much more sharply when $\theta$ falls below the best-performing choice than when it increases above it. We observe the same pattern across a range of environment configurations in Appendix~\ref{app:plane-robust}.

The following result provides a theoretical characterization of this asymmetry for both UCB and TS.
\begin{theorem}[Asymmetric regret of UCB and TS]
\label{thm::UCB-regret-main}
Let $M^*$ be a two-armed Gaussian bandit with $\mu_1>\mu_2$ and $\sigma^2>0$. Let $\pi_\theta$ denote either the UCB algorithm (\ref{eqn:ucb1}) or the TS algorithm (\ref{eqn:TS}) with hyperparameter $\theta>0$. Then:
\begin{itemize}
\vspace{-3mm}
\item[(i)] For any $\theta\in(0, \sigma^2)$, $
\lim_{T\to\infty}
\frac{\log \textnormal{Reg}_T(\pi_\theta,M^*) }{\log T}=
1-\frac{\theta}{\sigma^2}.
$
\item[(ii)] For any $\theta\geq \sigma^2$,
$
\lim_{T\to\infty}
\frac{\textnormal{Reg}_T(\pi_\theta,M^*)}{\log T}=
\frac{2\theta}{\mu_1-\mu_2}.
$
\end{itemize}
\end{theorem}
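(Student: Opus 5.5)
Throughout we write $\Delta=\mu_1-\mu_2$ and use $\textnormal{Reg}_T=\Delta\,\E[N_2(T)]$. The whole proof therefore reduces to controlling $\E[N_2(T)]$, and the threshold $\theta=\sigma^2$ will come from comparing $\sigma^2$ with the exploration width. For both algorithms the relevant event is that the optimal arm's empirical mean is depressed by a large deviation. After $s$ pulls, $\widehat\mu_1$ has variance $\sigma^2/s$. For UCB, arm 1's index $\widehat\mu_1+\sqrt{2\theta\log t/s}$ falls below $\mu_2$ roughly when $\widehat\mu_1-\mu_1\le -\sqrt{2\theta\log t/s}$. By Gaussian tails this has probability about $t^{-\theta/\sigma^2}$, up to polylogarithmic and $\Delta$-dependent factors. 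For TS with working variance $\theta$, the posterior sample of arm 1 has spread $\sqrt{\theta/s}$ while the true fluctuation of $\widehat\mu_1$ has spread $\sqrt{\sigma^2/s}$, so the same ratio $\theta/\sigma^2$ governs how often arm 1 is starved.

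\textbf{Part (ii), $\theta\ge\sigma^2$.} This follows the standard asymptotically tight UCB analysis (Lattimore--Szepesv\'ari, Thm.~8.1 style) with the radius rescaled by $\theta$. Arm 2 is pulled only if (a) arm 2's index exceeds $\mu_1-\epsilon$, or (b) arm 1's index falls below $\mu_1-\epsilon$. Event (a) contributes $N_2\le 2\theta\log T/(\Delta-\epsilon)^2+o(\log T)$ in expectation. For event (b) we sum over $t$ the probability $\sum_s \PP(\widehat\mu_{1,s}+\sqrt{2\theta\log t/s}\le\mu_1-\epsilon)$, which is bounded by $\sum_t t^{-\theta/\sigma^2}\cdot\mathrm{poly}\log t$ plus an $\epsilon$-dependent constant. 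At $\theta=\sigma^2$ this borderline sum needs the sharper peeling bound that keeps the $\epsilon^2 s$ term in the exponent, giving $o(\log T)$. The matching lower bound $\E N_2(T)\ge (1-o(1))2\theta\log T/\Delta^2$ follows because arm 1 is pulled linearly often w.h.p. (its index concentrates near $\mu_1$), so arm 2 must be pulled until its bonus shrinks below $\Delta$. Letting $\epsilon\to0$ gives $\textnormal{Reg}_T/\log T\to 2\theta/\Delta$. For TS, I would adapt the Agrawal--Goyal / Kaufmann--Korda--Munos Gaussian TS analysis. A pull of arm 2 with $N_2\ge 2\theta\log T/(\Delta-\epsilon)^2$ requires the posterior sample of arm 2 to exceed $\mu_1-\epsilon$, which has probability $o(1/T)$. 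The dangerous term is $\sum_s\E[1/p_{1,s}-1]$, where $p_{1,s}$ is the probability that arm 1's sample exceeds $\mu_1-\epsilon$. This term is finite or $o(\log T)$ exactly when $\theta\ge\sigma^2$, because $1/p_{1,s}\approx\exp((\mu_1-\widehat\mu_1)^2 s/(2\theta))$ and $\E$ of this under $\widehat\mu_1\sim\mathcal N(\mu_1,\sigma^2/s)$ is controlled by the ratio $\sigma^2/\theta$.

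\textbf{Part (i), $\theta<\sigma^2$.} For the upper bound $\textnormal{Reg}_T\le T^{1-\theta/\sigma^2+o(1)}$ I would bound $\E N_2(T)$ by the expected length of ``bad epochs'' in which arm 1 is starved. Arm 1 with $s$ pulls stays below arm 2's level until its bonus $\sqrt{2\theta\log t/s}$ (or its posterior spread) grows enough. A deviation of size $x$ in $\widehat\mu_1$ lasts until $t\approx\exp(s x^2/(2\theta))$, while such a deviation has probability $\exp(-s x^2/(2\sigma^2))$. Optimizing over $x$ under the cap $t\le T$ gives $\E[\min(T,\text{duration})]\approx T^{1-\theta/\sigma^2}$ up to polylogarithmic factors. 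For the lower bound, I would exhibit the event where after $s_0=O(1)$ initial pulls $\widehat\mu_1\le\mu_2-c-\sqrt{2\theta\log T/s_0}\,(1-\delta)$. This event has probability $T^{-(\theta/\sigma^2)(1+o(1))}$ as $\delta\to0$ with $c$ fixed. On it, arm 2 (pulled linearly, index concentrating at $\mu_2$) dominates for a polynomially long stretch of order $T$. Hence regret is at least $T^{1-\theta/\sigma^2-o(1)}$. For TS the analogous event makes arm 1's sampling probability $\lesssim T^{-1}$ per round throughout, via the Gaussian tail with ratio $\theta/\sigma^2$.

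\textbf{Main obstacle.} The main obstacle is the part (i) lower bound and its exact exponent for TS. There the starvation is probabilistic rather than deterministic, so one must show that arm 1's rare samples do not rescue it quickly. I would handle this by showing the expected number of arm-1 pulls during the bad stretch is $O(1)$, since each per-round probability is $\le T^{-1+o(1)}$, and that a few extra pulls do not remove the deviation once $s_0$ is fixed with slack $\delta$. The upper-bound exponent matching requires care only in the polylog factors, which vanish after taking $\log/\log T$.
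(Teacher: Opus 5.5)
Your architecture matches the paper's:
\begin{itemize}
\item a pull-count decomposition with the Gaussian bound $t^{-\theta/\sigma^2}$ on the optimal arm's index dropping (for UCB the paper just sums this per round rather than using epochs);
\item for TS in part (i), a truncated posterior-odds sum $\sum_k\mathbb{E}[\min\{T,(1-p_k)/p_k\}]$, which is your ``bad epochs'' computation;
\item a depressed-first-reward event for the polynomial lower bound;
\item a sharper bound at $\theta=\sigma^2$.
\end{itemize}
However, three steps are missing or would fail as written.

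(a) In part (ii) you give the matching lower bound $\liminf_T\mathbb{E}[N_2(T)]/\log T\ge 2\theta/\Delta^2$ only for UCB. For TS it cannot be borrowed from Lai--Robbins, which gives only $2\sigma^2/\Delta^2<2\theta/\Delta^2$ when $\theta>\sigma^2$. You need an algorithm-specific argument. The paper first shows $N_2(\lfloor T/2\rfloor)\ge k_0$ with probability tending to one, uniformly in $\theta$. It then notes that while $N_2<(1-\rho)2\theta\log T/\Delta^2$ and the empirical means are $\epsilon\Delta$-accurate, TS pulls arm 2 with conditional probability at least $p_T=c(\log T)^{-1/2}T^{-\alpha}$, where $\alpha=(1+3\epsilon)^2(1-\rho)<1$. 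Finally it couples with a $\mathrm{Binomial}(\lceil T/2\rceil,p_T)$ variable, whose mean $T^{1-\alpha}/\sqrt{\log T}$ dwarfs $\log T$.

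(b) At $\theta=\sigma^2$, what gives the summable per-round bound $Ct^{-1}e^{-c\sqrt{\log t}}$ is the cross term $\epsilon\sqrt{2s\log t}/\sigma$ in the exponent $s(\epsilon+\sqrt{2\theta\log t/s})^2/(2\sigma^2)$. Keeping only the $\epsilon^2 s$ term gives $Ct^{-1}/\epsilon^2$ per round, whose sum $C\log T/\epsilon^2$ is not $o(\log T)$.

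(c) Your resolution of the ``main obstacle'' does not work as stated.
\begin{itemize}
\item A per-round rescue probability of $T^{-1+o(1)}$ summed over $T$ rounds gives $T^{o(1)}$ expected rescues, not $O(1)$.
\item With the slack $(1-\delta)$ you wrote, arm 1's draw beats arm 2's level with probability about $T^{-(1-\delta)^2}$ per round. That means polynomially many rescues over a stretch of length $T$. (For UCB the same slack starves arm 1 only until $t\approx T^{(1-\delta)^2}$; this is harmless for the exponent but is not ``order $T$''.)
\item The deviation is not robust to ``a few extra pulls''. One extra pull near $\mu_1$ shrinks the depression of $\widehat\mu_1$ by the factor $s_0/(s_0+1)$, while the depression required for starvation shrinks only by $\sqrt{s_0/(s_0+1)}$. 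So you would need slack $(1+\delta)$ and $s_0$ large relative to $1/\delta$.
\end{itemize}
The paper sidesteps all of this by making TS starvation deterministic. It takes $s_0=1$ (the initialization pull) and requires $Y_{1,1}\le\mu_2-G-\sqrt{2\theta_+\log T}$, which has probability $T^{-\theta_+/\sigma^2-o(1)}$. It intersects this with $\{Z_{1,t}\le\sqrt{2\log T}\ \text{for all } t\le T\}$, which has probability tending to one since $T\,\Phi(-\sqrt{2\log T})\to0$. It also intersects with positive-probability events keeping arm 2's empirical means above $\mu_2-G/2$ and its sampling noise $\sqrt{\theta/k}\,Z_{2,t}$ above $-G/2$. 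On this intersection, at every round arm 1's draw is at most $\mu_2-G$ and arm 2's is at least $\mu_2-G$. Hence arm 2 is pulled at every round $3,\dots,T$, for all $\theta\le\theta_+$ at once, and no rescue ever occurs. This event also controls arm 2's posterior noise at small $N_2$, which your sketch (``index concentrating at $\mu_2$'') leaves unaddressed for TS.
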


Theorem~\ref{thm::UCB-regret-main} identifies $\sigma^2$ as a critical exploration level for both algorithms. Below this threshold, regret grows polynomially as $T^{1-\theta/\sigma^2+o(1)}$; at or above the threshold, it grows only logarithmically, with leading constant $2\theta/(\mu_1\!-\!\mu_2)$. Thus, insufficient exploration can change the order of regret from logarithmic to polynomial, whereas additional exploration above the threshold affects only the leading constant of the logarithmic regret. This asymmetry is also reflected in the finite-horizon results in Figure~\ref{fig:selection-regret} (top).

Theorem~\ref{thm::UCB-regret-main} is closely related to the results in \citet{fan2025fragility} and \citet{fan2022typical}. In Appendices~\ref{app:ucb} and~\ref{app:ts}, we establish stronger results for both algorithms: the regret exponent converges \emph{uniformly} in $\theta$ over compact subsets of $(0,\infty)$, while the leading-order logarithmic asymptotics hold uniformly over compact subsets of $(\sigma^2,\infty)$. We also provide associated finite-horizon regret bounds that are uniform in both $\theta$ and the suboptimality gap (Lemma \ref{lem:ucb-bounds} and \ref{lem:fixed-env-TS}). These uniformity results are important for analyzing data- and horizon-dependent hyperparameter selection.
Our uniform polynomial lower bounds for both algorithms exploit low initial rewards that delay further sampling of the optimal arm, following the intuition of \citet{fan2025fragility}. For the upper bounds, we adapt a pull-count decomposition argument to obtain the required uniform control; the TS analysis further builds on the posterior-odds comparison of \citet{agrawal2013further}, for which we derive uniform bounds on truncated Gaussian odds to handle underexploration.

\subsection{Uncertainty-Aware selection favors exploration and improves regret}

\textbf{Uncertainty-Aware selection lowers both the mean and the variance of regret (Figure~\ref{fig:selection-regret}).}
We empirically compare the Plug-In and Uncertainty-Aware selection rules for UCB and Thompson Sampling. We use a Gaussian reward two-armed bandit simulation environment $M^* = \left( \cN(\mu_1,\sigma_1^2), \mathcal N(\mu_2,\sigma_2^2) \right)$.
The Plug-In selection rule fits $\widehat M = \big( \cN(\widehat\mu_1,\widehat\sigma^2), \cN(\widehat\mu_2,\widehat\sigma^2) \big)$, where $\widehat\mu_a$ is the sample mean of arm $a\in\{1,2\}$ and $\widehat\sigma^2$ is the reward variance pooled across the two arms, both estimated using $\mathcal{D}_{\textnormal{off}}$. The Uncertainty-Aware selection rule forms an ensemble of simulators $\widehat M^{(1)}, \dots, \widehat M^{(n)}$ by a parametric bootstrap: member $i$ is $\widehat M^{(i)} = \big( \cN\big( \widehat\mu_1^{(i)}, (\widehat\sigma^{(i)})^2 \big), \cN\big( \widehat\mu_2^{(i)}, (\widehat\sigma^{(i)})^2 \big) \big)$, where $\widehat\mu_a^{(i)}$ and $(\widehat\sigma^{(i)})^2$ are the same estimates computed on a fresh dataset of $T_{\textnormal{off}}$ pairs drawn from $\widehat M$ under the behavior policy (Appendix~\ref{app:plane-rules}).

Figure~\ref{fig:selection-regret} (bottom) shows that Uncertainty-Aware selection has lower regret compared to Plug-In. The lower regret under UA selection is because it selects higher values of exploration compared to Plug-In. Specifically, for Plug-In selection $61.0\%$ of the regret is due to selecting a $\theta$ below $\theta_T^*$, while for Uncertainty-Aware selection only $50.6\%$ of regret is due to selecting a $\theta$ below $\theta_T^*$. The Plug-In rule therefore underperforms not because its estimate of $\theta$ is noisy, but because the errors that fall on the side of too little exploration are the expensive ones, and their cost compounds with the horizon. Furthermore, the variance of the cumulative regret across trials is dramatically lower under Uncertainty-Aware selection compared to Plug-In (573 vs 1137 for UCB; 363 vs 869 for TS). We discuss how Uncertainty-Aware selection can be interpreted as a curvature regularized version of the Plug-In objective in Section \ref{sec:regularization}.

\textbf{Uncertainty-Aware selection leads to greater exploration (Figure~\ref{fig:UCB-exploration}).}
Figure~\ref{fig:UCB-exploration} illustrates the mechanism behind improvement in regret under UA selection. Across much of the region of $(\widehat\Delta,\widehat\sigma)$ values supported by the offline data, the Uncertainty-Aware rule selects a normalized exploration parameter $\theta/\sigma^2$ that is at least as large as that selected by the Plug-In rule. Thus, accounting for uncertainty in the fitted simulator systematically guards against selections that are overly exploitative under a favorable but inaccurate point estimate. This effect is not simply a uniform increase in exploration: it is concentrated in regions where uncertainty about the reward gap and noise level can change the relative performance of candidate settings. By avoiding these costly underexploratory selections while making similar choices when the fitted model is sufficiently informative, the Uncertainty-Aware rule reduces both the mean and variability of regret observed above.

\begin{figure}[t]
    \centering
    \vspace{-3mm}
    \includegraphics[width=\linewidth]{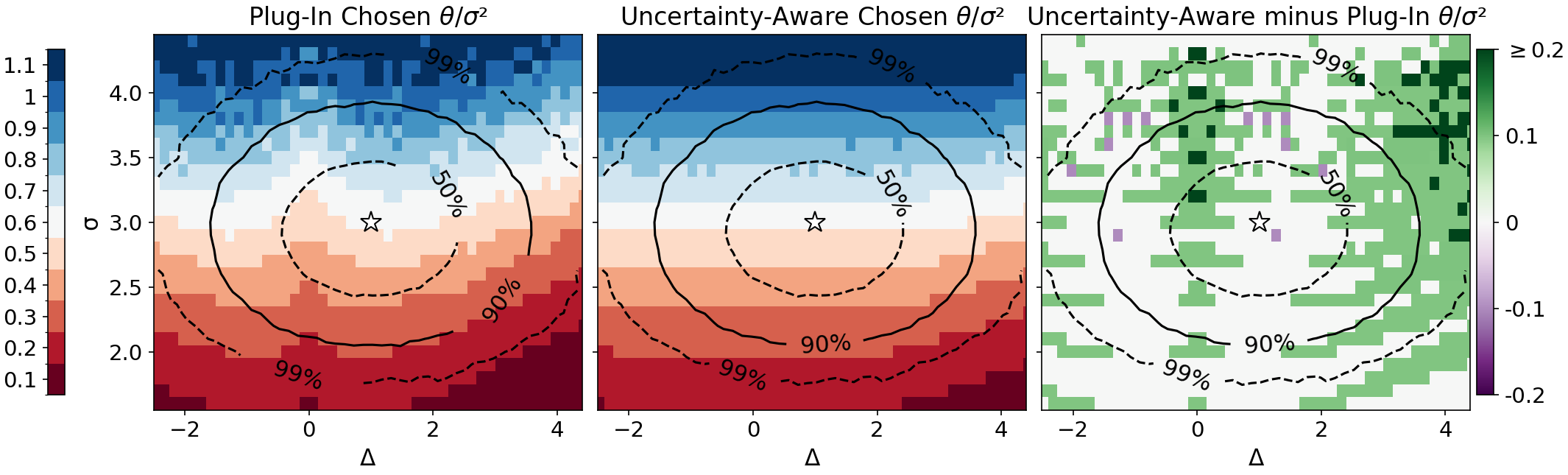}
    \vspace{-7mm}
    \caption{\textbf{Uncertainty-Aware (UA) selection leads to greater exploration than Plug-In under UCB. } We independently sample $10^6$ $\mathcal{D}_{\textnormal{off}}$ datasets and compute an estimate of the margin $\Delta = \mu_1 - \mu_2$ and the reward standard deviation $\sigma$. The star depicts the true value of $\Delta = 1$ and $\sigma = 3$ in the environment. The contour lines at 50\% depicts where 50\% of the $10^6$ estimated $\widehat{\Delta}, \widehat{\sigma}$ are; similarly for the 90\%, and 99\% contour lines.
    We depict the selected value of $\theta/\sigma^2$ for each combination of $(\Delta, \sigma)$ under the Plug-In selection rule (left) and that for the UA selection rule (center). In the right most plot, we depict the difference in chosen $\theta/\sigma^2$ for the Plug-In and UA rules.
    The plot shows that the UA selection rule leads to similar or greater exploration than Plug-In for most values of $(\Delta, \sigma)$.
    Combined with the asymmetric cost of underexploration (Section~\ref{sec:UCB-asymmetry}), this helps explain how UA decreases regret compared to Plug-In.
    }
    \label{fig:UCB-exploration}
    \vspace{-3mm}
\end{figure}

We next provide theoretical support for these empirical findings. For simplicity, we assume the offline dataset $\mathcal{D}_{\textnormal{off}}$ was collected by a behavior policy that samples each arm $T_{1,\textnormal{off}} := \lfloor T_{\textnormal{off}}/2\rfloor$ times, and let $ \widehat M=\big(\cN(\widehat\mu_1,\widehat\sigma_1^2), \cN(\widehat\mu_2,\widehat\sigma_2^2)\big)$ denote the fitted environment. For either UCB or TS, let $\pi_\theta$ denote the algorithm with hyperparameter $\theta$. For the selection analysis, we restrict the candidate parameters to a compact interval $\Theta=[\underline\theta,\overline\theta]$ satisfying $0<\underline\theta<\sigma^2<\overline\theta<\infty$.

The Plug-In rule selects $\widehat\theta_{\plug,T} \in \argmax_{\theta\in\Theta} J_T(\pi_\theta,\widehat M)$. For the theoretical analysis of the Uncertainty-Aware rule, we consider a simplified parametric-bootstrap ensemble \citep{van2000asymptotic} that perturbs the fitted arm means according to their estimated sampling uncertainty. Conditional on
$\mathcal D_{\mathrm{off}}$, each ensemble member is
\begin{align}
    \widehat M^{(i)}
    = \left( \mathcal N\big(\widehat\mu_1+ \widehat\sigma_1/\sqrt{T_{1, \off}}\cdot Z_1^{(i)}, \, \widehat\sigma_1^2 \big), ~
    \mathcal N\big(
    \widehat\mu_2 + \widehat\sigma_2 / \sqrt{T_{1, \off}}\cdot Z_2^{(i)}, \, \widehat\sigma_2^2 \big)
    \right),
    \label{eq:perturbed-bandit}
\end{align}
where $Z_1^{(i)}, Z_2^{(i)} \stackrel{\textnormal{iid}}{\sim}\mathcal N(0,1)$. We analyze the population counterpart of the finite-ensemble UA objective
in (\ref{eqn:ensembleObjective}), obtained in the large-ensemble limit:
\begin{equation}\label{eq:theta-UA-ucb}
    \widehat\theta_{\UA,T}
    := \argmax_{\theta\in\Theta} \E \Big[
    J_T(\pi_\theta,\widehat M^{(i)}) \, \Big| \, \mathcal D_{\textnormal{off}} \Big].
\end{equation}
The maximizers $\widehat\theta_{\plug,T}$ and $\widehat\theta_{\UA,T}$ exist almost surely over $\mathcal D_{\textnormal{off}}$ (see Appendices \ref{app:ucb}, \ref{app:ts}).

Our first result is Proposition \ref{prop:ucb-exploration} which shows that, by accounting for uncertainty in the fitted environment, the UA selection rule asymptotically never selects less exploration than the Plug-In rule.

\begin{proposition}[Uncertainty-Aware selection leads to higher exploration]
Consider a two-armed Gaussian bandit $M^*$ with $\mu_1>\mu_2$ and $\sigma^2>0$. For either UCB or TS, almost surely over $\mathcal D_{\textnormal{off}}$, $\liminf_{T\to\infty}\widehat\theta_{\UA,T}\geq\lim_{T\to\infty}\widehat\theta_{\plug,T}$.
    \label{prop:ucb-exploration}
\end{proposition}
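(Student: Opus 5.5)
Conditional on $\mathcal D_{\mathrm{off}}$, the fitted environment $\widehat M$ is a fixed two-armed Gaussian bandit with means $\widehat\mu_1,\widehat\mu_2$ and possibly distinct variances $\widehat\sigma_1^2,\widehat\sigma_2^2$. Almost surely $\widehat\mu_1\neq\widehat\mu_2$. Write $\hat a$ for the empirically better arm and $\hat b$ for the worse one. The plan is to apply (the uniform, unequal-variance versions in Appendices~\ref{app:ucb}, \ref{app:ts} of) Theorem~\ref{thm::UCB-regret-main} to $\widehat M$ and to each perturbed member $\widehat M^{(i)}$, and then to compare the two limiting selections. For a bandit with unequal variances, the critical exploration level is governed by the variance of the \emph{optimal} arm. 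Below $\sigma_{\mathrm{opt}}^2$, regret grows like $T^{1-\theta/\sigma_{\mathrm{opt}}^2+o(1)}$, because a low initial sample of the optimal arm delays its resampling. At or above that level, regret is $\frac{2\theta}{\Delta}\log T$, possibly scaled by the suboptimal-arm variance. I take the appendix lemmas (Lemma~\ref{lem:ucb-bounds}, Lemma~\ref{lem:fixed-env-TS}) to give these asymptotics uniformly over $\theta\in\Theta$ and over gaps in compacts.

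\textbf{Step 1 (Plug-In limit).} Maximizing $J_T$ is the same as minimizing regret. On $\Theta$, the polynomial regime $\theta<\widehat\sigma_{\hat a}^2$ is eventually dominated by any $\theta\ge\widehat\sigma_{\hat a}^2$. Among those $\theta$, the logarithmic constant is increasing in $\theta$. Using uniformity in $\theta$, I will show that $\widehat\theta_{\plug,T}\to\theta_{\plug}:=\max(\underline\theta,\min(\widehat\sigma_{\hat a}^2,\overline\theta))$. The boundary $\theta=\widehat\sigma_{\hat a}^2$ needs care, because there the polynomial and logarithmic regimes meet. Uniform convergence of the exponent handles the side $\theta<\widehat\sigma_{\hat a}^2-\epsilon$. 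Monotonicity of the logarithmic constant handles the side $\theta>\widehat\sigma_{\hat a}^2+\epsilon$. This establishes that the limit exists.

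\textbf{Step 2 (UA objective).} Under the perturbation (\ref{eq:perturbed-bandit}), the ordering of the perturbed means is flipped with a fixed probability $p\in(0,1)$ that does not depend on $T$. The perturbed variances stay at $\widehat\sigma_1^2,\widehat\sigma_2^2$. The expected regret $\E[\mathrm{Reg}_T(\pi_\theta,\widehat M^{(i)})\mid\mathcal D_{\mathrm{off}}]$ is therefore a mixture of two scenarios. With probability $1-p$ the optimal arm is $\hat a$ and the critical level is $\widehat\sigma_{\hat a}^2$. With probability $p$ the optimal arm is $\hat b$ and the critical level is $\widehat\sigma_{\hat b}^2$. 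The gap $\Delta^{(i)}$ is random, has a continuous density, and can be arbitrarily small. Hence I will split the expectation into $|\Delta^{(i)}|\ge\delta$, where the uniform-in-gap asymptotics apply, and $|\Delta^{(i)}|<\delta$, where I use the crude bound $\mathrm{Reg}_T\le |\Delta^{(i)}|T$, whose expectation is $O(\delta^2 T)$. The aim is
\[
\lim_{T\to\infty}\frac{\log \E[\mathrm{Reg}_T(\pi_\theta,\widehat M^{(i)})\mid \mathcal D_{\mathrm{off}}]}{\log T}=\max\Big(\big(1-\tfrac{\theta}{\widehat\sigma_{\hat a}^2}\big)_+,\ \big(1-\tfrac{\theta}{\widehat\sigma_{\hat b}^2}\big)_+\Big)
\]
uniformly in $\theta$, with logarithmic growth once $\theta\ge\max(\widehat\sigma_1^2,\widehat\sigma_2^2)$. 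The small-gap contribution must not change this exponent. I will handle it by letting $\delta=\delta_T\to0$ polynomially and using the gap-uniform bounds on the remaining range.

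\textbf{Step 3 (comparison).} Every exponent in Step 2 is zero exactly when $\theta\ge\widehat\sigma_{\max}^2$, where $\widehat\sigma_{\max}^2:=\max(\widehat\sigma_1^2,\widehat\sigma_2^2)$. Arguing as in Step 1, every limit point of $\widehat\theta_{\UA,T}$ is at least $\max(\underline\theta,\min(\widehat\sigma_{\max}^2,\overline\theta))$. Since $\widehat\sigma_{\max}^2\ge\widehat\sigma_{\hat a}^2$, this limit is at least $\theta_{\plug}$, which gives the liminf inequality. It is enough to prove that any $\theta<\widehat\sigma_{\max}^2-\epsilon$ is eventually beaten by $\theta=\min(\widehat\sigma_{\max}^2,\overline\theta)$, which has at most logarithmic or smaller polynomial regret. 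This avoids pinning down the exact UA limit.

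\textbf{Main obstacle.} The hardest part is Step 2. The appendix bounds must be uniform as the gap shrinks, and the small-gap tail of the ensemble must not inflate the regret exponent or the constant at the threshold. If the gap-uniformity of Lemma~\ref{lem:ucb-bounds} or Lemma~\ref{lem:fixed-env-TS} is too weak, I would fall back on proving only that the UA objective at any $\theta<\widehat\sigma_{\hat a}^2-\epsilon$ has exponent at least $1-\theta/\widehat\sigma_{\hat a}^2$. This uses a lower bound restricted to the event $\Delta^{(i)}\ge\delta$, which has positive probability. I would also show that $\theta=\min(\widehat\sigma_{\max}^2,\overline\theta)$ attains an exponent arbitrarily close to zero. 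These one-sided bounds suffice for the liminf claim.
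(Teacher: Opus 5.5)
Your main plan is correct and is essentially the paper's proof. The paper obtains the statement from part (i) of Theorems~\ref{thm:main} and~\ref{thm:main-TS}, using Lemmas~\ref{lem:plugin-limit}/\ref{lem:PI-minimizer-TS} and \ref{lem:integrated-exponent}/\ref{lem:UQ-exponent-TS}: the Plug-In limit is $\Pi_\Theta(\widehat\sigma_{\hat a}^2)$, and the UA-objective exponent is $(1-\theta/\widehat\sigma_{\max}^2)_+$. That exponent comes from the trivial bound $TX$ below $x_T=\sqrt{B_T/T}$ plus the gap-uniform envelope above it, together with a positive-probability witness event on which the larger-variance arm is optimal; an exponent comparison and monotonicity of the projection then finish. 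There are two harmless slips. For $\theta\ge\widehat\sigma_{\max}^2$ the UA objective grows like $(\log T)^2$, not logarithmically, because the small-gap region contributes $\log T\cdot\log(1/x_T)$; only the zero exponent is used, so this does not matter. Your fallback is unnecessary and would fail when $\widehat\sigma_{\max}^2>\overline\theta$: the comparator $\overline\theta$ then has UA exponent $1-\overline\theta/\widehat\sigma_{\max}^2>0$, so the lower bound must come from the larger-variance arm rather than from $\hat a$.
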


The greater exploration induced by UA selection translates into a more favorable asymptotic regret growth rate. To state this comparison, define the asymptotic log ratio between the cumulative regret growth rates of Plug-In and UA selection:
\begin{equation}
    \label{eq:G}
    G(\mathcal D_{\textnormal{off}})
    := \liminf_{T\to\infty}
    \frac{ \log \textnormal{Reg}_T
    (\pi_{\widehat\theta_{\plug,T}},M^*)
    }{ \log \textnormal{Reg}_T
    (\pi_{\widehat\theta_{\UA,T}},M^*) }.
\end{equation}

\begin{theorem}\label{thm::UCB-selection-main-paper}
In a two-armed Gaussian bandit $M^*$ with $\mu_1>\mu_2$ and $\sigma^2>0$, for either UCB or TS:
\begin{itemize}
    \item[(i)] For any $T_{\textnormal{off}}\geq 4$,   $G(\mathcal D_{\textnormal{off}})\geq 1$ almost surely.

    \item[(ii)] $\liminf_{T_{\textnormal{off}}\to\infty}\mathbb P\!\left(G(\mathcal D_{\textnormal{off}})=\infty\right)\geq \frac14$.
\end{itemize}
\end{theorem}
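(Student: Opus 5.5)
Our plan is to reduce everything to the limiting selections given by Proposition~\ref{prop:ucb-exploration} and the regret characterization of Theorem~\ref{thm::UCB-regret-main}, in its uniform-in-$\theta$ form from Appendices~\ref{app:ucb} and~\ref{app:ts}.

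\textbf{Step 1: limiting selections.} For a fitted environment $\widehat M$ with gap $\widehat\Delta=\widehat\mu_1-\widehat\mu_2$, let $\widehat\sigma^2_{*}$ be the noise variance of the arm that is suboptimal in $\widehat M$ (or the common variance in the symmetric case). Assume $\widehat\Delta\neq 0$, which holds a.s. The uniform asymptotics say that $\log\textnormal{Reg}_T(\pi_\theta,\widehat M)/\log T\to(1-\theta/\widehat\sigma^2_*)_+$ uniformly on $\Theta$. On $\{\theta\ge\widehat\sigma^2_*\}$ the refinement is $\textnormal{Reg}_T\sim 2\theta\log T/|\widehat\Delta|$. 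Hence the Plug-In maximizer converges to
\[
\theta_{\plug}:=\mathrm{clip}_\Theta(\widehat\sigma^2_*),
\]
since smaller $\theta$ loses polynomially and larger $\theta$ loses in the constant. This is the $\lim_{T}\widehat\theta_{\plug,T}$ in Proposition~\ref{prop:ucb-exploration}. For UA, every ensemble member has variance $\widehat\sigma_a^2$, but the identity of the suboptimal arm is randomized through $Z^{(i)}$. Each arm is suboptimal with positive conditional probability, so the expected regret is dominated by the larger exponent. The UA limit is therefore at least $\mathrm{clip}_\Theta(\max(\widehat\sigma_1^2,\widehat\sigma_2^2))\ge\theta_{\plug}$. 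We take this as Proposition~\ref{prop:ucb-exploration}, assumed already established.

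\textbf{Step 2: part (i).} The deployment regret on $M^*$ with limiting parameter $\bar\theta$ has exponent $e(\bar\theta)=(1-\bar\theta/\sigma_2^2)_+$, where $\sigma_2^2=\sigma^2$ is the true suboptimal-arm variance. By uniform convergence of the exponent on compacts, we may replace $\widehat\theta_{\cdot,T}$ by its limit (or, for UA, its $\liminf$), using that $e$ is continuous and nonincreasing.

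If $e(\theta_{\plug})>0$, then $e(\liminf\widehat\theta_{\UA,T})\le e(\theta_{\plug})$. The ratio of logs then tends to at least $e(\theta_{\plug})/e(\theta_{\UA})\ge1$, with the value $+\infty$ when the UA exponent is $0$. This uses that $\log\textnormal{Reg}_T\sim\log\log T$ in the logarithmic regime.

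If $e(\theta_{\plug})=0$, both selections are eventually at or above $\sigma^2$. In that regime we need $\log\textnormal{Reg}_T=\log\log T+O(1)$ uniformly over $\theta$ in a compact subset of $[\sigma^2,\overline\theta]$. This follows from the finite-horizon bounds of Lemmas~\ref{lem:ucb-bounds} and~\ref{lem:fixed-env-TS}, which give regret between $c\log T$ and $C\log T$. The ratio is then $1+o(1)$.

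The condition $T_{\textnormal{off}}\ge4$ ensures $T_{1,\off}\ge2$, so the variance estimates are a.s. positive and finite.

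\textbf{Step 3: part (ii).} Exhibit an event of asymptotic probability at least $1/4$ on which $\theta_{\plug}<\sigma^2\le\liminf\widehat\theta_{\UA,T}$. On such an event the numerator exponent is positive while the denominator grows like $\log\log T$, so $G=\infty$.

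As $T_{\textnormal{off}}\to\infty$, $\widehat\mu_a$ is consistent, so $\widehat M$ has arm 2 suboptimal. Consequently $\theta_{\plug}=\widehat\sigma_2^2$, which falls below $\sigma^2$ with probability tending to $1/2$ by the CLT for the sample variance. For the UA limit, the exponent averaged over the ensemble is governed by $\max(\widehat\sigma_1^2,\widehat\sigma_2^2)$ as argued in Step 1. Independence of the two arms' samples gives $\widehat\sigma_1^2\ge\sigma^2$ with probability tending to $1/2$. The intersection therefore has limiting probability $1/4$, which explains the constant.

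\textbf{Main obstacle.} The hardest point is Step 1 for UA at large $T_{\textnormal{off}}$. The probability that the ensemble flips the arm ordering is about $\Phi(-\sqrt{T_{1,\off}}\widehat\Delta/\sqrt{\widehat\sigma_1^2+\widehat\sigma_2^2})$. It is tiny but fixed in $T$, so the flipped members, with regret $T^{1-\theta/\widehat\sigma_1^2}$, still dominate the expectation as $T\to\infty$. Making this rigorous requires controlling the ensemble expectation when members have gaps near zero, where the uniform bounds degenerate. I would first truncate to members with $|\text{gap}|\ge\epsilon$, using the uniform-in-gap bounds there, and show the remaining contribution is $O(\epsilon T)$ with probability mass $O(\epsilon)$, hence of lower exponent order.
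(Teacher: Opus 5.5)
Your skeleton matches the paper's: limits of the two selectors, exponents of the deployed regret, then an event of limiting probability $1/4$. However, Step~1 attaches the critical exploration level to the wrong arm, and this invalidates Step~3.

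In Lemmas~\ref{lem:ucb-bounds} and~\ref{lem:fixed-env-TS} the exponent is $(1-\theta/\tau_o^2)_+$, where $\tau_o^2$ is the variance of the \emph{optimal} arm. Polynomial regret comes from an early draw of the optimal arm lying about $\sqrt{2\theta\log T}$ below $\mu_s$. This event has probability $T^{-\theta/\tau_o^2+o(1)}$, and afterwards that arm is starved of pulls and never corrected. Overestimates of the suboptimal arm, by contrast, self-correct because that arm keeps being pulled. Hence the Plug-In limit is $\Pi_\Theta(\widehat\sigma^2_{\widehat a})$ with $\widehat a$ the \emph{fitted optimal} arm (Lemma~\ref{lem:plugin-limit}).

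Now look at your event $\{\widehat a=1,\ \widehat\sigma_2^2<\sigma^2<\widehat\sigma_1^2\}$. There Plug-In actually converges to $\Pi_\Theta(\widehat\sigma_1^2)>\sigma^2$, both rules have $\Theta(\log T)$ deployment regret, and $G(\mathcal D_{\textnormal{off}})=1$, not $\infty$. The correct event is $\{\widehat\sigma^2_{\widehat a}<\sigma^2<\widehat\sigma^2_{3-\widehat a}\}$. Your constant $1/4$ comes out right only because the two sample variances are i.i.d.\ and independent of the sample means.

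Part~(i) survives, since it only uses $c_{\UA}\ge c_{\plug}$, and the clipped maximum of the two variances dominates either arm's. Your UA heuristic should read ``each arm is \emph{optimal} with positive probability,'' with the same conclusion.

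Separately, Proposition~\ref{prop:ucb-exploration} only gives $\liminf_T\widehat\theta_{\UA,T}\ge\lim_T\widehat\theta_{\plug,T}$. Step~3 needs the stronger bound $\liminf_T\widehat\theta_{\UA,T}\ge\Pi_\Theta(\bar\sigma^2)$, where $\bar\sigma^2:=\widehat\sigma_1^2\vee\widehat\sigma_2^2$, so your ``main obstacle'' has to be proved, not assumed.

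A fixed truncation does suffice for the lower bound at $\theta<\Pi_\Theta(\bar\sigma^2)$: use a witness event on which the max-variance arm is optimal with gap in a fixed $[g_-,g_+]$. It does not suffice for the upper bound at the comparator $\theta'$. Bounding members with gap $X<\epsilon$ by $TX$ gives $T\,\mathbb{E}[X\mathbf{1}\{X<\epsilon\}]\asymp\epsilon^2T$, which is linear in $T$ for fixed $\epsilon$.

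The cutoff must shrink polynomially with $T$. Lemma~\ref{lem:integrated-exponent} splits at $x_T=(B_T/T)^{1/2}$, with $B_T=T^{q_T}(1+\log T)$. It uses $TX$ below $x_T$ and the envelope $C[X+B_T/X]$, valid for all gaps, above it. The bounded folded-normal density then gives $\mathbb{E}[X^{-1}\mathbf{1}\{x_T<X\le1\}]=O(\log T)$.
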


Proposition~\ref{prop:ucb-exploration} provides the intuition for Theorem~\ref{thm::UCB-selection-main-paper}(i): UA asymptotically selects at least as much exploration as Plug-In, while the regret exponent decreases with the exploration parameter. A formal argument requires uniform regret bounds for the data-dependent sequences $\widehat\theta_{\plug,T}$ and $\widehat\theta_{\UA,T}$. Part~(ii) further shows that UA can strictly improve the regret growth rate. The proof identifies an event whose probability converges to $1/4$ as $T_{\off}\to\infty$, on which Plug-In selects an exploration parameter below the critical level $\sigma^2$, whereas UA selects above it. By the asymmetric regret analysis in Section~\ref{sec:UCB-asymmetry}, the resulting deployment regrets are polynomial and logarithmic in $T$, respectively, and hence $G(\mathcal D_{\off})=\infty$. Proofs for UCB and TS are given in Appendices~\ref{app:ucb} and~\ref{app:ts}, respectively.

\section{Bandits and beyond: Uncertainty-Aware simulation selection as a curvature regularization}
\label{sec:regularization}

Motivated by the bandit results, we next give a local characterization of how UA selection differs from Plug-In selection. This characterization does not rely on the bandit setting: it applies whenever an offline dataset is used to fit a simulator and candidate decision procedures are evaluated within that simulator. The simulator may be an explicit dynamics-and-reward model or a learned latent world model, and the evaluation criterion $J_T$ may be application-specific, e.g., could incorporate risk-sensitive utility, safety, or another deployment criterion.

Plug-In selection ranks candidates using a single fitted simulator, whereas UA selection ranks them by average performance across plausible perturbations of that simulator. We show that, locally, this averaging induces a curvature-dependent adjustment to the Plug-In objective. This provides one interpretation of UA selection as regularization at the selection stage: it can downweight candidates whose favorable Plug-In evaluation is sensitive to uncertainty in the fitted simulator, without modifying the candidate algorithm
itself.

\textbf{Curvature correction for simulator uncertainty.}
To make this interpretation precise, consider a parametric simulator family $\{M_\lambda:\lambda\in\mathbb{R}^d\}$ and suppose that the fitted simulator is $\widehat M=M_{\widehat\lambda}$. In particular, $\lambda$ may parameterize transition dynamics, rewards, or components of a latent world model.

Let an ensemble simulator take the form $M^{(i)} = M_{\widehat\lambda+\epsilon^{(i)}}$ where, conditional on $\mathcal D_{\mathrm{off}}$, $\E[\epsilon^{(i)}\mid\mathcal D_{\mathrm{off}}]=0$, $\E[\epsilon^{(i)}(\epsilon^{(i)})^\top \mid\mathcal D_{\mathrm{off}}]=\Sigma$, and $\E[\|\epsilon^{(i)}\|_2^3 \mid \mathcal{D}_{\mathrm{off}}] < \infty$.
Here, the perturbations $\epsilon^{(i)}$ may arise from an empirical bootstrap, a parametric bootstrap, posterior samples, or another ensemble construction intended to represent uncertainty in the fitted simulator.

The Plug-In selection rule selects $\widehat\theta_{\plug,T} \in \argmax_{\theta\in\Theta} J_T(\pi_\theta,M_{\widehat\lambda})$. The UA objective here maximizes the population counterpart of the finite-ensemble UA objective from \eqrefff{eqn:ensembleObjective}:
\begin{align}
    \label{eqn:expectedEnsemble}
    \widehat\theta_{\UA,T} \in \argmax_{\theta\in\Theta} \E \left[
        J_T \big( \pi_\theta,
        M_{\widehat\lambda + \epsilon^{(i)}} \big) \,\middle|\, \mathcal D_{\mathrm{off}}
    \right].
\end{align}

\begin{proposition}[Ensembles induce a curvature correction]
    \label{prop:regularized1}
    Suppose $\lambda \mapsto J_T(\pi_\theta, M_\lambda)$ is twice continuously differentiable
    and its Hessian is $L$-Lipschitz, i.e., $\| \nabla_\lambda^2 J_T(\pi_\theta, M_{\lambda}) - \nabla_\lambda^2 J_T(\pi_\theta, M_{\lambda'}) \|_{\mathrm{op}} \le L \| \lambda - \lambda' \|_2$ for all $\lambda, \lambda'$, where $\| \cdot \|_{\mathrm{op}}$ is the operator norm. Then
    \begin{align}
        \Big| \E \big[ J_T(\pi_\theta, M_{\widehat \lambda+\epsilon^{(i)}}) \mid \mathcal{D}_{\mathrm{off}} \big]
        - J_T(\pi_\theta, M_{\widehat \lambda}) - \tfrac{1}{2} \operatorname{Tr} \big( \nabla_\lambda^2 J_T(\pi_\theta, M_{\widehat \lambda}) \Sigma \big) \Big|
        \le \frac{L}{6} \, \E \big[ \| \epsilon^{(i)} \|_2^3 \mid \mathcal{D}_{\mathrm{off}} \big].
        \label{eqn:curvature-correction}
    \end{align}
    \vspace{-5mm}
\end{proposition}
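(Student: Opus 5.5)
The plan is a second-order Taylor expansion of $f(\lambda) := J_T(\pi_\theta, M_\lambda)$ around $\widehat\lambda$, applied pointwise in $\epsilon = \epsilon^{(i)}$, followed by a conditional expectation given $\mathcal D_{\mathrm{off}}$. Throughout, $\theta$ and $\mathcal D_{\mathrm{off}}$ (hence $\widehat\lambda$) are held fixed. Since $f$ is $C^2$, the integral form of Taylor's theorem along the segment $s\mapsto \widehat\lambda+s\epsilon$ gives
\begin{align*}
f(\widehat\lambda+\epsilon) = f(\widehat\lambda) + \nabla f(\widehat\lambda)^\top \epsilon + \int_0^1 (1-s)\, \epsilon^\top \nabla^2 f(\widehat\lambda+s\epsilon)\, \epsilon \, ds .
\end{align*}
We add and subtract $\nabla^2 f(\widehat\lambda)$ inside the integral. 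Using $\int_0^1(1-s)\,ds = 1/2$, this rewrites the right-hand side as $f(\widehat\lambda)+\nabla f(\widehat\lambda)^\top\epsilon+\tfrac12\epsilon^\top\nabla^2 f(\widehat\lambda)\epsilon + R(\epsilon)$, where
\begin{align*}
R(\epsilon)=\int_0^1(1-s)\,\epsilon^\top\big(\nabla^2 f(\widehat\lambda+s\epsilon)-\nabla^2 f(\widehat\lambda)\big)\epsilon\,ds .
\end{align*}

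Next we bound the remainder deterministically. By the operator-norm Lipschitz assumption, $|\epsilon^\top(\nabla^2 f(\widehat\lambda+s\epsilon)-\nabla^2 f(\widehat\lambda))\epsilon| \le L s\|\epsilon\|_2^3$. Since $\int_0^1 (1-s)s\,ds = 1/6$, it follows that $|R(\epsilon)|\le \tfrac{L}{6}\|\epsilon\|_2^3$. This explains the constant $L/6$ in \eqrefff{eqn:curvature-correction}.

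We then take conditional expectations given $\mathcal D_{\mathrm{off}}$. The linear term vanishes because $\E[\epsilon\mid\mathcal D_{\mathrm{off}}]=0$ and $\nabla f(\widehat\lambda)$ is $\mathcal D_{\mathrm{off}}$-measurable. For the quadratic term, the trace identity $\epsilon^\top A\epsilon=\operatorname{Tr}(A\epsilon\epsilon^\top)$ together with linearity gives $\tfrac12\operatorname{Tr}(\nabla^2 f(\widehat\lambda)\Sigma)$. Finally, $|\E[R(\epsilon)\mid\mathcal D_{\mathrm{off}}]|\le \E[|R(\epsilon)|\mid\mathcal D_{\mathrm{off}}]\le \tfrac L6\E[\|\epsilon\|_2^3\mid\mathcal D_{\mathrm{off}}]$, which is the stated bound.

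The only step needing care is justifying that all these conditional expectations exist, so that linearity of expectation may be applied. The pointwise expansion shows that $f(\widehat\lambda+\epsilon)$ equals a polynomial of degree at most two in $\epsilon$ plus a term bounded by $\tfrac L6\|\epsilon\|_2^3$. The hypotheses give finite conditional moments of order one and two (the second via $\Sigma$) and of order three. Hence every term is conditionally integrable, and in particular so is $f(\widehat\lambda+\epsilon)$. No other obstacle is expected; the argument uses nothing specific to bandits or MDPs.
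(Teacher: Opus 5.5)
Your proposal is correct and follows essentially the same route as the paper. The paper defines the same second-order remainder, bounds it by $\tfrac{L}{6}\|\epsilon^{(i)}\|_2^3$ by citing Lemma 1 of Nesterov and Polyak (2006), and then takes conditional expectations exactly as you do; your integral-remainder computation with $\int_0^1(1-s)s\,ds = 1/6$ simply proves that cited lemma by hand.
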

Proposition~\ref{prop:regularized1} shows that UA selection is locally a curvature-adjusted version of Plug-In selection. In directions of simulator uncertainty along which a candidate's value is locally concave, the correction in \eqrefff{eqn:curvature-correction} is negative: performance lost under unfavorable perturbations exceeds performance gained under equally sized favorable perturbations. UA selection therefore lowers the ranking of candidates whose high Plug-In score is brittle to uncertainty in the fitted simulator.
UA selection favors candidates according to their value curvature in the uncertain directions of the simulator. This is why we view UA selection as a form of regularization rather than as a rule that uniformly penalizes exploration, optimism, or any particular hyperparameter value. The proof is in Appendix \ref{app:regularization}.

\section{Deep RL experiments}
\label{sec:experiments}

We apply UA selection to continuous control in MuJoCo \citep{todorov2012mujoco}, tuning the reward-shaping weights of PPO \citep{schulman2017proximal, raffin2021sb3} in the setup of \citet{morse2025}, on simulators whose physics parameters are identified from offline data.

\textbf{Tasks and online learners.}
The tasks are \texttt{Hopper} and \texttt{Walker2d} from Gymnasium \citep{towers2024gymnasium}, in which a planar robot with three (\texttt{Hopper}) or six (\texttt{Walker2d}) actuated joints must move forward without falling, with horizon $H = 250$.
Reward shaping trains the learner on a hand-designed reward with several terms, and the weights on those terms strongly affect the trained policy \citep{morse2025}.
Here the tuning parameter $\theta = (\theta_{\textnormal{surv}}, \theta_{\textnormal{fwd}}, \theta_{\textnormal{ctrl}})$ collects the weights of three handcrafted heuristic reward terms, a survival bonus, a forward-progress term and a control-cost penalty, with $\theta_{\textnormal{surv}} = 1$ fixed (Appendix~\ref{app:deeprl-reward}).
A larger ratio $\theta_{\textnormal{fwd}} / \theta_{\textnormal{ctrl}}$ pays more for moving forward and charges less for actuation, so the trained policy runs harder; we call such a $\theta$ more aggressive.
The candidate algorithm $\pi_\theta$ is PPO trained on the reward with weights $\theta$, with every other setting held fixed (Appendix~\ref{app:deeprl-learner}), and the candidate set $\Theta$ is a grid of $25$ weight vectors on \texttt{Hopper} and $30$ on \texttt{Walker2d} (Appendix~\ref{app:deeprl-grid}).

Following \citet{morse2025},
the performance is measured by a success score that is zero if the average squared actuation over the episode exceeds a bound $c_{\max}$, and otherwise rises linearly from $0$ to $1$ as the distance traveled increases from a threshold $x_{\textnormal{lo}}$ to a threshold $x_{\textnormal{hi}}$ (Appendix~\ref{app:deeprl-reward}). We evaluate the performance of the trained policy at $T = 3 \times 10^5$ environment steps.
Every reported value is a Monte Carlo estimate over $50$ evaluation episodes, which we call the success score.

\textbf{Simulator and offline data.}
The simulator family is $\{M_\lambda\}$, where $M_\lambda$ is the MuJoCo model of the robot with $\lambda = (\lambda_{\textnormal{mass}}, \lambda_{\textnormal{fric}}, \lambda_{\textnormal{damp}}) \in \mathbb{R}^3$ multiplying the body masses, the friction coefficients and the joint damping of a baseline model.
Estimating $\lambda$ from logged data is standard practice in robotics, known as system identification \citep{ljung1999system, tiboni2023dropo}.
The true environment is $M^* = M_{\lambda^*}$ with $\lambda^* = (1.5, 0.7, 1.4)$, a robot that is heavier, slipperier and more damped than the baseline model; the practitioner typically knows the model but not $\lambda^*$. We add Gaussian noise to the observations, in the offline data, in every simulator and on the true environment, so that identifying $\lambda$ from $\mathcal{D}_{\textnormal{off}}$ is a nontrivial task (Appendix~\ref{app:deeprl-plant}).

\begin{table*}[t]
\vspace{-5mm}
\centering
\caption{Success score of the weight vector selected by the Plug-In and UA rules on the true environment, mean $\pm$ standard error and $10$th percentile over $300$ paired replications; higher is better.
The better of the two rules is in bold, and $^*$ marks a significant difference.
\\}
\label{tab:deeprl-simex}
\small
\resizebox{\textwidth}{!}{%
\begin{tabular}{@{}lcccc@{\hspace{1.5em}}cccc@{}}
\toprule
& \multicolumn{4}{c}{\texttt{Hopper}, $J_T(\pi_{\theta_T^*}, M^*) = 0.991$}
& \multicolumn{4}{c}{\texttt{Walker2d}, $J_T(\pi_{\theta_T^*}, M^*) = 0.689$} \\
\cmidrule(lr){2-5}\cmidrule(l){6-9}
$T_{\textnormal{off}}$
& Plug-In & UA & $q_{10}$ Plug-In & $q_{10}$ UA
& Plug-In & UA & $q_{10}$ Plug-In & $q_{10}$ UA \\
\midrule
$20$
& $0.933 \pm 0.007$ & $\mathbf{0.956 \pm 0.005}^{*}$ & $0.764$ & $\mathbf{0.930}$
& $0.585 \pm 0.006$ & $\mathbf{0.594 \pm 0.006}~~$ & $0.456$ & $0.456$ \\
$80$
& $0.965 \pm 0.003$ & $\mathbf{0.971 \pm 0.003}~~$ & $0.930$ & $0.930$
& $0.594 \pm 0.005$ & $\mathbf{0.616 \pm 0.005}^{*}$ & $0.456$ & $\mathbf{0.517}$ \\
$320$
& $0.976 \pm 0.002$ & $\mathbf{0.981 \pm 0.002}~~$ & $0.930$ & $\mathbf{0.936}$
& $0.590 \pm 0.005$ & $\mathbf{0.610 \pm 0.005}^{*}$ & $0.456$ & $\mathbf{0.527}$ \\
\bottomrule
\end{tabular}%
}
\vspace{-10pt}
\end{table*}

The offline dataset $\mathcal{D}_{\textnormal{off}}$ consists of $T_{\textnormal{off}} \in \{20, 80, 320\}$ episodes collected on $M^*$ by a behavior policy $\pi_b$ that draws actions uniformly at random. For each offline episode $t = 1, \dots, T_{\textnormal{off}}$ we fit $\widehat \lambda_t$ by nonlinear least squares over its transitions \citep{tiboni2023dropo} combined with a bias-correction method called SIMEX \citep{cook1994simex, carroll2006measurement}.
The Plug-In simulator is $\widehat M = M_{\widehat\lambda}$ with $\widehat\lambda = T_{\textnormal{off}}^{-1} \sum_{t=1}^{T_{\textnormal{off}}} \widehat\lambda_t$, the average of the per-episode estimates.
The UA ensemble is the empirical bootstrap over episodes.
Member $i = 1, \dots, n = 12$ is $\widehat M^{(i)} = M_{\widehat\lambda^{(i)}}$ with $\widehat\lambda^{(i)} = T_{\textnormal{off}}^{-1} \sum_{t \in \mathcal{I}^{(i)}} \widehat\lambda_t$, where $\mathcal{I}^{(i)}$ is a multiset of $T_{\textnormal{off}}$ episodes.
We give both rules the same budget of $K = 12$ training runs per $\theta$.
We report the success score of the selected $\widehat\theta$ on the true environment, averaged over $300$ replications, each of which draws a fresh $\mathcal{D}_{\textnormal{off}}$ that both rules then see (Appendix~\ref{app:deeprl-protocol}).

\textbf{Results.}
Table~\ref{tab:deeprl-simex} reports the six cells, two tasks by three offline dataset sizes.
UA selection has the higher mean success score in every cell, by $0.005$ to $0.023$, significantly in three.
Its $10$th percentile (last two columns) is higher in four cells and equal in the other two.
Figure~\ref{fig:cond-picks-deeprl} in Appendix~\ref{app:deeprl-moves} validates the regularization effect of UA (Section~\ref{sec:regularization}).

\section{Discussion}

Our setting isolates simulator-estimation uncertainty by considering offline data and online deployment from the same environment. This is deliberately complementary to work on distribution shift, where techniques such as domain randomization can improve robustness by accounting for variation between training and deployment conditions. Our results show that simulator ensembles can improve algorithm selection even in the absence of such shift: averaging across plausible simulators mitigates uncertainty arising solely from limited offline data. An important direction for future work is to study how Uncertainty-Aware selection and distribution-shift methods can be combined when the deployment environment may differ systematically from the environment that generated the offline data. Other directions include extending the theoretical analysis beyond bandits to MDPs and a broader range of online algorithms, and studying richer algorithm-design choices such as architectures, state representations, and reward specifications. Finally, practical progress will require ensemble-construction methods that quantify uncertainty reliably in complex simulators.

\bibliographystyle{plainnat}
\bibliography{main}

@inproceedings{agrawal2013further,
  title={Further optimal regret bounds for {Thompson} sampling},
  author={Agrawal, Shipra and Goyal, Navin},
  booktitle={Proceedings of the Sixteenth International Conference on Artificial Intelligence and Statistics (AISTATS)},
  series={Proceedings of Machine Learning Research},
  volume={31},
  pages={99--107},
  year={2013}
}

@article{fan2022typical,
  title={The typical behavior of bandit algorithms},
  author={Fan, Lin and Glynn, Peter W.},
  journal={arXiv preprint arXiv:2210.05660},
  year={2022}
}

@article{osband2016deep,
  title={Deep exploration via bootstrapped DQN},
  author={Osband, Ian and Blundell, Charles and Pritzel, Alexander and Van Roy, Benjamin},
  journal={Advances in neural information processing systems},
  volume={29},
  year={2016}
}

@book{efron2000introduction,
  title={An Introduction to the Bootstrap},
  author={Efron, Bradley and Tibshirani, Robert J.},
  series={Monographs on Statistics and Applied Probability},
  volume={57},
  publisher={Chapman and Hall/CRC},
  address={Boca Raton, FL},
  year={1994}
}

@inproceedings{eysenbach2021maximum,
  title={Maximum entropy {RL} (provably) solves some robust {RL} problems},
  author={Eysenbach, Benjamin and Levine, Sergey},
  booktitle={International Conference on Learning Representations (ICLR)},
  year={2022}
}

@article{nilim2005robust,
  title={Robust control of Markov decision processes with uncertain transition matrices},
  author={Nilim, Arnab and El Ghaoui, Laurent},
  journal={Operations Research},
  volume={53},
  number={5},
  pages={780--798},
  year={2005},
  publisher={INFORMS}
}

@article{iyengar2005robust,
  title={Robust dynamic programming},
  author={Iyengar, Garud N},
  journal={Mathematics of Operations Research},
  volume={30},
  number={2},
  pages={257--280},
  year={2005},
  publisher={INFORMS}
}

@article{daniel2018tutorial,
  title={A tutorial on {Thompson} sampling},
  author={Russo, Daniel J. and Van Roy, Benjamin and Kazerouni, Abbas and Osband, Ian and Wen, Zheng},
  journal={Foundations and Trends in Machine Learning},
  volume={11},
  number={1},
  pages={1--96},
  year={2018},
  publisher={Now Publishers}
}

@article{gordon1941values,
  title={Values of Mills' ratio of area to bounding ordinate and of the normal probability integral for large values of the argument},
  author={Gordon, Robert D.},
  journal={The Annals of Mathematical Statistics},
  volume={12},
  number={3},
  pages={364--366},
  year={1941}
}

@article{fan2025fragility,
  title={The fragility of optimized bandit algorithms},
  author={Fan, Lin and Glynn, Peter W},
  journal={Operations Research},
  volume={73},
  number={6},
  pages={3173--3198},
  year={2025},
  publisher={INFORMS}
}

@article{muratore2022robot,
  title={Robot learning from randomized simulations: A review},
  author={Muratore, Fabio and Ramos, Fabio and Turk, Greg and Yu, Wenhao and Gienger, Michael and Peters, Jan},
  journal={Frontiers in Robotics and AI},
  volume={9},
  pages={799893},
  year={2022},
  publisher={Frontiers Media SA}
}

@article{aastrom1971system,
  title={System identification—a survey},
  author={{\AA}str{\"o}m, Karl Johan and Eykhoff, Peter},
  journal={Automatica},
  volume={7},
  number={2},
  pages={123--162},
  year={1971},
  publisher={Elsevier}
}

@article{li2025uncertainty,
  title={Uncertainty-aware robotic world model makes offline model-based reinforcement learning work on real robots},
  author={Li, Chenhao and Krause, Andreas and Hutter, Marco},
  journal={arXiv preprint arXiv:2504.16680},
  year={2025}
}

@article{chua2018deep,
  title={Deep reinforcement learning in a handful of trials using probabilistic dynamics models},
  author={Chua, Kurtland and Calandra, Roberto and McAllister, Rowan and Levine, Sergey},
  journal={Advances in neural information processing systems},
  volume={31},
  year={2018}
}

@article{tang2022towards,
  title={Towards Data-Driven Offline Simulations for Online Reinforcement Learning},
  author={Tang, Shengpu and Frujeri, Felipe Vieira and Misra, Dipendra and Lamb, Alex and Langford, John and Mineiro, Paul and Kochman, Sebastian},
  journal={arXiv preprint arXiv:2211.07614},
  year={2022}
}

@inproceedings{mandel2016offline,
  title={Offline evaluation of online reinforcement learning algorithms},
  author={Mandel, Travis and Liu, Yun-En and Brunskill, Emma and Popovi{\'c}, Zoran},
  booktitle={Proceedings of the AAAI Conference on Artificial Intelligence},
  volume={30},
  year={2016}
}

@article{dulac2019challenges,
  title={Challenges of real-world reinforcement learning},
  author={Dulac-Arnold, Gabriel and Mankowitz, Daniel and Hester, Todd},
  journal={arXiv preprint arXiv:1904.12901},
  year={2019}
}

@inproceedings{tobin2017domain,
  title={Domain randomization for transferring deep neural networks from simulation to the real world},
  author={Tobin, Josh and Fong, Rachel and Ray, Alex and Schneider, Jonas and Zaremba, Wojciech and Abbeel, Pieter},
  booktitle={IEEE/RSJ International Conference on Intelligent Robots and Systems (IROS)},
  pages={23--30},
  year={2017},
  organization={IEEE}
}

@article{tiboni2023dropo,
  title={{DROPO}: Sim-to-real transfer with offline domain randomization},
  author={Tiboni, Gabriele and Arndt, Karol and Kyrki, Ville},
  journal={Robotics and Autonomous Systems},
  volume={166},
  pages={104432},
  year={2023},
  publisher={Elsevier}
}

@article{auer2002,
  title={Finite-time analysis of the multiarmed bandit problem},
  author={Auer, Peter and Cesa-Bianchi, Nicol{\`o} and Fischer, Paul},
  journal={Machine Learning},
  volume={47},
  number={2--3},
  pages={235--256},
  year={2002},
  publisher={Springer}
}

@inproceedings{wunder2010classes,
  title={Classes of multiagent {Q}-learning dynamics with $\epsilon$-greedy exploration},
  author={Wunder, Michael and Littman, Michael L. and Babes, Monica},
  booktitle={Proceedings of the 27th International Conference on Machine Learning (ICML)},
  pages={1167--1174},
  year={2010}
}

@book{puterman2014markov,
  title={Markov decision processes: discrete stochastic dynamic programming},
  author={Puterman, Martin L},
  year={2014},
  publisher={John Wiley \& Sons}
}

@book{lattimore2020,
  title={Bandit Algorithms},
  author={Lattimore, Tor and Szepesv{\'a}ri, Csaba},
  publisher={Cambridge University Press},
  year={2020}
}

@article{morse2025,
  title={Automatic Reward Shaping from Multi-Objective Human Heuristics},
  author={Xie, Yuqing and Chen, Jiayu and Tang, Wenhao and Zhang, Ya and Yu, Chao and Wang, Yu},
  journal={arXiv preprint arXiv:2512.15120},
  year={2025}
}

@article{nesterov2006cubic,
  title={Cubic regularization of Newton method and its global performance},
  author={Nesterov, Yurii and Polyak, Boris T},
  journal={Mathematical programming},
  volume={108},
  number={1},
  pages={177--205},
  year={2006},
  publisher={Springer}
}

@book{ljung1999system,
  title={System Identification: Theory for the User},
  author={Ljung, Lennart},
  edition={2nd},
  publisher={Prentice Hall},
  year={1999}
}

@article{abbasi2011improved,
  title={Improved algorithms for linear stochastic bandits},
  author={Abbasi-Yadkori, Yasin and P{\'a}l, D{\'a}vid and Szepesv{\'a}ri, Csaba},
  journal={Advances in neural information processing systems},
  volume={24},
  year={2011}
}

@article{chen2022reinforcement,
  title={Reinforcement learning for selective key applications in power systems: Recent advances and future challenges},
  author={Chen, Xin and Qu, Guannan and Tang, Yujie and Low, Steven and Li, Na},
  journal={IEEE Transactions on Smart Grid},
  volume={13},
  number={4},
  pages={2935--2958},
  year={2022},
  publisher={IEEE}
}

@article{zhang2018review,
  title={Review on the research and practice of deep learning and reinforcement learning in smart grids},
  author={Zhang, Dongxia and Han, Xiaoqing and Deng, Chunyu},
  journal={CSEE Journal of Power and Energy Systems},
  volume={4},
  number={3},
  pages={362--370},
  year={2018},
  publisher={CSEE}
}

@article{figueroa2021adaptive,
  title={Adaptive learning algorithms to optimize mobile applications for behavioral health: guidelines for design decisions},
  author={Figueroa, Caroline A. and Aguilera, Adrian and Chakraborty, Bibhas and Modiri, Arghavan and Aggarwal, Jai and Deliu, Nina and Sarkar, Urmimala and Williams, Joseph Jay and Lyles, Courtney R.},
  journal={Journal of the American Medical Informatics Association},
  volume={28},
  number={6},
  pages={1225--1234},
  year={2021}
}

@inproceedings{henderson2018deep,
  title={Deep reinforcement learning that matters},
  author={Henderson, Peter and Islam, Riashat and Bachman, Philip and Pineau, Joelle and Precup, Doina and Meger, David},
  booktitle={Proceedings of the AAAI conference on artificial intelligence},
  volume={32},
  year={2018}
}

@inproceedings{peng2018sim,
  title={Sim-to-real transfer of robotic control with dynamics randomization},
  author={Peng, Xue Bin and Andrychowicz, Marcin and Zaremba, Wojciech and Abbeel, Pieter},
  booktitle={IEEE International Conference on Robotics and Automation (ICRA)},
  pages={3803--3810},
  year={2018},
  organization={IEEE}
}

@inproceedings{ramos2019bayessim,
  title={{BayesSim}: Adaptive domain randomization via probabilistic inference for robotics simulators},
  author={Ramos, Fabio and Possas, Rafael Carvalhaes and Fox, Dieter},
  booktitle={Robotics: Science and Systems (RSS)},
  year={2019}
}

@inproceedings{chebotar2019closing,
  title={Closing the sim-to-real loop: Adapting simulation randomization with real world experience},
  author={Chebotar, Yevgen and Handa, Ankur and Makoviychuk, Viktor and Macklin, Miles and Issac, Jan and Ratliff, Nathan and Fox, Dieter},
  booktitle={IEEE International Conference on Robotics and Automation (ICRA)},
  pages={8973--8979},
  year={2019},
  organization={IEEE}
}

@inproceedings{todorov2012mujoco,
  title={{MuJoCo}: A physics engine for model-based control},
  author={Todorov, Emanuel and Erez, Tom and Tassa, Yuval},
  booktitle={IEEE/RSJ International Conference on Intelligent Robots and Systems (IROS)},
  pages={5026--5033},
  year={2012},
  organization={IEEE}
}

@inproceedings{towers2024gymnasium,
  title={Gymnasium: A standard interface for reinforcement learning environments},
  author={Towers, Mark and Kwiatkowski, Ariel and Terry, Jordan and Balis, John U. and De Cola, Gianluca and Deleu, Tristan and Goul{\~a}o, Manuel and Kallinteris, Andreas and Krimmel, Markus and KG, Arjun and Perez-Vicente, Rodrigo and Pierr{\'e}, Andrea and Schulhoff, Sander and Tai, Jun Jet and Tan, Hannah and Younis, Omar G.},
  booktitle={Advances in Neural Information Processing Systems 38 (NeurIPS 2025), Datasets and Benchmarks Track},
  pages={163114--163129},
  year={2025}
}

@article{raffin2021sb3,
  title={Stable-Baselines3: Reliable reinforcement learning implementations},
  author={Raffin, Antonin and Hill, Ashley and Gleave, Adam and Kanervisto, Anssi and Ernestus, Maximilian and Dormann, Noah},
  journal={Journal of Machine Learning Research},
  volume={22},
  number={268},
  pages={1--8},
  year={2021}
}

@article{cook1994simex,
  title={Simulation-extrapolation estimation in parametric measurement error models},
  author={Cook, John R. and Stefanski, Leonard A.},
  journal={Journal of the American Statistical Association},
  volume={89},
  number={428},
  pages={1314--1328},
  year={1994}
}

@book{carroll2006measurement,
  title={Measurement Error in Nonlinear Models: A Modern Perspective},
  author={Carroll, Raymond J. and Ruppert, David and Stefanski, Leonard A. and Crainiceanu, Ciprian M.},
  edition={2nd},
  publisher={Chapman and Hall/CRC},
  year={2006}
}

@article{schulman2017proximal,
  title={Proximal policy optimization algorithms},
  author={Schulman, John and Wolski, Filip and Dhariwal, Prafulla and Radford, Alec and Klimov, Oleg},
  journal={arXiv preprint arXiv:1707.06347},
  year={2017}
}

@article{m2023model,
  title={Model-based reinforcement learning: A survey},
  author={Moerland, Thomas M. and Broekens, Joost and Plaat, Aske and Jonker, Catholijn M.},
  journal={Foundations and Trends in Machine Learning},
  volume={16},
  number={1},
  pages={1--118},
  year={2023},
  publisher={Now Publishers}
}

@article{kidambi2020morel,
  title={Morel: Model-based offline reinforcement learning},
  author={Kidambi, Rahul and Rajeswaran, Aravind and Netrapalli, Praneeth and Joachims, Thorsten},
  journal={Advances in neural information processing systems},
  volume={33},
  pages={21810--21823},
  year={2020}
}

@article{wang2022no,
  title={No more pesky hyperparameters: Offline hyperparameter tuning for {RL}},
  author={Wang, Han and Sakhadeo, Archit and White, Adam and Bell, James and Liu, Vincent and Zhao, Xutong and Liu, Puer and Kozuno, Tadashi and Fyshe, Alona and White, Martha},
  journal={Transactions on Machine Learning Research},
  year={2022}
}

@article{yang2023foundation,
  title={Foundation models for decision making: Problems, methods, and opportunities},
  author={Yang, Sherry and Nachum, Ofir and Du, Yilun and Wei, Jason and Abbeel, Pieter and Schuurmans, Dale},
  journal={arXiv preprint arXiv:2303.04129},
  year={2023}
}

@inproceedings{janner2019trust,
  title={When to trust your model: Model-based policy optimization},
  author={Janner, Michael and Fu, Justin and Zhang, Marvin and Levine, Sergey},
  booktitle={Advances in Neural Information Processing Systems},
  volume={32},
  year={2019}
}

@inproceedings{yu2020mopo,
  title={{MOPO}: Model-based offline policy optimization},
  author={Yu, Tianhe and Thomas, Garrett and Yu, Lantao and Ermon, Stefano and Zou, James Y. and Levine, Sergey and Finn, Chelsea and Ma, Tengyu},
  booktitle={Advances in Neural Information Processing Systems},
  volume={33},
  pages={14129--14142},
  year={2020}
}

@book{van2000asymptotic,
  title={Asymptotic Statistics},
  author={van der Vaart, Aad W.},
  series={Cambridge Series in Statistical and Probabilistic Mathematics},
  volume={3},
  year={1998},
  publisher={Cambridge University Press}
}

@article{konyushova2021active,
  title={Active offline policy selection},
  author={Konyushova, Ksenia and Chen, Yutian and Paine, Thomas and Gulcehre, Caglar and Paduraru, Cosmin and Mankowitz, Daniel J and Denil, Misha and de Freitas, Nando},
  journal={Advances in Neural Information Processing Systems},
  volume={34},
  pages={24631--24644},
  year={2021}
}

@inproceedings{yang2022offline,
  title={Offline policy selection under uncertainty},
  author={Yang, Mengjiao and Dai, Bo and Nachum, Ofir and Tucker, George and Schuurmans, Dale},
  booktitle={International Conference on Artificial Intelligence and Statistics},
  pages={4376--4396},
  year={2022},
  organization={PMLR}
}

@inproceedings{zitovsky2023revisiting,
  title={Revisiting {B}ellman errors for offline model selection},
  author={Zitovsky, Joshua P and De Marchi, Daniel and Agarwal, Rishabh and Kosorok, Michael Rene},
  booktitle={International conference on machine learning},
  pages={43369--43406},
  year={2023},
  organization={PMLR}
}

@article{trella2022designing,
  title={Designing reinforcement learning algorithms for digital interventions: pre-implementation guidelines},
  author={Trella, Anna L and Zhang, Kelly W and Nahum-Shani, Inbal and Shetty, Vivek and Doshi-Velez, Finale and Murphy, Susan A},
  journal={Algorithms},
  volume={15},
  number={8},
  pages={255},
  year={2022},
  publisher={MDPI}
}

@inproceedings{ghosh2024rebandit,
  title={{reBandit}: Random effects based online {RL} algorithm for reducing cannabis use},
  author={Ghosh, Susobhan and Guo, Yongyi and Hung, Pei-Yao and Coughlin, Lara and Bonar, Erin and Nahum-Shani, Inbal and Walton, Maureen and Murphy, Susan},
  booktitle={Proceedings of the Thirty-Third International Joint Conference on Artificial Intelligence (IJCAI-24)},
  pages={7278--7286},
  year={2024},
  doi={10.24963/ijcai.2024/805}
}

@article{bjornsson2020digital,
  title={Digital twins to personalize medicine},
  author={Bj{\"o}rnsson, Bergthor and Borrebaeck, Carl and Elander, Nils and Gasslander, Thomas and Gawel, Danuta R. and Gustafsson, Mika and J{\"o}rnsten, Rebecka and Lee, Eun Jung and Li, Xinxiu and Lilja, Sandra and Mart{\'\i}nez-Enguita, David and Matussek, Andreas and Sandstr{\"o}m, Per and Sch{\"a}fer, Samuel and Stenmarker, Margaretha and Sun, Xiao-Feng and Sysoev, Oleg and Zhang, Huan and Benson, Mikael},
  journal={Genome Medicine},
  volume={12},
  number={1},
  pages={4},
  year={2020},
  doi={10.1186/s13073-019-0701-3}
}

@article{gazi2025digital,
  author = {Gazi, Asim H. and Gao, Daiqi and Ghosh, Susobhan and Xu, Ziping and Trella, Anna L. and Klasnja, Predrag and Murphy, Susan A.},
  title = {Digital Twins for Just-in-Time Adaptive Interventions ({JITAIs}): Framework for Optimizing and Continually Improving {JITAIs}},
  journal = {Journal of Medical Internet Research},
  volume = {28},
  pages = {e72830},
  year = {2026},
  doi = {10.2196/72830}
}

@article{xu2026diffusion,
  title={A Diffusion-Model Subpopulation Digital Twin for Mobile Health Deployment: A Case Study on the {HeartSteps} Intervention},
  author={Xu, Ziping and Chang, Yuyi and Ni, Chenshun and Sugavanam, Nithin and Gazi, Asim H. and Klasnja, Pedja and Ertin, Emre and Murphy, Susan A.},
  journal={arXiv preprint arXiv:2607.21403},
  year={2026},
  eprint={2607.21403},
  archivePrefix={arXiv},
  url={https://arxiv.org/abs/2607.21403}
}

@inproceedings{rajeswaran2017epopt,
  title={{EPOpt}: Learning Robust Neural Network Policies Using Model Ensembles},
  author={Rajeswaran, Aravind and Ghotra, Sarvjeet and Ravindran, Balaraman and Levine, Sergey},
  booktitle={International Conference on Learning Representations (ICLR)},
  year={2017}
}

@inproceedings{muratore2022neural,
  title={Neural Posterior Domain Randomization},
  author={Muratore, Fabio and Gruner, Theo and Wiese, Florian and Belousov, Boris and Gienger, Michael and Peters, Jan},
  booktitle={Conference on Robot Learning (CoRL)},
  series={Proceedings of Machine Learning Research},
  volume={164},
  pages={1532--1542},
  year={2022}
}

@inproceedings{precup2000eligibility,
  title={Eligibility traces for off-policy policy evaluation},
  author={Precup, Doina and Sutton, Richard S. and Singh, Satinder},
  booktitle={International Conference on Machine Learning (ICML)},
  pages={759--766},
  year={2000}
}

@inproceedings{jiang2016doubly,
  title={Doubly robust off-policy value evaluation for reinforcement learning},
  author={Jiang, Nan and Li, Lihong},
  booktitle={International Conference on Machine Learning},
  pages={652--661},
  year={2016},
  organization={PMLR}
}

@article{uehara2022review,
  title={A review of off-policy evaluation in reinforcement learning},
  author={Uehara, Masatoshi and Shi, Chengchun and Kallus, Nathan},
  journal={Statistical Science},
  volume={41},
  number={3},
  pages={561--581},
  year={2026},
  doi={10.1214/25-STS985}
}

@article{trella2024oralytics,
  title={Oralytics Reinforcement Learning Algorithm},
  author={Trella, Anna L. and Zhang, Kelly W. and Carpenter, Stephanie M. and Elashoff, David and Greer, Zara M. and Nahum-Shani, Inbal and Ruenger, Dennis and Shetty, Vivek and Murphy, Susan A.},
  journal={arXiv preprint arXiv:2406.13127},
  year={2024}
}

@article{muratore2021bayrn,
  title={Data-efficient Domain Randomization with {B}ayesian Optimization},
  author={Muratore, Fabio and Eilers, Christian and Gienger, Michael and Peters, Jan},
  journal={IEEE Robotics and Automation Letters},
  volume={6},
  number={2},
  pages={911--918},
  year={2021}
}

@inproceedings{chen2022understanding,
  title={Understanding Domain Randomization for Sim-to-real Transfer},
  author={Chen, Xiaoyu and Hu, Jiachen and Jin, Chi and Li, Lihong and Wang, Liwei},
  booktitle={International Conference on Learning Representations},
  year={2022}
}

@inproceedings{as2025spidr,
  title={{SPiDR}: A Simple Approach for Zero-Shot Safety in Sim-to-Real Transfer},
  author={As, Yarden and Qu, Chengrui and Unger, Benjamin and Kang, Dongho and van der Hart, Max and Shi, Laixi and Coros, Stelian and Wierman, Adam and Krause, Andreas},
  booktitle={Advances in Neural Information Processing Systems 38 (NeurIPS 2025)},
  pages={100933--100975},
  year={2025}
}

@inproceedings{mehta2020active,
  title={Active Domain Randomization},
  author={Mehta, Bhairav and Diaz, Manfred and Golemo, Florian and Pal, Christopher J. and Paull, Liam},
  booktitle={Conference on Robot Learning},
  pages={1162--1176},
  year={2020}
}

@inproceedings{tiboni2024doraemon,
  title={Domain Randomization via Entropy Maximization},
  author={Tiboni, Gabriele and Klink, Pascal and Peters, Jan and Tommasi, Tatiana and D'Eramo, Carlo and Chalvatzaki, Georgia},
  booktitle={International Conference on Learning Representations},
  year={2024}
}

@article{smirnova2019distributionally,
  title={Distributionally Robust Reinforcement Learning},
  author={Smirnova, Elena and Dohmatob, Elvis and Mary, J{\'e}r{\'e}mie},
  journal={arXiv preprint arXiv:1902.08708},
  year={2019}
}

@article{duchi2021learning,
  title={Learning Models with Uniform Performance via Distributionally Robust Optimization},
  author={Duchi, John C. and Namkoong, Hongseok},
  journal={The Annals of Statistics},
  volume={49},
  number={3},
  pages={1378--1406},
  year={2021}
}

@inproceedings{fujinami2025domain,
  title={Domain Randomization is Sample Efficient for Linear Quadratic Control},
  author={Fujinami, Tesshu and Lee, Bruce D. and Matni, Nikolai and Pappas, George J.},
  booktitle={Proceedings of the 7th Annual Learning for Dynamics and Control Conference (L4DC)},
  series={Proceedings of Machine Learning Research},
  volume={283},
  pages={907--919},
  year={2025}
}

@article{queeney2024optimal,
  title={Optimal Transport Perturbations for Safe Reinforcement Learning with Robustness Guarantees},
  author={Queeney, James and Ozcan, Erhan Can and Paschalidis, Ioannis Ch. and Cassandras, Christos G.},
  journal={Transactions on Machine Learning Research},
  year={2024}
}

@inproceedings{barsce2019hierarchical,
  title={A Hierarchical Two-tier Approach to Hyper-parameter Optimization in Reinforcement Learning},
  author={Barsce, Juan Cruz and Palombarini, Jorge A. and Mart{\'\i}nez, Ernesto},
  booktitle={Anales del Simposio Argentino de Inteligencia Artificial (ASAI), 48 Jornadas Argentinas de Inform{\'a}tica (JAIIO)},
  pages={32--38},
  year={2019}
}

@inproceedings{bellemare2016unifying,
  title={Unifying Count-Based Exploration and Intrinsic Motivation},
  author={Bellemare, Marc G. and Srinivasan, Sriram and Ostrovski, Georg and Schaul, Tom and Saxton, David and Munos, R{\'e}mi},
  booktitle={Advances in Neural Information Processing Systems},
  volume={29},
  year={2016}
}

@article{kober2013reinforcement,
  title={Reinforcement learning in robotics: A survey},
  author={Kober, Jens and Bagnell, J. Andrew and Peters, Jan},
  journal={The International Journal of Robotics Research},
  volume={32},
  number={11},
  pages={1238--1274},
  year={2013}
}

@article{ibarz2021how,
  title={How to train your robot with deep reinforcement learning: lessons we have learned},
  author={Ibarz, Julian and Tan, Jie and Finn, Chelsea and Kalakrishnan, Mrinal and Pastor, Peter and Levine, Sergey},
  journal={The International Journal of Robotics Research},
  volume={40},
  number={4--5},
  pages={698--721},
  year={2021}
}

\newpage

\appendix
\section{Related work}
\label{app:related}

This appendix expands the related-work paragraph of Section~\ref{sec:setup}.
We organize the literature by the four bodies of work a reader is most likely to connect with our method: the selection of online algorithms from offline data, the construction of simulators and their calibration through domain randomization, ensembles and the bootstrap as tools for uncertainty in RL, and robust or distributionally robust RL.
For each, we state what is shared with our setting and where our question differs.

\paragraph{Selection of online algorithms from offline data.}
The performance of online RL algorithms is known to be sensitive to hyperparameters and other design choices \citep{henderson2018deep,wang2022no,fan2025fragility}, which makes the choice of what to deploy a first-order concern.
\citet{mandel2016offline} formalize the offline algorithm selection problem: choose the best online algorithm from a fixed set using only a logged dataset.
The problem is related to offline policy evaluation \citep{precup2000eligibility,jiang2016doubly,uehara2022review} and offline policy selection \citep{konyushova2021active,yang2022offline}, also called offline model selection \citep{zitovsky2023revisiting}, which rank a fixed set of policies using an offline dataset.
The two problems coincide when the policy being selected is allowed to be non-Markovian, since an online algorithm is a policy over interaction histories.
Existing offline policy selection methods, however, assume Markovian candidates and rely on fitted value functions or Bellman errors, which do not extend to candidates whose actions depend on the whole history through adaptive exploration.
Hyperparameter optimization methods such as Bayesian optimization over the hyperparameter space \citep{barsce2019hierarchical} tune against returns observed online and therefore require repeated deployments, which is what the offline setting rules out.
Existing work on offline algorithm selection has predominantly taken the simulator approach: build a simulator from the offline data and evaluate each candidate algorithm in it.
Examples include queue-based replay of logged rewards \citep{mandel2016offline}, $k$-nearest-neighbor simulators that resample observed transitions close to the queried state--action pair \citep{wang2022no}, rejection-sampling simulators that correct for the mismatch between the logging policy and the candidate algorithm \citep{tang2022towards}, learned or foundation world models used as the evaluation environment \citep{m2023model,yang2023foundation}, and parametric models that encode domain knowledge in digital health applications \citep{trella2022designing,trella2024oralytics,ghosh2024rebandit}.
In each of these works the candidates are scored in a single simulator fitted to the data, which is the Plug-In rule we study.
The digital health works are a partial exception: following the guidelines of \citet{trella2022designing}, candidates are additionally evaluated across a small set of hand-designed environment variants, for example with different treatment-effect sizes, as a sensitivity analysis.
Those variants are chosen by the designer rather than drawn from the estimation uncertainty of the fitted model, and none of these works analyzes how the estimation error of the simulator affects which algorithm is selected.

\paragraph{Simulators from offline data and domain randomization.}
Using offline data to construct a simulator for decision-making is a common thread connecting system identification in control and robotics, which calibrates physical parameters from recorded trajectories \citep{aastrom1971system,ljung1999system}; digital twins in medicine, which model patient dynamics to evaluate treatment strategies \citep{bjornsson2020digital,gazi2025digital,xu2026diffusion}; and learned world models, which predict environment transitions to support planning and policy learning \citep{li2025uncertainty}.
In robotics, the mismatch between such a simulator and the physical system is addressed by domain randomization, which trains a policy on a distribution over simulator parameters so that it transfers to the real system \citep{tobin2017domain,peng2018sim}; see \citet{muratore2022robot} for a review.
The randomization distribution can be fixed by hand or learned: Bayesian system identification places a posterior over physical parameters and randomizes over it \citep{ramos2019bayessim}, and other work adapts the distribution using real rollouts \citep{chebotar2019closing}, Bayesian optimization on real-world returns \citep{muratore2021bayrn}, a neural posterior over simulator parameters \citep{muratore2022neural}, a likelihood fitted to offline trajectories \citep{tiboni2023dropo}, or objectives that seek informative or maximally wide parameter ranges \citep{mehta2020active,tiboni2024doraemon}.
Sampling a distribution of models is also used directly to obtain robustness: EPOpt trains on an ensemble of simulated models and optimizes a conditional value-at-risk of the return across them \citep{rajeswaran2017epopt}, and SPiDR incorporates the uncertainty about the sim-to-real gap into safety constraints \citep{as2025spidr}.
On the theoretical side, \citet{chen2022understanding} bound the sim-to-real gap of a policy trained under domain randomization and show that memory-dependent policies are needed to close it, and \citet{fujinami2025domain} show that for linear quadratic control, a controller that averages its objective over a posterior on the system parameters is as sample efficient as certainty equivalence, that is, as the Plug-In controller.
Two features separate our setting from this literature.
First, domain randomization models a mismatch between the simulator and the deployment environment, whereas in our setting there is no mismatch: offline data and deployment come from the same environment, and the spread of the ensemble reflects only the sampling variability of a finite dataset, which vanishes as the data grow.
Second, the object being made robust is a fixed policy, whereas we select among online learners that continue to learn after deployment, so what matters is their ranking by online regret rather than the return of any one policy.
The result of \citet{fujinami2025domain} is the closest in spirit, since it compares averaging over parameter uncertainty with certainty equivalence, but it concerns a fixed controller and asks about sample efficiency rather than about the selection among learners with asymmetric regret.

\paragraph{Ensembles and the bootstrap in RL.}
The bootstrap is a classical tool for approximating sampling uncertainty by resampling the observed data \citep{efron2000introduction}, and the parametric bootstrap we use for bandits is standard \citep[Chapter 23]{van2000asymptotic}.
In RL, bootstrapped and probabilistic ensembles quantify epistemic uncertainty to drive exploration \citep{osband2016deep} and to represent dynamics uncertainty in model-based planning and policy optimization \citep{chua2018deep,janner2019trust}.
In offline model-based RL, the disagreement across ensemble members is turned into a pessimistic penalty that keeps the learned policy away from states where the model is uncertain \citep{yu2020mopo,kidambi2020morel}, and recent work propagates epistemic uncertainty through multi-step predictions of autoregressive world models so that policies trained entirely offline can be deployed on physical robots \citep{li2025uncertainty}.
In all of these methods the ensemble is part of the algorithm being run and changes how that algorithm acts, either by adding an exploration bonus or by penalizing uncertain regions.
In our setting the ensemble sits outside the candidate algorithms, which are evaluated as black boxes, and it changes only which candidate is chosen; the candidates themselves are not modified.

\paragraph{Robust and distributionally robust RL.}
Robust MDPs account for model uncertainty by optimizing the worst-case value over an uncertainty set of transition kernels \citep{iyengar2005robust,nilim2005robust}.
Distributionally robust optimization extends this to uncertainty sets around an empirical distribution, in supervised learning \citep{duchi2021learning} and in RL, where it yields risk-averse exploration with lower-bound guarantees on state values \citep{smirnova2019distributionally} or safe policies with robustness guarantees under optimal-transport perturbations of the dynamics \citep{queeney2024optimal}.
Regularized RL objectives have also been shown to be equivalent to robustness against certain classes of environment perturbations \citep{eysenbach2021maximum}.
These methods share our motivation of accounting for model uncertainty, but they do so by changing the objective to a worst case over an uncertainty set, and again the object being optimized is a fixed policy.
We keep the objective as a plain average over the ensemble and show that this average already acts as a curvature regularization of the Plug-In criterion (Proposition~\ref{prop:regularized1}) and suffices for the regret gains of Section~\ref{sec:mab}; no pessimism is required.

\paragraph{Summary.}
The works discussed above primarily take a practical perspective, showing how modeling simulator uncertainty can improve policy robustness and the reliability of model-based learning.
In contrast, this paper approaches the problem from a theoretical perspective, demonstrating how a Plug-In simulator that ignores estimation uncertainty can lead to underexploration.
This failure is particularly consequential in online algorithm selection, where exploration determines how effectively the selected learner acquires experience during deployment.
We characterize the resulting regret and show how accounting for simulator uncertainty through ensemble-based selection mitigates this failure.

\section{Simulation details for the two-armed bandit figures}
\label{app:plane}

This appendix states the protocol behind Figure~\ref{fig:selection-regret}, Figure~\ref{fig:UCB-exploration} and its Thompson Sampling counterpart, Figure~\ref{fig:TS-exploration}.
The three figures share one measured objective surface and one pair of selection rules, given in Sections~\ref{app:plane-env}--\ref{app:plane-rules}; Sections~\ref{app:plane-fig2} and \ref{app:plane-fig3} say what each figure draws from them.

\subsection{Environments and candidate sets}
\label{app:plane-env}

For a margin $\Delta \in \mathbb R$ and a noise level $\sigma > 0$, let $M_{\Delta, \sigma}$ denote the two-armed Gaussian bandit with $\mu_1 = \Delta$, $\mu_2 = 0$ and $R_t \mid A_t = a \sim \cN(\mu_a, \sigma^2)$.
The true environment is $M^* = M_{1, 3}$, and the online horizon is $T = 5000$.

Adding a constant to both arm means leaves the action sequence of \eqrefff{eqn:ucb1} and of \eqrefff{eqn:TS} unchanged and shifts $J_T$ by an amount free of $\theta$, and relabeling the arms replaces $\Delta$ by $|\Delta|$ and again shifts $J_T$ by an amount free of $\theta$.
A two-armed Gaussian bandit is therefore summarized by $(\Delta, \sigma)$ for the purpose of ranking candidates.

The candidate sets are the discrete grids
\begin{align}
\Theta_{\textnormal{UCB}} = \{0.1,\, 0.2,\, \dots,\, 1.1\} \cdot \sigma^2,
\qquad
\Theta_{\textnormal{TS}} = \{0.1,\, 0.2,\, \dots,\, 1.5\} \cdot \sigma^2 ,
\label{eq:plane-grid}
\end{align}
where $\sigma = 3$ is the noise level of $M^*$; the grid for \eqrefff{eqn:TS} runs further because its objective is flatter above $\theta_T^*$.
Every axis and legend reports the normalized exploration level $\theta / \sigma^2$.

\subsection{The objective surface}
\label{app:plane-surface}

Write $\Theta$ for the candidate set of the algorithm at hand.
The objective $J_T(\pi_\theta, M_{\Delta, \sigma})$ is measured by simulation for every $\theta \in \Theta$ and every
\begin{align}
(\Delta, \sigma) \in \mathcal G = \{0.075,\, 0.225,\, \dots,\, 5.925\} \times \{0.9,\, 1.0,\, \dots,\, 5.7\},
\label{eq:plane-grid-env}
\end{align}
which is $40 \times 49$ pairs whose $\Delta$ values sit half a step off the origin, so that no cell is centered on $\Delta = 0$.
The cell of $\mathcal G$ nearest $M^*$ is $M_{0.975,\, 3}$.

The runs use common random numbers.
Fix a seed and draw
\begin{align}
\psi_{t,r,a}, \ \zeta_{t,r,a} \stackrel{\textnormal{iid}}{\sim} \cN(0,1),
\qquad t \le T, \quad r \le 2000, \quad a \in \mathcal A .
\label{eq:plane-crn}
\end{align}
In replication $r$ of the run at $(\theta, \Delta, \sigma)$, pulling arm $a$ at round $t$ returns $\mu_a + \sigma \psi_{t,r,a}$, and the posterior draw of \eqrefff{eqn:TS} at that round is $\widehat\mu_a(t-1) + \sqrt{\theta / N_a(t-1)}\, \zeta_{t,r,a}$.
The single draw \eqrefff{eq:plane-crn} serves every $(\theta, \Delta, \sigma)$ of the run.
With $N_{2,r}(T)$ the pull count of the suboptimal arm in replication $r$, the estimate is
\begin{align}
\widehat J_T(\pi_\theta, M_{\Delta, \sigma}) = T \Delta - \frac{\Delta}{2000} \sum_{r=1}^{2000} N_{2,r}(T),
\label{eq:plane-Jhat}
\end{align}
which subtracts the cumulative pseudo-regret from the return of the optimal arm.
Sixteen runs with distinct seeds are pooled, giving $32{,}000$ replications per $(\theta, \Delta, \sigma)$.
At the cell nearest $M^*$ the pooled standard error of \eqrefff{eq:plane-Jhat} is $0.94$ at the best candidate for \eqrefff{eqn:ucb1} and $1.12$ for \eqrefff{eqn:TS}, rising to $8.3$ and $8.9$ at the smallest candidate.

\subsection{Offline data, the Plug-In simulator and the ensemble}
\label{app:plane-rules}

The behavior policy draws each $A_t$ uniformly on $\mathcal A$, so $\mathcal D_{\textnormal{off}}$ holds $T_{\textnormal{off}} = 25$ pairs and $N_1(T_{\textnormal{off}})$ is binomial.
Let $\textnormal{SS}_a$ be the within-arm sum of squares of arm $a$ in $\mathcal D_{\textnormal{off}}$.
The Plug-In simulator is $\widehat M = M_{\widehat\Delta, \widehat\sigma}$ with
\begin{align}
\widehat\Delta = \widehat\mu_1(T_{\textnormal{off}}) - \widehat\mu_2(T_{\textnormal{off}}),
\qquad
\widehat\sigma^2 = \frac{\textnormal{SS}_1 + \textnormal{SS}_2}{T_{\textnormal{off}} - 2},
\label{eq:plane-fit}
\end{align}
one reward variance pooled across the two arms.

Ensemble member $i$ is fitted by \eqrefff{eq:plane-fit} to a fresh dataset of $T_{\textnormal{off}}$ pairs drawn from $\widehat M$ under the same behavior policy, giving $\widehat M^{(i)} = M_{\widehat\Delta^{(i)}, \widehat\sigma^{(i)}}$, a parametric bootstrap of \eqrefff{eq:plane-fit}.
The two selections are
\begin{align}
\widehat\theta_{\plug,T} = \argmax_{\theta \in \Theta} J_T(\pi_\theta, \widehat M),
\qquad
\widehat\theta_{\UA,T} = \argmax_{\theta \in \Theta} \frac{1}{n} \sum_{i=1}^{n} J_T(\pi_\theta, \widehat M^{(i)}),
\qquad n = 8000 .
\label{eq:plane-rules}
\end{align}
Both objectives in \eqrefff{eq:plane-rules} are read off \eqrefff{eq:plane-Jhat} by bilinear interpolation over \eqrefff{eq:plane-grid-env}, the argument first clipped to $\mathcal G$; no fresh rollout is simulated at selection time.

Selections are tabulated on a common grid of fits: the window and bin size are
\begin{align}
\widehat\Delta \in [-2.5,\, 4.5], \qquad \widehat\sigma \in [1.5,\, 4.5], \qquad \textnormal{bin } 0.15 \times 0.10 ,
\label{eq:plane-bins}
\end{align}
and a dataset is assigned the selection computed at the center of the bin its fit \eqrefff{eq:plane-fit} falls in.

\subsection{Figure~\ref{fig:selection-regret}}
\label{app:plane-fig2}

Both rows are evaluated at the cell of $\mathcal G$ nearest $M^*$, with one column per algorithm.

The top row plots the cumulative pseudo-regret $T \Delta - \widehat J_T(\pi_\theta, M_{\Delta, \sigma})$ of \eqrefff{eq:plane-Jhat} against $\theta / \sigma^2$ over the candidate set \eqrefff{eq:plane-grid}, and the dotted line marks its minimizer $\theta_T^*$.
Its minimum is the pseudo-regret of the best candidate, so the curve does not reach zero.

The bottom row plots the distribution of $\widehat\theta_{\plug,T}$ and of $\widehat\theta_{\UA,T}$ over independent draws of $\mathcal D_{\textnormal{off}}$ from $M^*$.
The annotation gives the mean and the variance, over those draws, of the cumulative pseudo-regret of the selected candidate, measured on the scale of the row above.
The two rules are compared on one set of offline datasets, each assigned its selection through \eqrefff{eq:plane-bins}.

\subsection{Asymmetry across environment configurations}
\label{app:plane-robust}

The top row of Figure~\ref{fig:selection-regret} is one environment.
This section repeats that measurement over a grid of environments.

Each environment is measured on its own noise-relative candidate grid
\begin{align}
\theta = c\,\sigma^2, \qquad c \in \{0.05,\, 0.10,\, \dots,\, 2.00\},
\label{eq:robust-grid}
\end{align}
so that the best candidate of every environment lies inside the grid and the horizontal axis $\theta/\sigma^2$ means the same thing in every panel.
The environments are the sixteen pairs
\begin{align}
(\Delta, \sigma) \in \{0.25,\, 0.5,\, 1,\, 2\} \times \{1,\, 2,\, 3,\, 5\} .
\label{eq:robust-envs}
\end{align}
Each $(\theta, \Delta, \sigma)$ is measured at $T = 5000$ from $40{,}000$ replications sharing the common random numbers of \eqrefff{eq:plane-crn}, by a run separate from the surface of Section~\ref{app:plane-surface}.
At $(\Delta, \sigma) = (1, 3)$ the two agree: the cumulative pseudo-regret at $c = 0.1$, $0.3$ and $0.6$ is $586$, $143$ and $79$ here against $577$, $142$ and $81$ in Figure~\ref{fig:selection-regret}.

Write $c^\star$ for the maximizer of $\widehat J_T(\pi_{c \sigma^2}, M_{\Delta, \sigma})$ over \eqrefff{eq:robust-grid}.
Figures~\ref{fig:asym-ucb} and \ref{fig:asym-ts} plot the excess regret of a candidate over the best one, as a percentage of the regret of the best one,
\begin{align}
e(c) = 100 \cdot \frac{\widehat J_T(\pi_{c^\star \sigma^2}, M_{\Delta, \sigma}) - \widehat J_T(\pi_{c \sigma^2}, M_{\Delta, \sigma})}{T\Delta - \widehat J_T(\pi_{c^\star \sigma^2}, M_{\Delta, \sigma})} .
\label{eq:robust-excess}
\end{align}
The normalization in \eqrefff{eq:robust-excess} makes the panels comparable: the regret at $c^\star$ ranges from $0.4$ to $1100$ over \eqrefff{eq:robust-envs}, while $e$ is a ratio.

The asymmetry holds in every configuration.
Taking $e$ at half and at twice the best exploration level, by linear interpolation in $c$, the ratio $e(c^\star/2) / e(2c^\star)$ exceeds one in all sixteen configurations for both algorithms: for \eqrefff{eqn:ucb1} it has median $1.7$ and range $1.1$ to $3.8$, and for \eqrefff{eqn:TS} median $1.9$ and range $1.4$ to $4.7$.
Halving the exploration level therefore costs between $1.1$ and $4.7$ times what doubling it costs, across margins spanning a factor of eight and noise levels spanning a factor of five.

\begin{figure}[t]
    \centering
    \includegraphics[width=\linewidth]{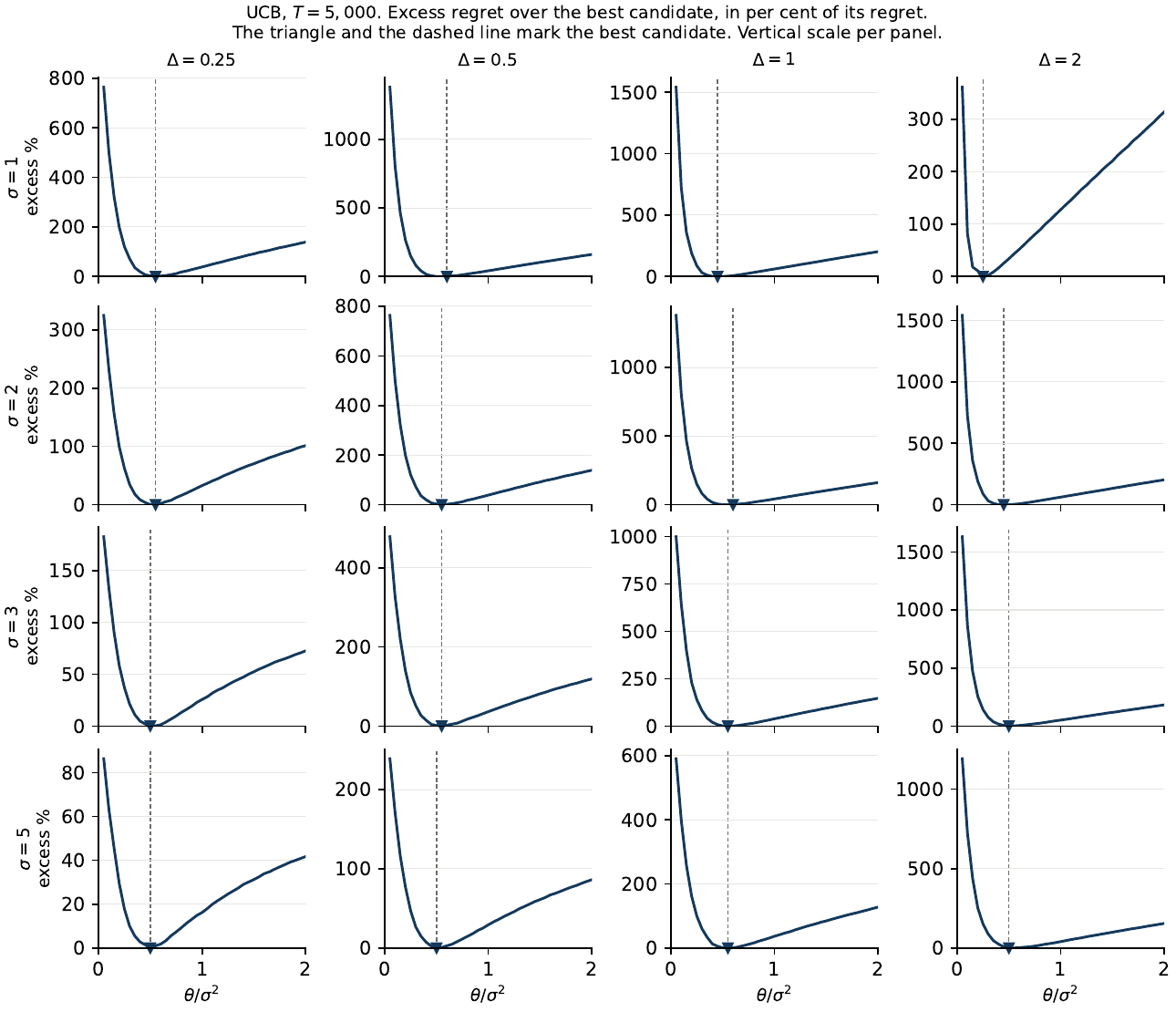}
    \caption{\textbf{The cost of misspecified exploration is asymmetric in every environment tested, under UCB.}
    Excess regret \eqrefff{eq:robust-excess} of UCB$(\theta)$ against $\theta/\sigma^2$, for the environments \eqrefff{eq:robust-envs}, at $T = 5000$.
    The triangle and the dashed line mark the best candidate.
    The vertical scale is per panel, since the regret at the best candidate varies by three orders of magnitude across these environments.}
    \label{fig:asym-ucb}
\end{figure}

\begin{figure}[t]
    \centering
    \includegraphics[width=\linewidth]{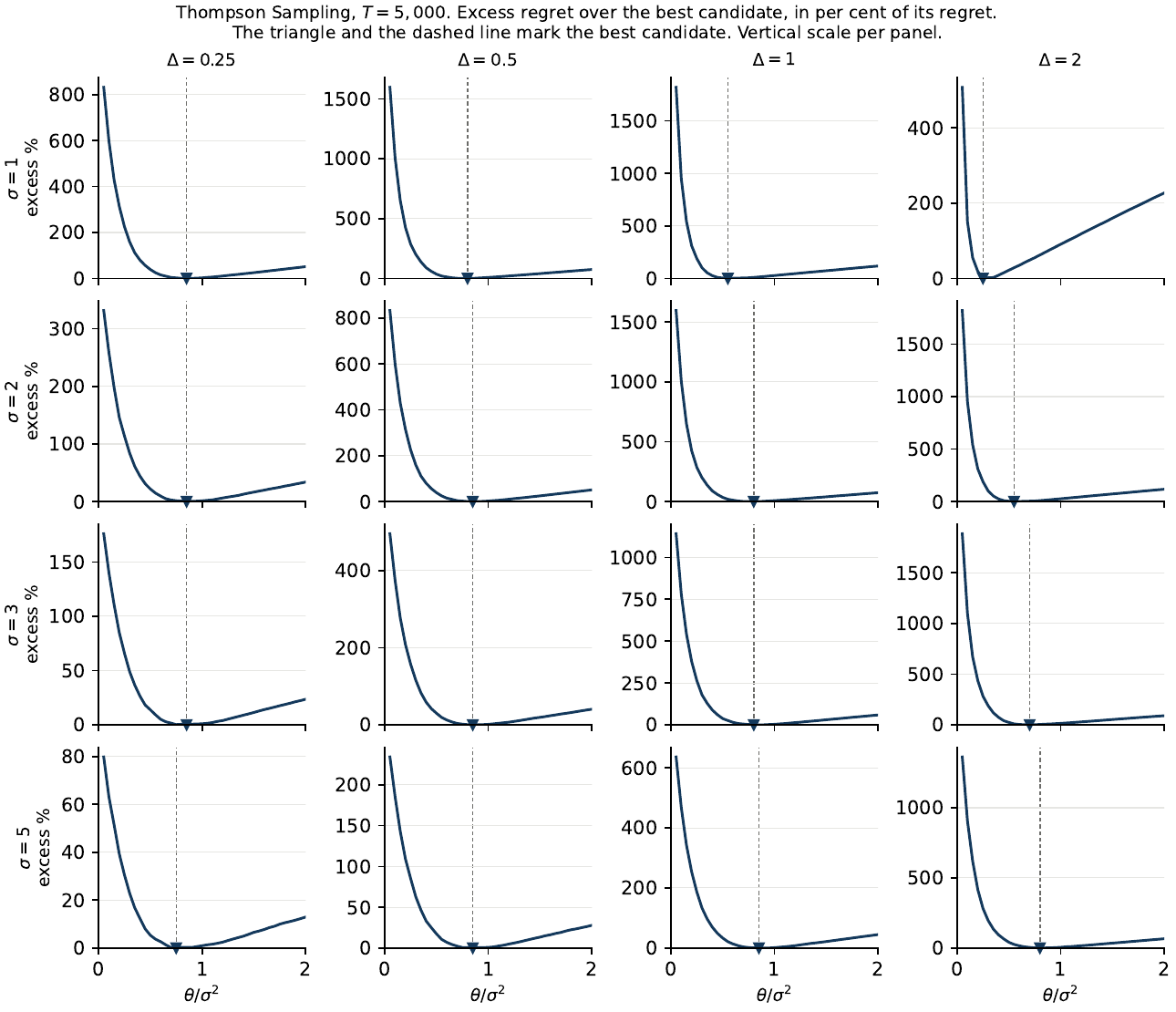}
    \caption{\textbf{The cost of misspecified exploration is asymmetric in every environment tested, under Thompson Sampling.}
    The counterpart of Figure~\ref{fig:asym-ucb} for the algorithm of \eqrefff{eqn:TS}.}
    \label{fig:asym-ts}
\end{figure}

\subsection{Figures~\ref{fig:UCB-exploration} and \ref{fig:TS-exploration}}
\label{app:plane-fig3}

The contours are level sets of the law of $(\widehat\Delta, \widehat\sigma)$ under \eqrefff{eq:plane-fit} applied to the same $10^6$ datasets, at the levels enclosing $50\%$, $90\%$ and $99\%$ of its mass.
The left and center panels report $\widehat\theta_{\plug,T} / \sigma^2$ and $\widehat\theta_{\UA,T} / \sigma^2$ over the bins \eqrefff{eq:plane-bins}, and the right panel reports their difference.
Where an ensemble member falls outside $\mathcal G$ its argument is clipped to the boundary; this affects more than $5\%$ of the members only in the upper right corner of the window, which carries almost none of the mass of the contours.

\subsection{Budgets}
\label{app:plane-budget}

Each cell of \eqrefff{eq:plane-Jhat} averages $32{,}000$ runs of $T$ rounds, which is $6.9 \times 10^8$ runs over the surface for \eqrefff{eqn:ucb1} and $9.4 \times 10^8$ for \eqrefff{eqn:TS}.
The selections use $10^6$ draws of $\mathcal D_{\textnormal{off}}$ and $n$ ensemble members per bin, each scored by interpolation of \eqrefff{eq:plane-Jhat} rather than by fresh simulation.
The bins at the $50\%$, $90\%$ and $99\%$ contours hold about $2200$, $470$ and $69$ of the $10^6$ datasets.

\begin{figure}[t]
    \centering
    \includegraphics[width=\linewidth]{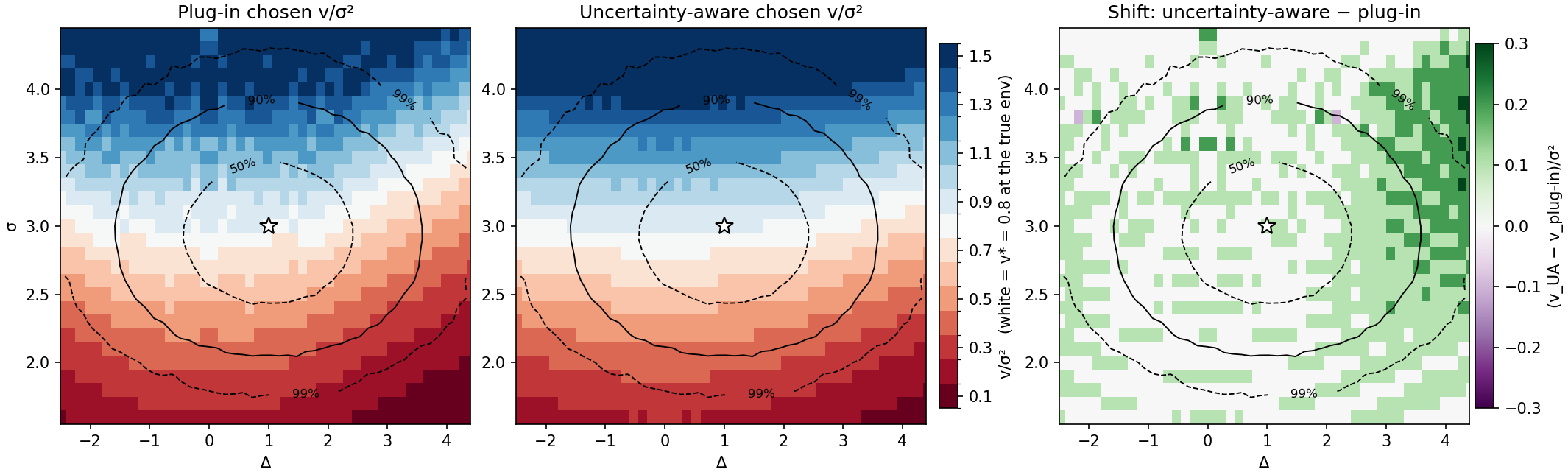}
    \caption{\textbf{Uncertainty-Aware (UA) selection leads to greater exploration than Plug-In under Thompson Sampling.}
    The counterpart of Figure~\ref{fig:UCB-exploration} for the algorithm of \eqrefff{eqn:TS}, on the candidate grid $\Theta_{\textnormal{TS}}$ of \eqrefff{eq:plane-grid}.
    Star, contours and panels are as in Figure~\ref{fig:UCB-exploration}.}
    \label{fig:TS-exploration}
\end{figure}

\section{Deep reinforcement learning experiments}
\label{app:deeprl}

This appendix records the definitions and settings behind Section~\ref{sec:experiments} and the results not shown there.
Throughout, $M_\lambda$, $\lambda^*$, $\widehat\lambda$, $\widehat\lambda^{(i)}$, $\theta$ and $J_T$ are as defined in Section~\ref{sec:experiments}.

\subsection{Tasks, reward and success criterion}
\label{app:deeprl-reward}

The tasks are \texttt{Hopper-v5} (three actuators) and \texttt{Walker2d-v5} (six actuators) from Gymnasium, with action space $\mathcal{A} = [-1, 1]^{d_{\mathcal{A}}}$, where $d_{\mathcal{A}}$ is the number of actuators.
An episode ends after $H = 250$ steps or when the robot leaves Gymnasium's healthy region.
The feature vector is
\begin{align}
\varphi(S, A) = \big(1,\ v_x,\ -\lVert A \rVert_2^2\big),
\end{align}
where the first entry is a survival bonus paid on every step, $v_x$ is the forward velocity of the torso reported by the environment, and the third entry is the squared actuation.
The learner trains on
\begin{align}
R_\theta(S, A) = \theta^\top \varphi(S, A) = \theta_{\textnormal{surv}} + \theta_{\textnormal{fwd}}\, v_x - \theta_{\textnormal{ctrl}}\, \lVert A \rVert_2^2,
\qquad \theta_{\textnormal{surv}} = 1.
\end{align}

The score of an episode is computed from the true state.
Let $H' \le H$ be the number of steps the episode lasted, $x_{H'}$ the horizontal distance traveled by its end, and $\bar c = \frac{1}{H'} \sum_{h=1}^{H'} \lVert A_h \rVert_2^2$ the average squared actuation over those steps.
Then
\begin{align}
\textnormal{score} = \mathbf{1}(\bar c \le c_{\max}) \cdot \min\!\Big\{1,\ \max\!\Big\{0,\ \frac{x_{H'} - x_{\textnormal{lo}}}{x_{\textnormal{hi}} - x_{\textnormal{lo}}}\Big\}\Big\}.
\label{eq:app-deeprl-score}
\end{align}
The objective $J_T(\pi_\theta, M)$ of Section~\ref{sec:experiments} is the expected score of the policy that $\pi_\theta$ has learned in $M$; its estimate, the average score over $50$ evaluation episodes, is the success score.
The thresholds (Table~\ref{tab:deeprl-app-thresholds}) follow the calibration protocol of \citet{morse2025}: a policy is trained at their default weights $\theta = (1, 1, 0.002)$ on the true environment and rolled out for $6$ seeds and $100$ episodes each; $x_{\textnormal{hi}}$ is the median of its $x_{H'}$, $x_{\textnormal{lo}} = x_{\textnormal{hi}} / 2$, and $c_{\max}$ is $0.9$ times its mean squared actuation.
Two details differ from \citet{morse2025}.
The thresholds are re-derived at our training budget, because a policy trained for $3 \times 10^5$ steps travels past their published distance thresholds at every weight vector in the grid.
The effort constraint is applied to the episode average of $\lVert A \rVert_2^2$ rather than to its value at every step, because at our budget the per-step maximum equals the actuator limit for every trained policy.

\begin{table}[htbp]
\centering
\caption{Thresholds of the success criterion \eqrefff{eq:app-deeprl-score}, in meters and in units of $\lVert A \rVert_2^2$.}
\label{tab:deeprl-app-thresholds}
\small
\begin{tabular}{@{}lrrr@{}}
\toprule
task & $x_{\textnormal{lo}}$ & $x_{\textnormal{hi}}$ & $c_{\max}$ \\
\midrule
\texttt{Hopper} & $1.523$ & $3.047$ & $1.548$ \\
\texttt{Walker2d} & $1.669$ & $3.337$ & $3.346$ \\
\bottomrule
\end{tabular}
\end{table}

\subsection{Candidate set and true success scores}
\label{app:deeprl-grid}

The candidate set $\Theta$ is the product of $\theta_{\textnormal{fwd}} \in \{0.3, 1, 3, 10, 30\}$ and $\theta_{\textnormal{ctrl}} \in \{0.002, 0.01, 0.05, 0.25, 1\}$ on \texttt{Hopper} ($25$ weight vectors), and of $\theta_{\textnormal{fwd}} \in \{0.03, 0.1, 0.3, 1, 3, 10\}$ and the same $\theta_{\textnormal{ctrl}}$ on \texttt{Walker2d} ($30$ weight vectors); the \texttt{Walker2d} range was shifted downward so that its optimum is interior.
The value of a weight vector is $J_T(\pi_\theta, M^*)$, the success score of a policy trained on the true environment under it.
Table~\ref{tab:deeprl-app-truth} gives it for every $\theta \in \Theta$; the success score of a selected $\widehat\theta$ is read from it, and the score of $\theta_T^*$ is its ceiling.
Both tasks have $\theta_T^*$ at $(\theta_{\textnormal{fwd}}, \theta_{\textnormal{ctrl}}) = (1, 0.25)$, with $J_T(\pi_{\theta_T^*}, M^*) = 0.991$ on \texttt{Hopper} and $0.689$ on \texttt{Walker2d}.

\begin{table}[htbp]
\centering
\caption{Success score $J_T(\pi_\theta, M^*)$ of every weight vector $\theta \in \Theta$ on the true environment, $\theta_{\textnormal{surv}} = 1$, under the sensor at $\sigma_{\mathrm{rel}} = 0.05$; $30$ PPO seeds per cell on \texttt{Hopper} and $32$ on \texttt{Walker2d}, $50$ evaluation episodes per seed. The optimum $\theta_T^*$ of each task is in bold; the success score of a selected $\theta$ is read from it.}
\label{tab:deeprl-app-truth}
\small
\begin{tabular}{@{}lrrrrr@{}}
\toprule
\texttt{Hopper} & \multicolumn{5}{c}{$\theta_{\textnormal{ctrl}}$} \\
$\theta_{\textnormal{fwd}}$ & $0.002$ & $0.01$ & $0.05$ & $0.25$ & $1$ \\
\midrule
$0.3$ & $0.190$ & $0.067$ & $0.062$ & $0.218$ & $0.000$ \\
$1$ & $0.381$ & $0.555$ & $0.872$ & $\mathbf{0.991}$ & $0.553$ \\
$3$ & $0.365$ & $0.456$ & $0.783$ & $0.930$ & $0.936$ \\
$10$ & $0.378$ & $0.360$ & $0.552$ & $0.764$ & $0.764$ \\
$30$ & $0.255$ & $0.368$ & $0.438$ & $0.614$ & $0.637$ \\
\bottomrule
\end{tabular}
\vspace{6pt}

\begin{tabular}{@{}lrrrrr@{}}
\toprule
\texttt{Walker2d} & \multicolumn{5}{c}{$\theta_{\textnormal{ctrl}}$} \\
$\theta_{\textnormal{fwd}}$ & $0.002$ & $0.01$ & $0.05$ & $0.25$ & $1$ \\
\midrule
$0.03$ & $0.001$ & $0.002$ & $0.000$ & $0.000$ & $0.000$ \\
$0.1$ & $0.001$ & $0.058$ & $0.062$ & $0.009$ & $0.000$ \\
$0.3$ & $0.048$ & $0.140$ & $0.527$ & $0.517$ & $0.000$ \\
$1$ & $0.013$ & $0.134$ & $0.456$ & $\mathbf{0.689}$ & $0.000$ \\
$3$ & $0.013$ & $0.020$ & $0.056$ & $0.536$ & $0.093$ \\
$10$ & $0.033$ & $0.040$ & $0.026$ & $0.107$ & $0.143$ \\
\bottomrule
\end{tabular}
\vspace{6pt}

\end{table}

\subsection{Learner}
\label{app:deeprl-learner}

The learner is PPO from Stable-Baselines3 with a two-hidden-layer MLP policy of width $64$ and tanh activations, $1024$ steps per update, minibatches of $64$, $10$ epochs per update, discount $0.99$, GAE parameter $0.95$, clipping range $0.2$, no entropy bonus, learning rate $3 \times 10^{-4}$, and $3 \times 10^5$ environment steps in a single environment.
Observations and rewards are normalized by running estimates of their mean and variance, with normalized observations clipped to $[-10, 10]$; the observation statistics are frozen for evaluation.
Reward normalization makes training invariant to the scale of $\theta$, which is why $\theta_{\textnormal{surv}}$ is fixed at $1$.
Evaluation runs the deterministic policy for $50$ episodes.
Every training run in the experiment uses these settings; only the reward weights $\theta$, the physics parameter $\lambda$ and the seed vary.

\subsection{True environment, offline data and sensor}
\label{app:deeprl-plant}

The physics parameter $\lambda = (\lambda_{\textnormal{mass}}, \lambda_{\textnormal{fric}}, \lambda_{\textnormal{damp}})$ multiplies the MuJoCo model's body masses, the sliding coefficient of every geom's friction, and the damping of every degree of freedom, relative to the factory model at $\lambda = (1, 1, 1)$.
The true environment is $M^* = M_{\lambda^*}$ with $\lambda^* = (1.5, 0.7, 1.4)$; the simulator $M_\lambda$ is the same model with the factors set to $\lambda$.
The offline dataset $\mathcal{D}_{\textnormal{off}}$ is $T_{\textnormal{off}}$ episodes on $M^*$ under a behavior policy $\pi_b$ that draws each action uniformly from $\mathcal{A}$, with the termination rules above.
The log records the simulator state $(q_{\textnormal{pos}}, q_{\textnormal{vel}})$ without the horizontal position, not the Gymnasium observation, which clips velocities to $\pm 10$; in this appendix $S_{t,h}$ denotes that logged state at step $h$ of offline episode $t$.

The sensor adds independent Gaussian noise to every coordinate of the logged state and of the observation the policy receives.
Its standard deviation on coordinate $j$ is $\sigma_{\mathrm{rel}}\, \sigma_j$ with $\sigma_{\mathrm{rel}} = 0.05$, where $\sigma_j$ is the standard deviation of that coordinate over $2000$ steps of the random policy on the factory model, floored at $10^{-3}$.
The same $\sigma_j$ and the same $\sigma_{\mathrm{rel}}$ apply to the offline log, to training in every simulator, and to training and evaluation on the true environment.
The reward features $\varphi$ and the score are computed from the true state.
The noise is what makes identification a statistical problem: MuJoCo is deterministic, so with an exact log a handful of transitions would pin down $\lambda^*$ to machine precision, the fitted simulator would equal $M^*$, and the Plug-In and UA rules would select the same $\theta$.
Because the same noise is applied everywhere, the only difference between $M^*$, $\widehat M$ and the ensemble members is the value of $\lambda$.

\subsection{Identification}
\label{app:deeprl-sysid}

\paragraph{Per-episode least squares.}
For one offline episode $t$ with logged transitions $(S_{t,h}, A_{t,h}, S_{t,h+1})$, at most $80$ of them evenly spaced, the residual at a candidate $\lambda$ is
\begin{align}
\varrho_{t,h}(\lambda) = \big(f_\lambda(S_{t,h}, A_{t,h}) - S_{t,h+1}\big) / \tau_t,
\end{align}
where $f_\lambda$ resets the MuJoCo model to $S_{t,h}$, sets the factors to $\lambda$, applies $A_{t,h}$ for one control step and returns the resulting state, and the division is coordinate-wise by $\tau_t$, the per-coordinate standard deviation of the logged next states in episode $t$.
The estimate minimizes $\sum_h \lVert \varrho_{t,h}(\lambda) \rVert_2^2$ by trust-region least squares started at $(1, 1, 1)$ inside the box $[0.1, 10]^3$, with finite-difference step $10^{-3}$.
The naive estimate of $\lambda$ is the mean of the per-episode estimates.

\paragraph{SIMEX.}
For each episode $t$ and each noise multiplier $\omega \in \{0, 0.5, 1, 1.5, 2\}$, Gaussian noise of standard deviation $\sqrt{\omega}\, \sigma_{\mathrm{rel}}\, \sigma_j$ is added to the logged states and next states and the per-episode estimate is recomputed, averaged over two noise draws for $\omega > 0$; this gives a curve $\widehat\lambda_t(\omega)$ per episode.
For a set $\mathcal{I}$ of episodes, the curves are averaged over $t \in \mathcal{I}$, a quadratic in $\omega$ is fitted to the average at the five $\omega$ values, and the estimate is its value at $\omega = -1$, clipped to $[0.1, 10]^3$.
Because the extrapolation is linear in the curve values, this equals the average over $\mathcal{I}$ of the per-episode corrected estimates $\widehat\lambda_t$ of the main text, up to the final clipping.
The Plug-In estimate $\widehat\lambda$ takes $\mathcal{I}$ to be all $T_{\textnormal{off}}$ episodes.
Table~\ref{tab:deeprl-app-fit} reports the bias and spread of the naive and SIMEX estimates across independent offline datasets.
The naive damping bias is between $+0.58$ and $+0.93$ at every $T_{\textnormal{off}}$ on both robots; SIMEX reduces it to between $+0.13$ and $+0.46$, at a larger spread.

\begin{table}[htbp]
\centering
\caption{Physics identification under the sensor ($\sigma_{\mathrm{rel}} = 0.05$). The truth is $\lambda^* = (1.5, 0.7, 1.4)$ on both robots. Bias and standard deviation of the estimate across independent offline datasets ($12$ datasets for the naive fit, $24$ for SIMEX); the last column is the bootstrap standard deviation of the damping estimate divided by its across-dataset standard deviation, which is $1$ for a calibrated ensemble of members.}
\label{tab:deeprl-app-fit}
\small
\begin{tabular}{@{}llrrrrrrr@{}}
\toprule
 & & \multicolumn{2}{c}{mass} & \multicolumn{2}{c}{friction} & \multicolumn{2}{c}{damping} & \\
\cmidrule(lr){3-4}\cmidrule(lr){5-6}\cmidrule(lr){7-8}
$T_{\textnormal{off}}$ & fit & bias & sd & bias & sd & bias & sd & boot/true \\
\midrule
\multicolumn{9}{@{}l}{\texttt{Hopper}} \\
$20$ & naive & $+0.008$ & $0.042$ & $+0.069$ & $0.137$ & $+0.932$ & $0.402$ & $0.71$ \\
$20$ & SIMEX & $+0.011$ & $0.200$ & $+0.162$ & $0.470$ & $+0.463$ & $0.914$ & $0.91$ \\
$80$ & naive & $+0.024$ & $0.021$ & $+0.015$ & $0.006$ & $+0.814$ & $0.122$ & $1.15$ \\
$80$ & SIMEX & $+0.022$ & $0.133$ & $+0.007$ & $0.194$ & $+0.238$ & $0.467$ & $1.01$ \\
$320$ & naive & $+0.034$ & $0.016$ & $+0.046$ & $0.029$ & $+0.815$ & $0.068$ & $1.11$ \\
$320$ & SIMEX & $+0.044$ & $0.057$ & $+0.048$ & $0.078$ & $+0.128$ & $0.155$ & $1.56$ \\
\midrule
\multicolumn{9}{@{}l}{\texttt{Walker2d}} \\
$20$ & naive & $+0.044$ & $0.025$ & $+0.166$ & $0.176$ & $+0.576$ & $0.262$ & $0.89$ \\
$20$ & SIMEX & $+0.004$ & $0.096$ & $+0.020$ & $0.525$ & $+0.289$ & $0.871$ & $0.75$ \\
$80$ & naive & $+0.057$ & $0.022$ & $+0.196$ & $0.110$ & $+0.697$ & $0.183$ & $0.79$ \\
$80$ & SIMEX & $+0.008$ & $0.047$ & $+0.048$ & $0.278$ & $+0.204$ & $0.237$ & $1.46$ \\
$320$ & naive & $+0.057$ & $0.010$ & $+0.147$ & $0.028$ & $+0.687$ & $0.053$ & $1.33$ \\
$320$ & SIMEX & $+0.014$ & $0.028$ & $+0.020$ & $0.130$ & $+0.184$ & $0.162$ & $1.08$ \\
\bottomrule
\end{tabular}
\end{table}

\subsection{UA ensemble}
\label{app:deeprl-ensemble}

Member $i = 1, \dots, n = 12$ draws a multiset $\mathcal{I}^{(i)}$ of $T_{\textnormal{off}}$ episodes from $\mathcal{D}_{\textnormal{off}}$ with replacement and uses the SIMEX estimate over $\mathcal{I}^{(i)}$, $\widehat\lambda^{(i)} = T_{\textnormal{off}}^{-1} \sum_{t \in \mathcal{I}^{(i)}} \widehat\lambda_t$, so that $\widehat M^{(i)} = M_{\widehat\lambda^{(i)}}$.
The per-episode curves are computed once per replication and reused by every member.
The last column of Table~\ref{tab:deeprl-app-fit} is the standard deviation of the members' damping estimate divided by the standard deviation of the Plug-In estimate across independent datasets.

\subsection{Selection protocol}
\label{app:deeprl-protocol}

The success score in the simulator is tabulated once per cell and read by every replication.
For each task and $T_{\textnormal{off}}$, SIMEX estimates are computed on $24$ independent offline datasets, and a $3 \times 5$ lattice is placed over the two coordinates of $\lambda$ with the largest spread, friction and damping in every cell, at the $2$nd to $98$th percentiles of the estimates; mass is held at the mean of the estimates (Table~\ref{tab:deeprl-app-lattice}).
At every lattice point $\lambda$ and every $\theta \in \Theta$, PPO is trained in $M_\lambda$ with $9$ seeds and the success score of each seed is stored.
The true success scores $J_T(\pi_\theta, M^*)$ use $30$ seeds on \texttt{Hopper} and $32$ on \texttt{Walker2d} (Table~\ref{tab:deeprl-app-truth}).

\begin{table}[htbp]
\centering
\caption{The lattice of identified physics parameters $\lambda$ on which the simulated success scores are tabulated, one per cell. The two axes are the $2$nd to $98$th percentiles of the SIMEX estimates over $24$ independent offline datasets; the third parameter is held at their mean.}
\label{tab:deeprl-app-lattice}
\small
\begin{tabular}{@{}lrlll@{}}
\toprule
task & $T_{\textnormal{off}}$ & friction axis & damping axis & mass \\
\midrule
\texttt{Hopper} & $20$ & $\{0.24, 0.72, 2.03\}$ & $\{0.42, 1.28, 1.77, 2.32, 3.77\}$ & $1.511$ \\
\texttt{Hopper} & $80$ & $\{0.34, 0.71, 1.06\}$ & $\{0.79, 1.37, 1.63, 2.05, 2.31\}$ & $1.522$ \\
\texttt{Hopper} & $320$ & $\{0.60, 0.74, 0.88\}$ & $\{1.25, 1.40, 1.52, 1.64, 1.78\}$ & $1.544$ \\
\texttt{Walker2d} & $20$ & $\{0.10, 0.58, 2.03\}$ & $\{0.14, 1.03, 1.68, 2.34, 2.95\}$ & $1.504$ \\
\texttt{Walker2d} & $80$ & $\{0.44, 0.64, 1.34\}$ & $\{1.18, 1.43, 1.60, 1.84, 1.96\}$ & $1.508$ \\
\texttt{Walker2d} & $320$ & $\{0.42, 0.73, 0.95\}$ & $\{1.24, 1.49, 1.62, 1.68, 1.81\}$ & $1.514$ \\
\bottomrule
\end{tabular}
\end{table}

One replication proceeds as follows.
\begin{enumerate}[leftmargin=*, nosep]
\item Draw $\mathcal{D}_{\textnormal{off}}$ on $M^*$ and compute $\widehat\lambda$ and $\widehat\lambda^{(1)}, \dots, \widehat\lambda^{(n)}$.
\item The success score of one simulated training run at $\lambda$ and $\theta$ is a seed drawn with replacement from the $9$ stored seeds, after bilinear interpolation of the stored table at $\lambda$ over the friction and damping axes; $\lambda$ is clipped to the lattice.
\item The Plug-In rule draws $K = 12$ runs per $\theta$ at $\widehat\lambda$ and selects $\widehat\theta_{\plug,T}$, the $\theta$ with the largest mean, as in \eqrefff{eq::plugin-selection}.
The UA rule draws $K / n = 1$ run per $\theta$ per member and selects $\widehat\theta_{\UA,T}$, the $\theta$ with the largest mean over members, as in \eqrefff{eqn:ensembleObjective}; the $10$th-percentile and Borda variants replace the mean over members by the $10$th percentile of the member values and by the mean of the members' ranks.
Both rules use $K$ simulated runs per $\theta$.
\item The success score of a rule is $J_T(\pi_{\widehat\theta}, M^*)$, read from Table~\ref{tab:deeprl-app-truth}.
\end{enumerate}
Each cell runs $300$ replications; the offline dataset and the seed draws are shared between the rules, and the paired $t$ statistic of the difference in score is reported.
Because every replication reads the same table, replications are not independent and the standard error of a single rule's score is optimistic; only paired contrasts are used as evidence.

\subsection{Results}
\label{app:deeprl-criteria}

Table~\ref{tab:deeprl-app-criteria} reports the six cells of Table~\ref{tab:deeprl-simex} under the mean, $10$th-percentile and Borda variants of the UA rule, with paired $t$ statistics, and the fraction of replications in which each rule selects $\theta_T^*$.
The mean variant has the highest score of the three in every cell.
The $10$th-percentile variant is significantly worse than the Plug-In on \texttt{Walker2d} at $T_{\textnormal{off}} = 20$; the Borda variant is never significantly different from the Plug-In.

\begin{table}[htbp]
\centering
\caption{Success score $J_T(\pi_{\widehat\theta}, M^*)$ of the weight vector selected by the Plug-In and by the UA rule under three ways of combining the members, in the six cells of Table~\ref{tab:deeprl-simex}. Mean score and the paired $t$ statistic of the difference from the Plug-In (positive favors UA; bold where $|t| > 1.96$), $300$ replications. ``share at optimum'' is the fraction of replications that select $\theta_T^*$.}
\label{tab:deeprl-app-criteria}
\small
\begin{tabular}{@{}lrrrrrrrrr@{}}
\toprule
 & & \multicolumn{2}{c}{mean} & \multicolumn{2}{c}{$10$th percentile} & \multicolumn{2}{c}{Borda} & \multicolumn{2}{c}{share at optimum} \\
\cmidrule(lr){3-4}\cmidrule(lr){5-6}\cmidrule(lr){7-8}\cmidrule(lr){9-10}
$T_{\textnormal{off}}$ & Plug-In & score & $t$ & score & $t$ & score & $t$ & Plug-In & UA \\
\midrule
\multicolumn{10}{@{}l}{\texttt{Hopper}, $J_T(\pi_{\theta_T^*}, M^*) = 0.991$} \\
$20$ & $0.933$ & $0.956$ & $\mathbf{+3.80}$ & $0.929$ & $-0.35$ & $0.926$ & $-0.89$ & $62\%$ & $69\%$ \\
$80$ & $0.965$ & $0.971$ & $+1.57$ & $0.968$ & $+0.69$ & $0.969$ & $+0.87$ & $72\%$ & $81\%$ \\
$320$ & $0.976$ & $0.981$ & $+1.91$ & $0.978$ & $+0.73$ & $0.969$ & $-1.91$ & $84\%$ & $88\%$ \\
\midrule
\multicolumn{10}{@{}l}{\texttt{Walker2d}, $J_T(\pi_{\theta_T^*}, M^*) = 0.689$} \\
$20$ & $0.585$ & $0.594$ & $+1.25$ & $0.558$ & $\mathbf{-3.45}$ & $0.571$ & $-1.71$ & $47\%$ & $53\%$ \\
$80$ & $0.594$ & $0.616$ & $\mathbf{+3.32}$ & $0.590$ & $-0.72$ & $0.606$ & $+1.76$ & $47\%$ & $59\%$ \\
$320$ & $0.590$ & $0.610$ & $\mathbf{+2.96}$ & $0.593$ & $+0.50$ & $0.603$ & $+1.91$ & $44\%$ & $54\%$ \\
\bottomrule
\end{tabular}
\end{table}

\label{app:deeprl-moves}
Figure~\ref{fig:cond-picks-deeprl} shows the UA selection given the Plug-In selection in every cell, with the weight vectors sorted by $\theta_{\textnormal{fwd}} / \theta_{\textnormal{ctrl}}$.
Relative to $\theta_T^*$, given a Plug-In selection with a larger ratio, UA selects a smaller ratio in $55$ to $95\%$ of those datasets; given a smaller ratio, UA selects a larger one in $74$ to $100\%$.
Selections with a larger ratio than $\theta_T^*$ account for $74$ to $98\%$ of the Plug-In's shortfall in score below $\theta_T^*$.

\begin{figure}[htbp]
    \centering
    \includegraphics[width=\linewidth]{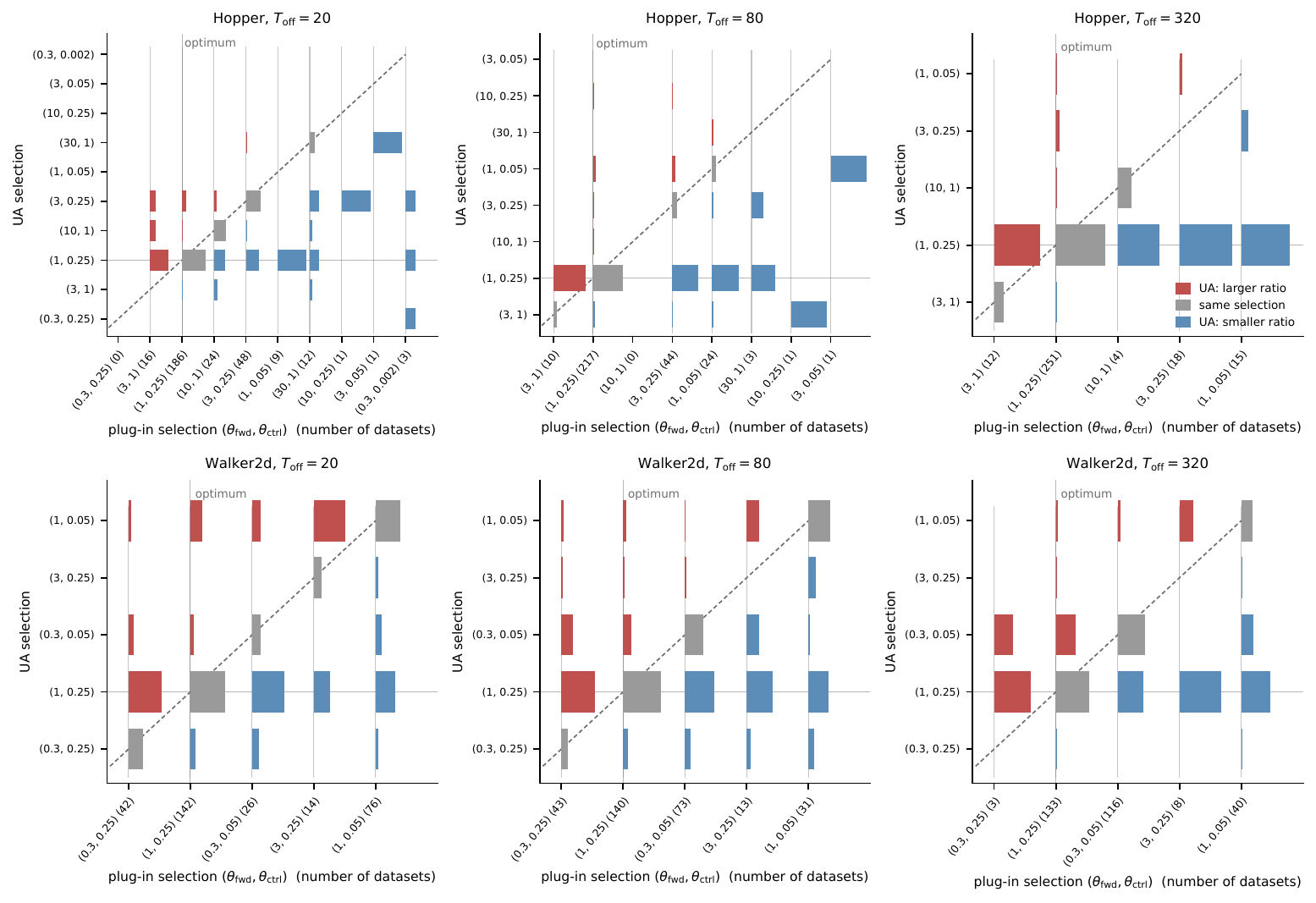}
    \caption{The UA selection $\widehat\theta_{\UA,T}$ given the Plug-In selection $\widehat\theta_{\plug,T}$, in the six cells of Table~\ref{tab:deeprl-simex}, $300$ paired offline datasets each.
    The weight vectors on both axes are sorted by the ratio $\theta_{\textnormal{fwd}} / \theta_{\textnormal{ctrl}}$; a larger ratio pays more for moving forward and charges less for actuation, so the trained policy runs harder.
    Only weight vectors selected at least once are shown.
    Each column is one Plug-In selection, with the number of datasets in parentheses; the bars in a column give the conditional distribution of the UA selection at its position on the vertical axis.
    Bars on the dashed diagonal mean the two rules agree; red bars above it mean UA selected a larger ratio than the Plug-In did, blue bars below it a smaller one.}
    \label{fig:cond-picks-deeprl}
\end{figure}

\section{Theoretical analysis of the UCB algorithm}
\label{app:ucb}

In this section, we provide a detailed theoretical analysis of the behavior and selection performance of UCB$(\theta)$ in a two-armed Gaussian bandit, building on the setup in Section \ref{sec:mab}. For completeness, we restate the algorithm in Alg.~\ref{alg:ucb}.

\begin{algorithm}[t]
\caption{UCB$(\theta)$}
\label{alg:ucb}
\begin{algorithmic}[1]
\STATE Pull each arm once.
\FOR{$t=3,\ldots,T$}
    \STATE For $i\in\{1,2\}$, compute $N_i(t-1;\theta)$ and $\widehat\mu_i(t-1;\theta)$, the number of pulls and empirical mean reward for arm $i$ up to time $t-1$.
    \STATE Pull $A_t= A_t(\theta)\in \arg\max_{i\in\{1,2\}} U_i(t;\theta)$, where $U_i(t;\theta): = \widehat\mu_i(t-1;\theta)
    + \sqrt{\frac{2\theta\log t}{N_i(t-1;\theta)}}$.
\ENDFOR
\end{algorithmic}
\end{algorithm}

We first introduce some additional notation.

\noindent\textbf{Notation.} For $x\in\RR$ and a compact interval $\cV = [\underline v,\overline v]\subset(0,\infty)$, define $\Pi_{\cV}(x):=\min\{\overline v,\max\{\underline v,x\}\}$. For a random element $X$, we write $\sigma(X)$ for the sigma-field generated by $X$. Let $\Phi$ and $\phi$ denote the cumulative distribution function and density of the standard normal distribution. For $a\in\RR$, write $a_+:=\max\{a, 0\}$.

For a generic two-armed Gaussian environment $\nu=\bigl(\mathcal N(\mu_1,\tau_1^2), \mathcal N(\mu_2,\tau_2^2)\bigr)$ with $\mu_1\neq\mu_2$ and $\tau_1^2,\tau_2^2>0$, let $o(\nu)$ and $s(\nu)$ denote its optimal and suboptimal arm, respectively, and let $\delta(\nu):=\mu_{o(\nu)}-\mu_{s(\nu)}>0$. Throughout this section, we let $\pi_\theta$ denote the policy induced by UCB$(\theta)$ in Alg. \ref{alg:ucb}; in addition, to simplify writing, $\forall \theta>0$, we denote the expected regret of UCB$(\theta)$ (Alg. \ref{alg:ucb}) in environment $\nu$ over horizon $T$ as
\begin{equation}
\Reg_T(\theta;\nu)
:=\Reg_T(\pi_\theta, \nu) = \delta(\nu)\cdot
\EE_{\nu}\bigl[N_{s(\nu)}(T;\theta)\bigr].
\label{eq:generic-risk}
\end{equation}
Here, $N_i(T;\theta)$ denotes the number of times arm $i$ is selected over $T$ rounds when $\mathrm{UCB}(\theta)$ is deployed. When the two means are equal, we define $\Reg_T(\theta;\nu)=0$.

Without loss of generality, we assume that given the fitted environment $\widehat M$ obtained from the offline data $\cD_{\off}$, the Plug-In rule selects the smallest minimizer of the expected regret in $\widehat M$:
\begin{equation}\label{eq:plugin-selector}
\widehat \theta_{\plug, T} = \widehat \theta_{\plug, T}(\cD_{\off}):=\min\argmin_{\theta\in\Theta} \Reg_{T}(\theta;\widehat M).
\end{equation}
To define the UA objective, let $Z=(Z_1,Z_2)$ with $Z_1,Z_2\stackrel{\mathrm{iid}}{\sim}\cN(0,1)$, independent of $\cD_{\off}$. Define the generic perturbed environment
\[
\widetilde M(Z)
:=
\left(
\mathcal N\!\left(
    \widetilde\mu_1(Z_1),\widehat\sigma_1^2
\right),
\mathcal N\!\left(
    \widetilde\mu_2(Z_2),\widehat\sigma_2^2
\right)
\right),
\]
where $\tilde \mu_{i} (Z_i) := \widehat\mu_i+ \frac{\widehat\sigma_i}{\sqrt{T_{1,\textnormal{off}}}}Z_i$ for each arm $i\in\{1, 2\}$. The Uncertainty-Aware rule selects the smallest minimizer of the expected regret over an ensemble of perturbed environments:
\begin{equation}
\widehat \theta_{\UA, T} = \widehat \theta_{\UA, T}(\cD_{\off}):=\min\argmin_{\theta\in\Theta}
\Reg_{\UA, T}(\theta; \cD_{\off}),
\label{eq:uq-selector}
\end{equation}
where $\Reg_{\UA, T}(\theta; \cD_{\off}): = \EE_Z[\textnormal{Reg}_T(\theta;\widetilde M(Z))\mid \mathcal D_{\textnormal{off}}]$. This definition is consistent with the main paper.

The existence of $\widehat \theta_{\plug,T}$ and $\widehat \theta_{\UA,T}$ is established in Proposition \ref{prop:existence}.

Finally, note that the two relevant deployment regret terms in (\ref{eq:G}), $\Reg_T(\widehat\theta_{\plug, T};M^*)$ and $\Reg_T(\widehat\theta_{\UA, T};M^*)$, are random variables that depend on the offline data $\cD_{\off}$ for each $T\ge 3$. For notational simplicity, we make this dependence explicit and write
$$
r_{\plug, T}(\cD_{\off}) = \Reg_T(\widehat\theta_{\plug, T};M^*), \quad
r_{\UA, T}(\cD_{\off}) = \Reg_T(\widehat\theta_{\UA, T};M^*).
$$
Thus, $G(\cD_{\off}) = \liminf_{T\to\infty}\frac{\log r_{\plug, T}(\cD_{\off})}{\log r_{\UA, T}(\cD_{\off})}$ is also a random variable that depends on $\cD_{\off}$.

\subsection{Main results and proofs}

In this section, we first establish the existence of $\widehat \theta_{\plug,T}$ and $\widehat \theta_{\UA,T}$ in Proposition~\ref{prop:existence}. We then characterize the asymmetric regret landscape in Theorem \ref{thm::UCB-regret}, followed by an analysis of the $\theta$ selected by the Plug-In and UA approaches in Theorem \ref{thm:main}.

\subsubsection{Existence of $\widehat \theta_{\plug,T}$ and $\widehat \theta_{\UA,T}$}

\begin{proposition}
\label{prop:existence}
Consider the Gaussian environment $M^*$ defined in Section \ref{sec:mab}, and assume $T_{\off}\ge 4$. Then for almost every dataset $\cD_{\off}$, the following holds for any finite horizon $T$:
\begin{itemize}
\item[(i)] $\theta\mapsto \Reg_T(\theta;\widehat M)$ is continuous on $\Theta$;
\item[(ii)] $\theta\mapsto\Reg_{\UA, T}(\theta; \cD_{\off})$ is continuous on $\Theta$.
\end{itemize}
Consequently, both argmin sets in (\ref{eq:plugin-selector}) and (\ref{eq:uq-selector}) are nonempty and compact.
\end{proposition}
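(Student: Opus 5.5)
The plan is to show that, for a fixed finite horizon $T$, the map $\theta\mapsto \Reg_T(\theta;\nu)=\delta(\nu)\,\EE_\nu[N_{s(\nu)}(T;\theta)]$ is continuous for every two-armed Gaussian environment $\nu$. We then pass this continuity through the Gaussian average over $Z$ by dominated convergence. First fix the almost-sure event on which the fitted quantities are well behaved. Since $T_{\off}\ge 4$, each arm has $T_{1,\off}\ge 2$ samples, so almost surely $\widehat\sigma_1^2,\widehat\sigma_2^2>0$ and $\widehat\mu_1\neq\widehat\mu_2$ (continuous distributions). If $\widehat\mu_1=\widehat\mu_2$ the Plug-In regret is defined to be $0$ and is trivially continuous, but that case has probability zero anyway. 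On this event $\widehat M$ and every $\widetilde M(Z)$ are nondegenerate Gaussian bandits. The only exception is the $Z$-null set where $\widetilde\mu_1(Z_1)=\widetilde\mu_2(Z_2)$, on which the regret is $0$ by convention.

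\textbf{Continuity for a fixed environment.} Write $N_s(T;\theta)=\sum_{t=3}^T \ind\{A_t(\theta)=s\}+1$. I would realize all rewards through a common reward table: the $k$-th pull of arm $i$ returns $\mu_i+\tau_i\xi_{i,k}$ with i.i.d.\ standard normal $\xi_{i,k}$, independent of $\theta$. Then the trajectory $(A_3(\theta),\dots,A_T(\theta))$ is a deterministic function of $(\theta,\xi)$. I claim it is locally constant in $\theta$ for almost every $\xi$. Argue by induction on $t$. Suppose the history up to $t-1$ is constant in a neighbourhood of $\theta_0$. Then $U_1(t;\theta)-U_2(t;\theta)$ is continuous in $\theta$ on that neighbourhood, being an empirical-mean difference plus $\sqrt{2\theta\log t}\,(N_1^{-1/2}-N_2^{-1/2})$. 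So the argmax is locally constant unless there is a tie at $\theta_0$. A tie at $\theta_0$ means $\widehat\mu_1-\widehat\mu_2$ equals a specific value determined by $\theta_0$ and the finitely many possible count pairs. Because the empirical means are nondegenerate Gaussian functions of $\xi$, for each fixed $\theta_0$ this happens with probability zero. Hence $\ind\{A_t(\theta)=s\}\to\ind\{A_t(\theta_0)=s\}$ almost surely as $\theta\to\theta_0$. Since $N_s\le T$, bounded convergence gives continuity of $\EE_\nu[N_s(T;\theta)]$ at every $\theta_0$, which proves (i).

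\textbf{The UA objective.} For (ii), set $f(\theta,Z):=\Reg_T(\theta;\widetilde M(Z))$. By the previous step, $f(\cdot,Z)$ is continuous for almost every $Z$. It is also bounded by $|\widetilde\mu_1(Z_1)-\widetilde\mu_2(Z_2)|\,T$, which is integrable under the Gaussian law of $Z$ conditional on $\cD_{\off}$, and this bound does not depend on $\theta$. Dominated convergence then gives continuity of $\Reg_{\UA,T}(\cdot;\cD_{\off})$ on $\Theta$. Measurability of $f$ in $Z$ is routine, since it is an expectation of indicators of Borel events.

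\textbf{Conclusion and main obstacle.} Since $\Theta=[\underline\theta,\overline\theta]$ is compact, both continuous functions attain their minima. Each argmin set is closed as the preimage of the minimum value, hence compact, so the minima in (\ref{eq:plugin-selector}) and (\ref{eq:uq-selector}) exist. The main obstacle is the tie-breaking step. The ``probability zero tie'' claim must hold for each fixed $\theta_0$ jointly over all $t\le T$ and all history branches. I would handle this by a union over the finitely many action sequences and count configurations. On each branch the empirical-mean difference has a nondegenerate Gaussian law conditional on the pull counts: both arms have been pulled at least once, and each arm's sample mean is an average of i.i.d.\ normal table entries. So each tie event is null, and a finite union of null events is null.
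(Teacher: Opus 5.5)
Your proposal is correct and follows essentially the same route as the paper: couple all $\theta$ through a common reward table, and show by induction on $t$ that the action sequence is a.s.\ locally constant near each fixed $\theta_0$, since index ties at $\theta_0$ form a finite union of null affine-hyperplane events in the revealed Gaussian rewards. Then apply bounded convergence for (i) and dominated convergence with the $\theta$-free bound $T\,|\widetilde\mu_1(Z_1)-\widetilde\mu_2(Z_2)|$ for (ii). The one thing to fix is the wording of your closing remark: conditional on a branch the empirical means are no longer Gaussian, so state the tie argument unconditionally, as you already do earlier and as the paper does, namely that the event ``the branch with counts $(n_1,n_2)$ occurs and a tie happens'' lies inside $\{\overline Y_{1,n_1}-\overline Y_{2,n_2}=c\}$, which has probability zero.
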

\begin{proof}[Proof of Proposition \ref{prop:existence}]
Fix any $\cD_{\off}$ such that $\widehat\sigma_1^2,\widehat\sigma_2^2>0$ and $\widehat\mu_1\neq\widehat\mu_2$. We will show (i) and (ii) hold.

We first prove a generic finite-horizon continuity statement. Fix a Gaussian environment $\nu$ with positive arm variances and distinct means.  Place two independent infinite reward sequences on one probability space and use the same sequences to run UCB$(\theta)$ for every $\theta\in\Theta$.  Fix $\theta_0\in\Theta$. We claim that the complete action sequence through any fixed time $T$ is almost surely locally constant in $\theta$ at $\theta_0$.

The claim follows by induction over time. It is trivial for the two initialization pulls. Suppose the claim holds through time $t-1$. Then with probability 1, the same reward observations, pull counts $N_{i}(t-1;\theta)$, and empirical means $\widehat\mu_i(t-1;\theta)$ used at time $t$ are the same throughout a neighborhood of $\theta_0$. Here $N_{i}(t-1;\theta)$ denotes the number of pulls at arm $i$ if we deploy UCB$(\theta)$ for $t-1$ decision points, and $\widehat\mu_i(t-1;\theta)$ denotes the mean reward of all the pulls at arm $i$ if we deploy UCB$(\theta)$ for $t-1$ decision points. At time $t$, the difference between the two UCB indices is
\begin{align*}
D_t(\theta)&:= U_1(t;\theta) - U_2(t;\theta)\\
& = \widehat\mu_1(t-1;\theta)-\widehat\mu_2(t-1;\theta)
+\sqrt{2\theta\log t}
\left\{
\frac{1}{\sqrt{N_1(t-1;\theta)}}-
\frac{1}{\sqrt{N_2(t-1;\theta)}}
\right\}.
\end{align*}
Therefore, with probability 1, $\theta\mapsto D_t(\theta)$ is continuous at $\theta_0$.

In addition, we also have $\PP(D_t(\theta_0) = 0) = 0$. This is because for each possible action sequence through time $t-1$, the event that this action sequence occurs and $D_t(\theta_0)=0$ is contained in an affine hyperplane of the finitely many Gaussian rewards revealed along that sequence. Since these rewards have a joint density, this event has probability zero. There are only finitely many possible action sequences through time $t-1$, so taking their union still gives $\PP(D_t(\theta_0) = 0) = 0$.

Combining the above arguments, we have that with probability 1, $D_t(\theta_0)\neq0$, and $\theta\mapsto D_t(\theta)$ is continuous at $\theta_0$. Therefore, with probability 1, the sign of $D_t(\theta)$ is unchanged in a small enough neighborhood of $\theta_0$. This implies that with probability 1, UCB$(\theta)$ selects the same arm at time $t$ as UCB$(\theta_0)$ throughout that neighborhood. This completes the induction.

It follows that
\[
N_{s(\nu)}(T;\theta)\to N_{s(\nu)}(T;\theta_0)
\quad\text{almost surely as }\theta\to \theta_0.
\]
The count is bounded by $T$, so dominated convergence gives continuity of $\theta\mapsto \Reg_T(\theta;\nu)$.

Applying this result to $\nu=\widehat M$ proves part (i).  For part (ii), conditional on $\cD_{\off}$, the perturbed signed gap $\widetilde\mu_{1}(Z_1)-\widetilde\mu_{2}(Z_2)$ is a nondegenerate Gaussian random variable. Thus, it is nonzero for almost every $Z$, and for almost every $Z$, the preceding argument gives continuity of $\theta\mapsto \Reg_T(\theta;\tilde M(Z))$. Moreover,
\begin{equation}
0\le \Reg_T(\theta;\tilde M(Z))
\le T\,\bigl|\widetilde\mu_{1}(Z_1)-
                 \widetilde\mu_{2}(Z_2)\bigr|,
\label{eq:uq-dominating-function}
\end{equation}
and the right-hand side is integrable with respect to $Z$. We can use the dominated convergence theorem to prove continuity of the averaged objective in (\ref{eq:uq-selector}). Finally, since $\Theta$ is compact, both objectives in (\ref{eq:plugin-selector}) and (\ref{eq:uq-selector}) have nonempty compact argmin sets.
\end{proof}

\subsubsection{Asymmetric regret of UCB}

\begin{theorem}[Asymmetric regret of UCB]\label{thm::UCB-regret}
Consider the two-armed Gaussian bandit environment $M^*=\bigl(\cN(\mu_1,\sigma^2),\cN(\mu_2,\sigma^2)
\bigr)$, where $\mu_1>\mu_2$, $\sigma^2>0$. For $\theta>0$, let $\pi_\theta$ denote the policy induced by UCB$(\theta)$ in Alg.~\ref{alg:ucb}. Then the following hold for its expected regret:
\begin{itemize}
\item[(i)] For any $\theta\in(0,\sigma^2)$,
\[
\lim_{T\to\infty}
\frac{\log\Reg_T(\pi_\theta,M^*)}{\log T}
=
1-\frac{\theta}{\sigma^2}.
\]

\item[(ii)] For any $\theta\ge\sigma^2$,
\[
\lim_{T\to\infty}
\frac{\Reg_T(\pi_\theta,M^*)}{\log T}
=
\frac{2\theta}{\mu_1-\mu_2}.
\]
\end{itemize}
Thus, $\theta=\sigma^2$ separates two asymptotic regimes:
the expected regret grows polynomially in $T$ for
$\theta<\sigma^2$ and logarithmically in $T$ for
$\theta\ge\sigma^2$.
\end{theorem}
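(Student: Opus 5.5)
We will work with the pull-count representation $\Reg_T(\pi_\theta,M^*)=\Delta\,\EE[N_2(T;\theta)]$, where $\Delta:=\mu_1-\mu_2$. Both parts come from one upper-bound decomposition and two separate lower-bound constructions. Throughout, write $\widehat\mu_{i,s}$ for the empirical mean of the first $s$ rewards of arm $i$. These are realized on a fixed reward table, so $\widehat\mu_{i,s}-\mu_i$ is a Gaussian random walk average with variance $\sigma^2/s$, independent of $\theta$.

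\emph{Upper bound (both regimes).} Fix $\varepsilon\in(0,\Delta/2)$ and set $u_T:=\lceil 2\theta\log T/(\Delta-2\varepsilon)^2\rceil$. The usual decomposition gives
\[
\EE[N_2(T)]\le u_T+\sum_{t\le T}\PP\bigl(U_1(t;\theta)\le\mu_1-\varepsilon\bigr)+\sum_{t\le T}\PP\bigl(N_2(t-1)\ge u_T,\ U_2(t;\theta)>\mu_1-\varepsilon\bigr).
\]
The third sum is $O(1)$ by a standard Gaussian tail summed over $s\ge u_T$. The key estimate is the second sum. We bound it by a union over the possible pull counts $s\le t$ of arm $1$, using the fact that the $\varepsilon$-slack enters the exponent:
\[
\PP\bigl(\exists s\le t:\ \widehat\mu_{1,s}-\mu_1\le-\sqrt{2\theta\log t/s}-\varepsilon\bigr)\le\sum_{s\ge1}\exp\!\Bigl(-\tfrac{(\sqrt{2\theta\log t}+\varepsilon\sqrt s)^2}{2\sigma^2}\Bigr)\le t^{-\theta/\sigma^2}\sum_{s\ge1}e^{-\varepsilon\sqrt{2\theta s\log t}/\sigma^2}.
\]
The last sum is $O\bigl(\sigma^4/(\varepsilon^2\theta\log t)\bigr)$. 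Summing over $t$ gives three cases:
\begin{itemize}
\item for $\theta>\sigma^2$, a bounded contribution;
\item for $\theta=\sigma^2$, a contribution $O(\log\log T)=o(\log T)$;
\item for $\theta<\sigma^2$, a contribution $T^{1-\theta/\sigma^2+o(1)}$.
\end{itemize}
Hence $\limsup_T\Reg_T/\log T\le 2\theta\Delta/(\Delta-2\varepsilon)^2$ in case (ii), and letting $\varepsilon\downarrow0$ gives $2\theta/\Delta$. In case (i) we get $\limsup_T\log\Reg_T/\log T\le 1-\theta/\sigma^2$, because the $u_T=O(\log T)$ term is negligible there.

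\emph{Lower bound for (i).} We construct a bad event of probability $T^{-\theta/\sigma^2+o(1)}$ on which arm $1$ is never pulled after initialization, following the intuition of \citet{fan2025fragility}. Fix $\eta>0$ and let $B$ be the event $\inf_{s\ge1}\widehat\mu_{2,s}\ge\mu_2-\eta'$, for some $\eta'$. Shifting the mean of arm $2$ upward if needed, $B$ has positive probability $p_B$ by the strong law of large numbers, and it is independent of arm $1$'s rewards. Let $E_T$ be the event that the single initial reward of arm $1$ satisfies $R\le\mu_2-\eta'-\sqrt{2\theta\log T}$. Its Gaussian tail is
\[
\PP(E_T)=\exp\bigl(-(\sqrt{2\theta\log T}+O(1))^2/(2\sigma^2)\bigr)=T^{-\theta/\sigma^2}e^{-O(\sqrt{\log T})}.
\]
On $E_T\cap B$, arm $1$'s index is at most $\mu_2-\eta'$ for all $t\le T$, while arm $2$'s index exceeds $\mu_2-\eta'$. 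So $N_2(T)=T-1$, and
\[
\Reg_T\ge\Delta(T-1)\,p_B\,T^{-\theta/\sigma^2+o(1)}.
\]
This gives $\liminf_T\log\Reg_T/\log T\ge 1-\theta/\sigma^2$.

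\emph{Lower bound for (ii).} Let $\varepsilon>0$ and let $F_T$ be the event $N_2(T)<(1-\varepsilon)2\theta\log T/\Delta^2$. On $F_T$, for every $t\in[T/2,T]$ the index of arm $2$ is at least $\widehat\mu_{2,N_2}+\Delta/\sqrt{1-\varepsilon}$. Moreover, with probability $1-o(1)$ both of the following hold:
\begin{itemize}
\item $\widehat\mu_{2,s}\ge\mu_2-\delta$ for all $s\ge s_0$;
\item $\widehat\mu_1$ together with its bonus stays below $\mu_1+\delta$ once $N_1\ge T/4$.
\end{itemize}
Choosing $\delta$ small relative to $\Delta(1/\sqrt{1-\varepsilon}-1)$, arm $2$'s index then strictly dominates arm $1$'s throughout $[T/2,T]$. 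Arm $2$ would therefore be pulled about $T/2$ times, contradicting $F_T$. Hence $\PP(F_T)\to0$, which gives $\liminf_T\Reg_T/\log T\ge(1-\varepsilon)2\theta/\Delta$ for every $\varepsilon>0$.

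The main obstacle is the borderline case $\theta=\sigma^2$ in the upper bound. A naive union bound over $s$ gives a $\sum_t 1/t\cdot\mathrm{polylog}$ term, which is not $o(\log T)$. The $\varepsilon$-slack calculation above, which produces an extra $1/\log t$ factor, is what I would make rigorous first. If it proves too lossy, the fallback is a peeling argument with a maximal inequality for the Gaussian random walk on geometric blocks of $s$.
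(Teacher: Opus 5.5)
Your proposal is correct and follows essentially the paper's own route through Lemma~\ref{lem:ucb-bounds}: the same pull-count decomposition, with the $\varepsilon$-slack in the optimal arm's index supplying the cross term that handles $\theta=\sigma^2$ (the paper gets a summable $t^{-1}e^{-c\sqrt{\log t}}$ where you get $O(t^{-1}/\log t)$, and both suffice); the same low-first-reward event for the polynomial lower bound; and the same index-domination contradiction for the $\log T$ lower bound, which the paper argues at the last pull of arm~1 rather than over $[T/2,T]$. Two small repairs are needed: the third sum in your decomposition must carry the indicator $\{A_t=2\}$ for the ``each $s$ is counted once'' tail argument to give $O(1)$, and for fixed $s_0$ your arm-2 event only yields $\limsup_T\mathbb{P}(F_T)\le\mathbb{P}(\exists s\ge s_0:\widehat\mu_{2,s}<\mu_2-\delta)$, so you must then let $s_0\to\infty$ (pull counts $s<s_0$ are harmless because their bonus $\sqrt{2\theta\log(T/2)/s}$ diverges).
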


\begin{proof}[Proof of Theorem \ref{thm::UCB-regret}]
Apply Lemma~\ref{lem:ucb-bounds} to the true environment $M^*$ with $\delta=\mu_1-\mu_2$ and $\tau_o^2=\tau_s^2=\sigma^2$. For $0<\theta<\sigma^2$, part (i) of the theorem follows from Lemma~\ref{lem:ucb-bounds}(v) by taking the constant sequence $\theta_T\equiv\theta$. For $\theta>\sigma^2$, choose a compact interval $\mathcal K$ containing $\theta$ with $\inf\mathcal K>\sigma^2$. Lemma~\ref{lem:ucb-bounds}(iii), together with $\Reg_T(\theta; M^*)=\delta \mathbb{E}_{M^*}[N_s(T;\theta)]$, gives the desired limit. The boundary case $\theta=\sigma^2$ follows directly from Lemma~\ref{lem:ucb-bounds}(iv).
\end{proof}

\subsubsection{Plug-In and UA selection for UCB}

Let
\[
\widehat a:=\arg\max_{i\in\{1,2\}} \widehat\mu_i
\]
denote the optimal arm in the fitted environment obtained from offline data $\cD_{\off}$.

We prove the following stronger theorem.

\begin{theorem}\label{thm:main}
For any $T_{\off}\ge4$, on the probability-one event
\begin{equation}
\cR_{T_{\off}}
:=\left\{
\begin{array}{l}
\widehat\sigma_1^2>0, \widehat\sigma_2^2>0,
\widehat\mu_1\neq\widehat\mu_2,\\[2pt]
\widehat\sigma_i^2\notin\{\underline \theta,\sigma^2,\overline \theta\}
\text{ for }i=1,2
\end{array}
\right\},
\label{eq:regular-event}
\end{equation}
the following statements hold for the Plug-In and UA selection of the UCB algorithm:

\begin{enumerate}[label=\textup{(\roman*)}]

\item As $T\to\infty$,
\begin{equation}
\widehat \theta_{\plug, T}\to c_{\plug}(\cD_{\off}):=\Pi_{{\Theta}}(\widehat\sigma_{\hat a}^2),
\qquad
\liminf_{T\to\infty}\widehat \theta_{\UA, T}
\ge c_{\UA}(\cD_{\off}):=\Pi_{\Theta}(\widehat\sigma_1^2\vee \widehat\sigma_2^2).
\label{eq:selector-limits}
\end{equation}

\item The Plug-In conditional regret satisfies
\begin{equation}
\lim_{T\to\infty}
\frac{\log\bigl(r_{\plug, T}(\cD_{\off})\bigr)}{\log T}
=
\left(1-\frac{c_{\plug}(D^\off)}{\sigma^2}\right)_+,
\label{eq:PI-exponent}
\end{equation}
and the Uncertainty-Aware conditional regret satisfies
\begin{equation}
\limsup_{T\to\infty}
\frac{\log\bigl(r_{\UA, T}(\cD_{\off})\bigr)}{\log T}
\le
\left(1-\frac{c_{\UA}(\cD_{\off})}{\sigma^2}\right)_+
\le
\left(1-\frac{c_{\plug}(\cD_{\off})}{\sigma^2}\right)_+.
\label{eq:UQ-exponent}
\end{equation}
\item $G(\cD_{\off})\geq 1$.
\end{enumerate}
In addition, define the event $\mathcal A_{T_\off}
:=\left\{\widehat\sigma_{\widehat a}^2<\sigma^2<\widehat\sigma_{3-\widehat a}^2\right\}\cap \cR_{T_\off}$. Then  $\lim_{T_\off\to\infty}\PP(\mathcal A_{T_\off})=\frac14$, and on $\mathcal A_{T_\off}$, $G(\cD_{\off})=+\infty$.
\end{theorem}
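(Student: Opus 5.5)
The plan is to reduce everything to the uniform finite-horizon bounds of Lemma~\ref{lem:ucb-bounds}. These bounds hold uniformly over compact $\theta$-sets and over the gap, for a generic environment with optimal-arm variance $\tau_o^2$ and suboptimal-arm variance $\tau_s^2$. In that generic environment the critical exploration level is the variance $\tau_o^2$ of the optimal arm: below it regret grows like $T^{1-\theta/\tau_o^2+o(1)}$, and above it like $(2\theta/\delta)\log T$. Throughout, I fix $\cD_{\off}\in\cR_{T_{\off}}$, so that $\widehat\mu_1\neq\widehat\mu_2$, both fitted variances are positive, and no boundary coincidences occur.

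\textbf{Step 1: Plug-In limit.} I apply the lemma to $\nu=\widehat M$, where $\tau_o^2=\widehat\sigma_{\widehat a}^2$ and $c=c_{\plug}=\Pi_\Theta(\widehat\sigma_{\widehat a}^2)$. I show that for every $\varepsilon>0$, eventually $\widehat\theta_{\plug,T}\in[c-\varepsilon,c+\varepsilon]$, by comparing against a fixed reference point.
\begin{itemize}
\item For $\theta\le c-\varepsilon$, the uniform polynomial lower bound gives $\Reg_T\ge T^{\varepsilon/\widehat\sigma_{\widehat a}^2-o(1)}$ uniformly in $\theta$. This eventually exceeds the $O(\log T)$ regret at the reference point $\theta=c+\varepsilon/2$ (or at $c$ itself when $c=\overline\theta$).
\item For $\theta\ge c+\varepsilon$, where $c\ge\widehat\sigma_{\widehat a}^2$, the uniform logarithmic asymptotics give $\Reg_T\ge(2\theta/\widehat\delta)(1-o(1))\log T$. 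This eventually exceeds the regret $(2(c+\varepsilon/2)/\widehat\delta+o(1))\log T$ at the point $c+\varepsilon/2$.
\end{itemize}
The boundary cases $c=\underline\theta$ or $c=\overline\theta$ are one-sided versions of the same comparison.

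\textbf{Step 2: UA lower limit.} Conditional on $\cD_{\off}$, the perturbed means $\widetilde\mu_i(Z_i)$ are independent nondegenerate Gaussians. Hence there is a set of $Z$ of positive probability $p$ on which the arm with the larger fitted variance, $\widehat\sigma_1^2\vee\widehat\sigma_2^2$, is optimal with gap in a fixed compact interval $[\delta_0,\delta_1]\subset(0,\infty)$.
\begin{itemize}
\item For $\theta\le c_{\UA}-\varepsilon$, the uniform-in-gap polynomial lower bound on that set gives $\Reg_{\UA,T}(\theta)\ge p\,T^{\varepsilon'-o(1)}$ uniformly in $\theta$.
\item At $\theta^\dagger=c_{\UA}$, or slightly above it inside $\Theta$ when $c_{\UA}=\widehat\sigma_1^2\vee\widehat\sigma_2^2$ is interior, both arms' variances lie below $\theta^\dagger$. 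The uniform upper bound $\Reg_T\le\min\{T\delta,\,C\log T/\delta\}$ holds for every $Z$. Integrating it against the bounded density of the perturbed gap near $0$, and using Gaussian tails for large gaps, gives $\Reg_{\UA,T}(\theta^\dagger)=O(\log^2T)$.
\end{itemize}
Comparing the two yields $\liminf_T\widehat\theta_{\UA,T}\ge c_{\UA}$. I expect this step to be the main obstacle: the lemma must supply bounds uniform in small gaps, and these must be integrable against the perturbation.

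\textbf{Step 3: deployment exponents and $G$.} I now use Lemma~\ref{lem:ucb-bounds}(iii)--(v) on $M^*$ along data-dependent sequences $\theta_T$.
\begin{itemize}
\item Since $\widehat\theta_{\plug,T}\to c_{\plug}$, uniformity gives the exponent $(1-c_{\plug}/\sigma^2)_+$, which is \eqref{eq:PI-exponent}.
\item For UA, $\widehat\theta_{\UA,T}\ge c_{\UA}-\varepsilon$ eventually. The uniform upper bound is monotone in $\theta$: $T^{(1-\theta/\sigma^2)_++o(1)}$ over the compact set $[c_{\UA}-\varepsilon,\overline\theta]$. Letting $\varepsilon\downarrow0$ gives the first inequality in \eqref{eq:UQ-exponent}.
\item The second inequality in \eqref{eq:UQ-exponent} is immediate from $c_{\UA}\ge c_{\plug}$.
\end{itemize}
For (iii) there are two cases.
\begin{itemize}
\item If the Plug-In exponent is positive, then $\log r_{\plug,T}/\log r_{\UA,T}$ is asymptotically at least the ratio of exponents, which is at least $1$.
\item If both exponents are zero, both selected parameters lie eventually in a compact subset where regret is $\Theta(\log T)$ uniformly. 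Then both logs are $(1+o(1))\log\log T$ and the ratio tends to $1$.
\end{itemize}
The lower bound $r_{\UA,T}\gtrsim\log T$ comes from the logarithmic lower bound uniform over $\Theta$.

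\textbf{Step 4: the event $\mathcal A_{T_{\off}}$.} On $\mathcal A_{T_{\off}}$ we have $c_{\plug}<\sigma^2$, using $\underline\theta<\sigma^2$, so the Plug-In exponent is strictly positive. We also have $c_{\UA}\ge\min\{\sigma^2,\overline\theta\}=\sigma^2$ strictly above the threshold, so $\log r_{\UA,T}=O(\log\log T)$ and hence $G=\infty$.

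For the probability, as $T_{\off}\to\infty$ we have $\widehat a=1$ with probability tending to $1$, because $\mu_1>\mu_2$. The variances $\widehat\sigma_1^2$ and $\widehat\sigma_2^2$ are independent, and each is asymptotically normal around $\sigma^2$ (scaled $\chi^2$). Therefore $\PP(\widehat\sigma_1^2<\sigma^2)\to1/2$ and $\PP(\widehat\sigma_2^2>\sigma^2)\to1/2$. Combining independence with $\PP(\widehat a=1)\to1$ gives the limit $1/4$.
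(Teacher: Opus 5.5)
Your route is the paper's. Step~1 is Lemma~\ref{lem:plugin-limit}. Step~2 is Lemma~\ref{lem:integrated-exponent}: a witness event for the lower bound, and the Lemma~\ref{lem:ucb-bounds}(i) envelope integrated against the bounded folded-normal density for the upper bound. Steps~3--4 follow the paper's proof of (ii), (iii) and the analysis of $\mathcal A_{T_{\off}}$.

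There is, however, a concrete hole in the upper-boundary case, which has positive probability for every $T_{\off}$. In Step~2, suppose $\bar\sigma^2:=\widehat\sigma_1^2\vee\widehat\sigma_2^2>\overline\theta$. Then $c_{\UA}=\overline\theta=\theta^\dagger$, and your claim that both fitted variances lie below $\theta^\dagger$ is false. On your witness set the arm with variance $\bar\sigma^2$ is optimal and $\overline\theta$ is subcritical for it. Hence $\Reg_{\UA,T}(\overline\theta;\cD_{\off})\ge p\,T^{1-\overline\theta/\bar\sigma^2-o(1)}$ is itself polynomial, and in fact no point of $\Theta$ has an $O(\log^2T)$ objective. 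The comparison ``polynomial versus $O(\log^2T)$'' is unavailable, so the UA half of (i), and the parts of (ii)--(iii) that rely on it, are unproved in this case.

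What is needed is a matching polynomial upper bound. Integrate the full envelope $C[X+B_T/X]$ with $B_T=T^{(1-\theta/\bar\sigma^2)_+}(1+\log T)$, using the trivial bound $TX$ on $\{X\le (B_T/T)^{1/2}\}$. This gives $\Reg_{\UA,T}(\theta;\cD_{\off})\le C'T^{(1-\theta/\bar\sigma^2)_+}(1+\log T)^2$. Then compare the exponent $1-\overline\theta/\bar\sigma^2$ at $\overline\theta$ with the exponent $1-(\overline\theta-\varepsilon)/\bar\sigma^2$ on $[\underline\theta,\overline\theta-\varepsilon]$. This is why the paper proves the full identity $\lim_T\log\Reg_{\UA,T}(\theta_T;\cD_{\off})/\log T=(1-\theta/\bar\sigma^2)_+$ for every $\theta_T\to\theta$, rather than a single $\log^2T$ bound at one supercritical point. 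With that identity, the interior case and both boundary cases are handled by one comparator argument.

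The same slip occurs in Step~1. When $\widehat\sigma_{\widehat a}^2>\overline\theta$, the regret in $\widehat M$ at $c_{\plug}=\overline\theta$ is $T^{1-\overline\theta/\widehat\sigma_{\widehat a}^2+o(1)}$, not $O(\log T)$. Your bound $T^{\varepsilon/\widehat\sigma_{\widehat a}^2-o(1)}$ on $[\underline\theta,\overline\theta-\varepsilon]$ need not beat it. Here the repair is immediate: Lemma~\ref{lem:ucb-bounds}(ii) gives exponent at least $1-(\overline\theta-\varepsilon)/\widehat\sigma_{\widehat a}^2$ on that set, while (v) gives exponent $1-\overline\theta/\widehat\sigma_{\widehat a}^2$ at $\overline\theta$.

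Finally, your computation of $\lim\PP(\mathcal A_{T_{\off}})$ is a valid alternative to the paper's. You use $\PP(\widehat a=1)\to1$ together with independence of the two sample variances. The paper instead uses independence of Gaussian sample means and sample variances, which gives the exact value $q_{T_{\off}}(1-q_{T_{\off}})$ for every $T_{\off}$. Your version avoids the mean--variance independence but yields only the limit.
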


\begin{proof}[Proof of Theorem~\ref{thm:main}]

For part (i), first apply Lemma~\ref{lem:plugin-limit} to the fitted environment $\widehat M$. Its optimal arm is $\widehat a$, and the optimal-arm variance is $\widehat\sigma_{\widehat a}^2$. Also, the regularity event excludes equality with the endpoints of $\Theta$, so
\[
\widehat {\theta}_{\plug, T}\to\Pi_{{\Theta}}(\widehat\sigma_{\widehat a}^2)=c_{\plug}({\cD_{\off}}).
\]
The UA liminf part follows from Lemma~\ref{lem:integrated-exponent}.

For (\ref{eq:PI-exponent}) in part (ii), we deploy the selected sequence $\widehat {\theta}_{\plug, T}$ in the true environment $M^*$. Its optimal-arm variance is $\sigma^2$, and the selected sequence converges to $c_{\plug}({\cD_{\off}})$.  Lemma~\ref{lem:ucb-bounds}(v) then leads to (\ref{eq:PI-exponent}).

For (\ref{eq:UQ-exponent}), fix $\varepsilon>0$. By part (i), for sufficiently large $T$, $\widehat {\theta}_{\UA, T}\ge c_{\UA}({\cD_{\off}})-\varepsilon$. Apply Lemma~\ref{lem:ucb-bounds}(i) uniformly over all $\overline\theta\ge {\theta}\ge c_{\UA}({\cD_{\off}})-\varepsilon$, we deduce that
\[
\limsup_{T\to\infty}
\frac{\log\bigl(r_{\UA, T}({\cD_{\off}})\bigr)}{\log T}
\le
\left(1-\frac{c_{\UA}({\cD_{\off}})-\varepsilon}{\sigma^2}\right)_+.
\]
Let $\varepsilon\downarrow0$, and since $c_{\UA}(\cD_{\off})\ge c_{\plug}(\cD_{\off})$, (\ref{eq:UQ-exponent}) follows.

For part (iii), first apply Lemma~\ref{lem:ucb-bounds}(iii) in the true environment uniformly over ${\theta}\in{\Theta}$ gives a constant $c({\cD_{\off}})>0$ such that
\begin{equation}
r_{\plug, T}({\cD_{\off}})\wedge r_{\UA, T}({\cD_{\off}})
\ge c({\cD_{\off}})\log T
\label{eq:both-diverge-log}
\end{equation}
for all sufficiently large $T$.

If $c_{\plug}({\cD_{\off}})>\sigma^2$, then $c_{\UA}({\cD_{\off}})\ge c_{\plug}({\cD_{\off}})>\sigma^2$. Then (i) implies that  both selected sequences have a fixed positive distance above $\sigma^2$ for all large $T$. Lemma \ref{lem:ucb-bounds}(iii) and (\ref{eq:both-diverge-log}), yields
\[
r_{\plug, T}(\cD_{\off})=\Theta(\log T),
\qquad
r_{\UA, T}(\cD_{\off})=\Theta(\log T).
\]
In this case, $G({\cD_{\off}})=1$.

If $c_{\plug}({\cD_{\off}})<\sigma^2$, set
$q=1-c_{\plug}({\cD_{\off}})/\sigma^2>0$. By
(\ref{eq:PI-exponent}), $\log(r_{\plug, T}({\cD_{\off}}))$ is $q\log T+o(\log T)$.  By (\ref{eq:UQ-exponent}), $\limsup_{T\to\infty} \frac{\log r_{\UA, T}({\cD_{\off}})}{\log T}\le q$. Together with (\ref{eq:both-diverge-log}), this implies $\liminf_{T\to\infty}\frac{\log r_{\plug, T}({\cD_{\off}})}{\log r_{\UA, T}({\cD_{\off}})}\ge 1$. The remaining equality case $c_{\plug}({\cD_{\off}})=\sigma^2$ is excluded by $\cR_{T_\off}$.  This proves (iii).

Finally, we analyze the event $\cA_{T_\off}$. Note that for Gaussian data, the sample means are independent of the sample variances. Thus, $\widehat a$ is independent of $(\widehat\sigma_1^2,\widehat\sigma_2^2)$. In addition, the two sample variances are i.i.d., and $\frac{(T_{1, \off}-1)\widehat\sigma_i^2}{\sigma^2}\sim\chi^2_{T_{1, \off}-1}$. Therefore, condition on either value of $\widehat a$, the probability that the fitted optimal arm has variance below $\sigma^2$ and the other arm has variance above $\sigma^2$ is $q_{T_\off}(1-q_{T_\off})$, where $q_{T_\off}:= {\PP}(\chi^2_{T_{1, \off}-1}<T_{1, \off}-1)$. This leads to ${\PP}(\mathcal A_{T_{\off}})=q_{T_\off}(1-q_{T_\off})$. Moreover, according to the Central Limit Theorem, $\lim_{T_{\off}\to\infty}q_{T_\off} = 1/2$. Thus, $\PP(\cA_{T_\off})\to 1/4$.

On $\mathcal A_{T_\off}$, the fitted optimal arm has sample variance below $\sigma^2$, and the other arm has sample variance above $\sigma^2$. Since $\underline {\theta}<\sigma^2<\overline {\theta}$,
\[
c_{\plug}({\cD_{\off}})<\sigma^2<c_{\UA}({\cD_{\off}}).
\]
From the previous analysis we obtain that on $\mathcal A_{T_\off}$,
\begin{align*}
r_{\plug, T}({\cD_{\off}})
&=T^{\,1-c_{\plug}({\cD_{\off}})/\sigma^2+o(1)},\\
r_{\UA, T}({\cD_{\off}})&=O(\log T),
\end{align*}
and hence
\begin{equation*}
\frac{r_{\plug, T}({\cD_{\off}})}{r_{\UA, T}({\cD_{\off}})}
\to\infty, \qquad G({\cD_{\off}})=\infty.
\end{equation*}

\end{proof}

\subsection{Auxiliary lemmas}

\begin{lemma}[Fixed-environment UCB regret]
\label{lem:ucb-bounds}
Consider a 2-arm bandit environment $\nu_\delta=\bigl(\mathcal N(\mu_o,\tau_o^2),\mathcal N(\mu_s,\tau_s^2)\bigr)$, where $\mu_o = \mu_s+\delta$ with $\delta>0$. Let $\cK=[\theta_-,\theta_+]\subset(0,\infty)$. Then the following holds for the regret of UCB$(\theta)$ and the arm pull counts in environment $\nu_\delta$:

\begin{itemize}
\item[(i)] There is a finite constant $C$, depending only on $\theta_-,\theta_+,\tau_o^2,\tau_s^2$, such that for all $T\ge3$, $\theta\in \cK$, and $\delta>0$:
\begin{equation}
\Reg_T(\theta;\nu_\delta) \le C\left[
\delta+\frac{\log T}{\delta} +\frac1\delta \left(1+\sum_{t=3}^T t^{-\theta/\tau_o^2}\right)
\right].
\label{eq:fixed-upper-envelope}
\end{equation}

\item[(ii)] Fix $0<\delta_-\le\delta_+<\infty$ and suppose $\theta_+<\tau_o^2$. For every $\eta>0$, there is $T_0<\infty$ such that, for all $T\ge T_0$,
\begin{equation}
\inf_{\substack{\delta\in[\delta_-,\delta_+]\\\theta\in \cK}}
\Reg_T(\theta;\nu_\delta)
\ge \frac{\delta_-}{4}
T^{\,1-\theta_+/\tau_o^2-\eta}.
\label{eq:fixed-catastrophic-lower}
\end{equation}

\item[(iii)] For every fixed $\delta>0$,
\begin{equation}
\liminf_{T\to\infty}\inf_{\theta\in \cK}
\frac{\delta^2\EE_{\nu_\delta}[N_s(T;\theta)]}
{2\theta\log T}\ge1.
\label{eq:forced-exploration-lower}
\end{equation}
If, in addition, $\theta_-\ge(1+\xi)\tau_o^2$ for some $\xi>0$, then
\begin{equation}
\lim_{T\to\infty}\sup_{\theta\in \cK}
\left|
\frac{\delta^2\EE_{\nu_\delta}[N_s(T;\theta)]}{2\theta\log T}-1
\right|=0.
\label{eq:supercritical-asymptotic}
\end{equation}

\item[(iv)] For every fixed $\delta>0$, when $\theta=\tau_o^2$,
\begin{equation}
\label{eq:ucb-critical-regret}
\lim_{T\to\infty}
\frac{\operatorname{Reg}_T(\theta;\nu_\delta)}{\log T}
=
\frac{2\theta}{\delta}.
\end{equation}

\item[(v)] If $\theta_T\in \cK$ and $\theta_T\to \theta$, then
\begin{equation}
\lim_{T\to\infty}
\frac{\log\bigl(\Reg_T(\theta_T;\nu_\delta)\bigr)}{\log T}
=\left(1-\frac{\theta}{\tau_o^2}\right)_+.
\label{eq:fixed-regret-exponent}
\end{equation}
\end{itemize}
\end{lemma}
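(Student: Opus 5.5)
The plan is to prove all five parts from a single pull-count decomposition for the upper bounds and a single ``bad first reward'' construction for the polynomial lower bound, tracking constants so that everything is uniform over $\theta\in\cK$ and, where required, over $\delta$.

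\textbf{Upper envelope (i).} Write $\widehat\mu_{o,n},\widehat\mu_{s,n}$ for the empirical means after $n$ pulls, and fix $\ell=\lceil C_1\log T/\delta^2\rceil$. If the suboptimal arm is pulled at round $t$ with $N_s(t-1)\ge\ell$, then one of two things happened: either $U_o(t)\le\mu_o-\delta/2$, or $U_s(t)>\mu_o-\delta/2$ with $N_s(t-1)\ge\ell$. This gives
\[
\EE N_s(T)\le \ell+\sum_{t=3}^T \PP\big(U_o(t)\le\mu_o-\delta/2\big)+\sum_{t}\PP\big(U_s(t)>\mu_o-\delta/2,\ N_s(t-1)\ge\ell\big).
\]
\begin{itemize}
\item For the third term, $C_1$ is chosen depending on $\theta_+,\tau_s^2$ so that $\sqrt{2\theta\log t/\ell}\le\delta/4$. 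The Gaussian tail of $\widehat\mu_{s,n}$ summed over $n\ge\ell$ then contributes $O(1/\delta^2)$.
\item For the second term, take a union over the value $n$ of $N_o(t-1)$ and use the Gaussian tail:
\[
\sum_n\exp\!\Big(-\tfrac{(\sqrt{2\theta\log t}+\delta\sqrt n/2)^2}{2\tau_o^2}\Big)\le t^{-\theta/\tau_o^2}\sum_n e^{-\delta^2 n/(8\tau_o^2)}\le C\,t^{-\theta/\tau_o^2}/\delta^2 .
\]
\end{itemize}
Multiplying by $\delta$ gives exactly the envelope \eqref{eq:fixed-upper-envelope}. All constants depend only on $\theta_\pm,\tau_o^2,\tau_s^2$.

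\textbf{Lower bounds (ii), (iii).} For (ii) I follow the Fan--Glynn intuition of a low initial reward on the optimal arm. Fix $\eta>0$ and consider the event $E$ that the optimal arm's first reward is at most $\mu_s-(1+\eta')\sqrt{2\theta\log T}$.
\begin{itemize}
\item Its probability is at least $T^{-\theta/\tau_o^2-\eta/2}$ for large $T$, uniformly over the compact ranges of $\theta$ and $\delta$, by Mills-ratio bounds.
\item Intersect $E$ with the event that $\inf_n\widehat\mu_{s,n}\ge\mu_s-\eta'$. This event is independent of $E$ and has probability bounded below.
\item On the intersection, the optimal index $\widehat\mu_{o,1}+\sqrt{2\theta\log t}$ stays below $\mu_s-\eta'\le U_s(t)$ for all $t\le T$. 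So the suboptimal arm is pulled about $T$ times, giving regret at least $\tfrac{\delta_-}{4}T^{1-\theta_+/\tau_o^2-\eta}$.
\end{itemize}
For the first claim of (iii), suppose $N_s(T)\le(1-\varepsilon)2\theta\log T/\delta^2$. Then at late rounds $U_s(t)\ge\mu_s-o(1)+\delta/\sqrt{1-\varepsilon}$, which exceeds $U_o(t)\approx\mu_o+o(1)$ since $N_o\approx T$. That forces further suboptimal pulls, so this event has vanishing probability, uniformly in $\theta\in\cK$. For the matching upper bound when $\theta_-\ge(1+\xi)\tau_o^2$, rerun (i) with slack $\varepsilon\delta$ instead of $\delta/2$ and $\ell=(1+\varepsilon)^2 2\theta\log T/\delta^2$. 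The series $\sum_t t^{-\theta/\tau_o^2}$ is then uniformly bounded, so only $\ell$ survives at order $\log T$.

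\textbf{Critical case (iv): the main obstacle.} At $\theta=\tau_o^2$, (i) only yields $O(\log T)$ with the wrong constant, because $\sum t^{-1}=\log T$. The fix is to keep the cross term in the exponent. With slack $\varepsilon$,
\[
\sum_n \exp\!\Big(-\tfrac{(\sqrt{2\theta\log t}+\varepsilon\sqrt n)^2}{2\theta}\Big)\le t^{-1}\sum_n e^{-\varepsilon\sqrt{2\log t/\theta}\,\sqrt n}=O\!\big(t^{-1}/(\varepsilon^2\log t)\big).
\]
Summing over $t$ gives $O(\log\log T)=o(\log T)$. Combined with $\ell=(1+\varepsilon)^2 2\theta\log T/\delta^2$ and the lower bound from (iii), this gives the limit $2\theta/\delta$ after $\varepsilon\downarrow0$. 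I expect the delicate points here to be controlling this peeling sum and verifying the uniformity claims.

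\textbf{Exponent (v).} If $\theta<\tau_o^2$, apply (ii) on a small compact $\cK'\ni\theta_T$ with $\sup\cK'\to\theta$ for the lower bound. For the upper bound use (i), noting $\sum_{t\le T}t^{-\theta_T/\tau_o^2}\le T^{1-\theta_T/\tau_o^2}\cdot C$ uniformly. If $\theta\ge\tau_o^2$, (i) gives regret at most $T^{o(1)}$, since the sum is at most $T^{\varepsilon}$ for every $\varepsilon>0$ eventually, while (iii) gives regret at least $c\log T$. Hence the exponent is $0=(1-\theta/\tau_o^2)_+$.
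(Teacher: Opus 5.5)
Your outline follows the paper's proof almost step for step. It uses the same pull-count decomposition with a union bound over the optimal arm's pull count for (i), the same low-first-reward construction for (ii), a last-optimal-pull contradiction for the lower half of (iii), a rerun of (i) with finer slack for the upper half, and the cross term $\sqrt{m\log t}$ kept in the exponent at $\theta=\tau_o^2$ for (iv). Your per-round bound $O\bigl(1/(t\log t)\bigr)$ in (iv) sums to $O(\log\log T)$. That is weaker than the paper's summable $t^{-1}e^{-c\sqrt{\log t}}$ but still $o(\log T)$. Two steps, however, do not go through as written.

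First, the parameters in the upper half of (iii) and in (iv) are incompatible. With $\ell=(1+\varepsilon)^2\,2\theta\log T/\delta^2$, the bonus at counts $k\ge\ell$ is at most $\delta/(1+\varepsilon)$. So $U_s(t)>\mu_o-\varepsilon\delta$ only forces $\widehat\mu_{s,k}>\mu_s+\delta\bigl(1-\varepsilon-\frac{1}{1+\varepsilon}\bigr)=\mu_s-\frac{\varepsilon^2\delta}{1+\varepsilon}$, which is a typical event. The resulting bound on the suboptimal-arm term is then of order $T$ rather than $O(1)$. The slack must be strictly smaller than $\delta-\delta/(1+\varepsilon)$, e.g.\ $\varepsilon\delta/3$. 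The paper uses $\epsilon'=\frac13\bigl(1-1/\sqrt{1+\rho'}\bigr)$ with $m_+=\lceil 2\theta(1+\rho')\log T/\delta^2\rceil$, so the bonus is at most $(1-3\epsilon')\delta$ and the required deviation is $+2\epsilon'\delta$. Relatedly, the third sum in your (i) needs the indicator $A_t=s$. Only then is each value of $N_s(t-1)$ charged once, so the sum over $t$ collapses to a sum over $n\ge\ell$.

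Second, in the lower half of (iii) you justify $U_s(t)\ge\mu_s-o(1)+\delta/\sqrt{1-\varepsilon}$ by treating $\widehat\mu_{s,k}$ as $\mu_s-o(1)$. But $k=N_s(t-1)$ can be as small as $1$, where $\widehat\mu_{s,k}<\mu_s-c\delta$ has constant probability. So no bound $\widehat\mu_{s,k}\ge\mu_s-o(1)$ uniform over the relevant $k$ holds with probability tending to one. What rescues the argument is that small counts carry large bonuses, so the deviation must be measured on the bonus scale. The paper's event $G_{s,T}$ requires $\sum_{j\le k}(Y_{s,j}-\mu_s)>-b\sqrt{2\theta_-k\log T}$ simultaneously for all $k\le m_{\max}(T)=O(\log T)$, with $b=\sqrt{1-\zeta}-(1+2\epsilon)\sqrt{1-\rho}>0$. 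This fails with probability at most $m_{\max}(T)\,T^{-b^2\theta_-/\tau_s^2}\to0$, uniformly in $\theta\in\mathcal{K}$. Evaluated at the last optimal pull $r\ge T-m_{\max}(T)$, where $\log r\ge(1-\zeta)\log T$, it gives $U_s(r)>\mu_o+2\epsilon\delta\ge U_o(r)$, which is the needed contradiction. With these two repairs your outline proves the lemma.
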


\begin{proof}[Proof of Lemma \ref{lem:ucb-bounds}]

Without loss of generality, we assume that any UCB algorithm is deployed by placing the two independent reward sequences on a common probability space and revealing the reward $Y_{i,t}$ when arm $i$ is selected for the $t$th time. If $\mathrm{UCB}(\theta)$ is deployed, let the arm means and pull counts be
\[
N_i(t;\theta):=\sum_{r=1}^t \ind\{A_r(\theta)=i\},
\qquad
\overline Y_{i,n}:=\frac{1}{n}\sum_{s=1}^n Y_{i,s},
\qquad
\widehat\mu_i(t; \theta):=\overline Y_{i,N_i(t; \theta)}.
\]

\textbf{Part (i).} We establish this part using a pull-count decomposition similar to that in \citet{auer2002, fan2022typical}, combined with Gaussian concentration bounds. Fix $\theta\in \cK$ and let $m_T(\theta,\delta):=\lceil\frac{8\theta\log T}{\delta^2}\rceil$. For $t\ge3$, define
\[
B_t(\theta):=\left\{U_o(t;\theta)<\mu_o-\frac{\delta}{4}
\right\},
\]
where $U_i(t;\theta) = \widehat\mu_i(t-1;\theta) + \sqrt{\frac{2\theta\log t}{N_i(t-1;\theta)}}$ is the UCB index of arm $i$ at time $t$. If for $m\in\NN$, $N_o(t-1;\theta)=m$ and $B_t(\theta)$ occurs, then $\overline Y_{o,m}-\mu_o
< -\frac{\delta}{4}-\sqrt{\frac{2\theta\log t}{m}}$. Gaussian concentration and a union bound over $m$ give
\begin{align}
{\PP} (B_t(\theta))
&\le \sum_{m=1}^{t-1}
\exp\left\{-\frac{m}{2\tau_o^2}
\left(\frac{\delta}{4}+\sqrt{\frac{2\theta\log t}{m}}\right)^2
\right\}\notag\\
&\le t^{-\theta/\tau_o^2}
\sum_{m=1}^{\infty}
\exp\left\{-\frac{m\delta^2}{32\tau_o^2}\right\}\notag\\
&\le C_1\tau_o^2\delta^{-2}t^{-\theta/\tau_o^2}.
\label{eq:Bt-upper}
\end{align}
Here $C_1$ is a universal constant. Here we use the fact that $e^x-1\ge x$ for $x\geq 0$.

Suppose that arm $s$ is selected at time $t$, $B_t(\theta)$ does not occur, and
$N_s(t-1;\theta)=k\ge m_T(\theta,\delta)$.  Then $U_s(t;\theta)\ge U_o(t;\theta)\ge\mu_o-\frac{\delta}{4}$, and $\sqrt{\frac{2\theta\log t}{k}}
\le\sqrt{\frac{2\theta\log T}{m_T(\theta,\delta)}}
\le\frac{\delta}{2}$. Thus we deduce that $\overline Y_{s,k}-\mu_s\ge\delta/4$. Each pre-pull count $k$ is encountered
at most once, so
\begin{align}
N_s(T;\theta)
&\le m_T(\theta,\delta)+1
+\sum_{t=3}^T\ind\{B_t(\theta)\}
+\sum_{k=m_T(\theta,\delta)}^{\infty}
\ind\left\{\overline Y_{s,k}-\mu_s\ge\frac{\delta}{4}\right\}.
\label{eq:count-decomposition}
\end{align}
The final sum has expectation at most
\begin{align*}
\sum_{k=m_T(\theta,\delta)}^\infty
\exp\left\{-\frac{k\delta^2}{32\tau_s^2}\right\}
\leq \frac1{1-\exp\left\{-\frac{\delta^2}{32\tau_s^2}\right\}}\leq 1+\frac{32\tau_s^2}{\delta^{2}}.
\end{align*}
Taking expectations in (\ref{eq:count-decomposition}), using
(\ref{eq:Bt-upper}), and multiplying by $\delta$ proves (\ref{eq:fixed-upper-envelope}).  If $m_T(\theta,\delta)>T$, the trivial bound
${\Reg}_T(\theta;\nu_\delta)\le T\delta$ is already bounded by a constant multiple of
$\delta+(\log T)/\delta$, so the same conclusion holds.

\textbf{Part (ii).} Choose $G>0$ large enough such that $\sum_{k=1}^{\infty}
\exp\left\{-\frac{kG^2}{2\tau_s^2}\right\}
\le\frac12$. Let
\[
F:=\left\{\overline Y_{s,k}>\mu_s-G\text{ for every }k\ge1\right\}.
\]
Then a union bound implies ${\PP} (F)\ge1/2$. Moreover, define
\begin{equation}
E_T:=\left\{
Y_{o,1}\le
\mu_s-G-\sqrt{2\theta_+\log T}
\right\}.
\label{eq:bad-first-reward}
\end{equation}
On $E_T\cap F$, after the two initialization pulls, the optimal arm has index at most $\mu_s-G$ at every $t\le T$ as long as it is not selected again. The suboptimal arm has index strictly larger than $\mu_s-G$ at every time by the definition of $F$. Thus, we can use induction to show that the suboptimal arm is selected at every time $3,\ldots,T$, so $N_s(T;\theta)=T-1$ simultaneously for every $\theta\in \cK$.

Now for any $\eta>0$, uniformly over $\delta\in[\delta_-,\delta_+]$,
\[
{\PP} (E_T)
=\Phi\left(
-\frac{\delta+G+\sqrt{2\theta_+\log T}}{\tau_o}
\right)
\ge T^{-\theta_+/\tau_o^2-\eta}
\]
for all sufficiently large $T$. Here $\Phi$ is the CDF of a standard normal distribution. The above inequality follows directly from the standard property of Gaussian distribution: $\forall x>0$, $\Phi(-x)\geq \frac{x}{1+x^2}\cdot \frac{1}{\sqrt{2\pi}}\exp(-x^2/2)$, combined with the fact that $e^{-O(\sqrt{\log T})}=T^{-o(1)}$. In addition, the events $E_T$ and $F$ involve different armwise reward sequences and are independent. Therefore,
\[
{\Reg}_T(\theta;\nu_\delta) \ge\delta_-(T-1){\PP} (E_T){\PP} (F)
\ge\frac{\delta_-}{4}
T^{1-\theta_+/\tau_o^2-\eta}
\]
for all large $T$, which proves (\ref{eq:fixed-catastrophic-lower}).

\textbf{Part (iii).} We first prove (\ref{eq:forced-exploration-lower}). Fix $\rho\in(0,1)$ and define $m_{\max}(T):=
\left\lceil\frac{2\theta_+(1-\rho)\log T}{\delta^2}\right\rceil = o(T)$. Choose small constants $\zeta,\epsilon>0$ such that $b:=\sqrt{1-\zeta}-(1+2\epsilon)\sqrt{1-\rho}>0$. Next, set $\ell_0(T)=T-2m_{\max}(T)-1$ and define
\begin{align*}
G_{o,T}
&:=\bigcap_{\ell=\ell_0(T)}^T
\left\{\overline Y_{o,\ell}-\mu_o\le\epsilon\delta\right\},\\
G_{s,T}
&:=\bigcap_{k=1}^{m_{\max}(T)}
\left\{\sum_{j=1}^k(Y_{s,j}-\mu_s)
>-b\sqrt{2\theta_-k\log T}\right\}.
\end{align*}
A Gaussian union bound gives
\begin{align*}
{\PP} (G_{o,T}^c)
&\le\sum_{\ell = \ell_0(T)}^T\exp\left\{-\frac{\ell\epsilon^2\delta^2}{2\tau_o^2}\right\}\\
&\leq (2m_{\max}(T) + 2)\cdot\exp\left\{-\frac{\ell_0(T)\epsilon^2\delta^2}{2\tau_o^2}\right\}\to 0
\end{align*}
as $T\to\infty$. Also, for every $k\le m_{\max}(T)$,
\[
\PP\left(
\sum_{j=1}^k(Y_{s,j}-\mu_s)
\le-b\sqrt{2\theta_-k\log T}
\right)
\le T^{-b^2\theta_-/\tau_s^2},
\]
where we use the fact that for $x\geq 0$, $\Phi(-x)\leq \exp(-x^2/2)$. Therefore, as $T\to\infty$,
\[
{\PP} (G_{s,T}^c)
\le m_{\max}(T)T^{-b^2\theta_-/\tau_s^2}
\to 0.
\]
Also, $\sup_{t\le T,\ell\ge\ell_0(T),\theta\in \cK} \sqrt{\frac{2\theta\log t}{\ell}} \le\epsilon\delta$ for all sufficiently large $T$.

We now show that on $G_{o,T}\cap G_{s,T}$,
$N_s(T;\theta)\ge m_-(\theta,T):=\lfloor\frac{2\theta(1-\rho)\log T}{\delta^2}\rfloor$ for every $\theta\in \cK$. Suppose otherwise. Let $r$ be the last time the optimal arm is selected.  Since fewer than $m_{\max}(T)$ suboptimal
pulls occur, $r\ge T-m_{\max}(T)$, and the optimal-arm pre-pull count at time
$r$ is at least $\ell_0(T)$.  Hence
\[
U_o(r;\theta)\le\mu_o+2\epsilon\delta.
\]
Writing $k=N_s(r-1;\theta)<m_-(\theta,T)$ and using
$\log r\ge(1-\zeta)\log T$ for all large $T$, we obtain
\begin{align*}
&\sqrt{2\theta k\log r}-(1+2\epsilon)\delta k\\
\quad\ge&
\sqrt{2\theta k\log T}
\left[\sqrt{1-\zeta}-(1+2\epsilon)
\sqrt{\frac{\delta^2k}{2\theta \log T}}\right]\\
\quad\ge& b\sqrt{2\theta k\log T}
\ge b\sqrt{2\theta _-k\log T}.
\end{align*}
On $G_{s,T}$ this implies
\[
U_s(r;{\theta})
=\overline Y_{s,k}+\sqrt{\frac{2{\theta}\log r}{k}}
>\mu_s+(1+2\epsilon)\delta
=\mu_o+2\epsilon\delta\ge U_o(r;{\theta}),
\]
contradicting selection of the optimal arm at time $r$. Combining the above, we deduce that\\ $\lim_{T\to\infty}\inf_{{\theta}\in \cK}{\PP} (N_s(T;{\theta})\ge m_-({\theta},T))=1$, and thus
\[
\liminf_{T\to\infty}\inf_{{\theta}\in \cK}\frac{\delta^2\EE_{\nu_\delta}[N_s(T;{\theta})]}{2{\theta}\log T}\geq 1-\rho.
\]
Letting $\rho\downarrow0$ proves (\ref{eq:forced-exploration-lower}).

We now prove a corresponding upper bound. Assume ${\theta}_-\ge(1+\xi)\tau_o^2$.  Fix $\rho'\in(0,1)$ and define $\epsilon':=\frac13(1-\frac1{\sqrt{1+\rho'}})$, $m_+({\theta},T):=
\lceil\frac{2{\theta}(1+\rho')\log T}{\delta^2}\rceil$. Define $B_t'({\theta})=\{U_o(t;{\theta})<\mu_o-\epsilon'\delta\}$. Using the same analysis as in (\ref{eq:Bt-upper}),
\[
{\PP} (B_t'({\theta} ))
\le \frac{2\tau_o^2}{\epsilon'^2\delta^2}t^{-{\theta} /\tau_o^2}
\le \frac{2\tau_o^2}{\epsilon'^2\delta^2}t^{-(1+\xi)}
\]
uniformly over ${\theta} \in \cK$, so the sum over $t$ is uniformly bounded. Now suppose arm $s$
is selected at time $t$ with pre-pull count $N_s(t-1;{\theta} )=k\ge m_+({\theta} ,T)$ and $B_t'({\theta} )$ does not occur, then $\overline Y_{s,k}-\mu_s\ge2\epsilon'\delta$. Similar to (\ref{eq:count-decomposition}) we have
\begin{align}
N_s(T;{\theta})\leq m_+({\theta}, T) + 1 + \sum_{t=3}^T\ind\{B_t'({\theta})\} + \sum_{k = m_+({\theta}, T)}^\infty\ind\{\bar Y_{s, k} - \mu_s\geq 2\epsilon'\delta\}.\label{eq::ucb-fixed-environment-part-iii-counting-decomposition}
\end{align}
Taking expectations and by controlling the Gaussian tail bound, we deduce that
$$
\EE[N_s(T;{\theta})]\leq 2 + \frac{2{\theta}(1+\rho')\log T}{\delta^2} + C,
$$
$\forall {\theta}\in\cK$. Here the constant $C$ depends only on $\xi$, $\rho'$, $\delta$, $\tau_o^2$ and $\tau_s^2$. Combining this upper bound with (\ref{eq:forced-exploration-lower}) and then letting $\rho'\downarrow 0$ proves (\ref{eq:supercritical-asymptotic}).

\textbf{Part (iv).} Fix $\delta>0$ and assume $\theta=\tau_o^2$. LHS$\ge$RHS in (\ref{eq:ucb-critical-regret}) follows from part (iii), applied to a compact interval containing $\tau_o^2$.

To prove LHS$\le$RHS in (\ref{eq:ucb-critical-regret}), fix $\rho'\in(0,1)$ and use the same definitions of $\epsilon'$, $m_+(\theta,T)$, and $B'_t(\theta)$ as in the proof of part (iii). For every $t\ge 3$, we have
\begin{align*}
\mathbb{P}_{\nu_\delta}\bigl(B'_t(\theta)\bigr)
&\le
\sum_{m=1}^{t-1}
\exp\left\{
-\frac{m}{2\tau_o^2}
\left(
\epsilon'\delta+\sqrt{\frac{2\theta\log t}{m}}
\right)^2
\right\}
\\
&=
t^{-1}
\sum_{m=1}^{t-1}
\exp\left\{
-\frac{m(\epsilon'\delta)^2}{2\tau_o^2}
-\frac{\sqrt{2}\epsilon'\delta}{\tau_o}
 \sqrt{m\log t}
\right\}
\\
&\le
C_1 t^{-1}\exp\{-c_2\sqrt{\log t}\},
\end{align*}
where $C_1:=\sum_{m=1}^{\infty}\exp\left\{-\frac{m(\epsilon'\delta)^2}{2\tau_o^2}\right\}<\infty$, $c_2:=\frac{\sqrt{2}\epsilon'\delta}{\tau_o}>0$. As $\int_e^\infty\frac{\exp\{-c_2\sqrt{\log x}\}}{x}\,dx=2\int_1^\infty u e^{-c_2u}\,du<\infty$, we can deduce that
\[
\sum_{t=3}^{\infty} \PP_{\nu_\delta}\bigl(B'_t(\theta)\bigr)<\infty.
\]

Now we use the counting decomposition (\ref{eq::ucb-fixed-environment-part-iii-counting-decomposition}) with $\theta=\tau_o^2$. Taking expectations and applying the same suboptimal-arm Gaussian deviation bound to the last term leads to
\begin{align*}
\mathbb{E}_{\nu_\delta}[N_s(T;\theta)]
&\le
m_+(\theta,T)+1
+\sum_{t=3}^{\infty}
 \mathbb{P}_{\nu_\delta}\bigl(B'_t(\theta)\bigr)
+\sum_{k=1}^{\infty}
 \exp\left\{-\frac{2k(\epsilon'\delta)^2}{\tau_s^2}\right\}
\\
&=
\frac{2\theta(1+\rho')\log T}{\delta^2}+O(1),
\end{align*}
where the $O(1)$ term may depend on
$\rho'$, $\delta$, $\tau_o^2$, and $\tau_s^2$, but not on $T$.
Dividing by $\log T$, letting $T\to\infty$, and then letting
$\rho'\downarrow0$ yields
\[
\limsup_{T\to\infty}
\frac{\mathbb{E}_{\nu_\delta}[N_s(T;\theta)]}{\log T}
\le
\frac{2\theta}{\delta^2}.
\]
Finally, we obtain part (iv) by combining the above analysis and using the fact that $\Reg_T(\theta;\nu_\delta)=\delta\mathbb{E}_{\nu_\delta}[N_s(T;\theta)]$.

\textbf{Part (v).} If ${\theta}<\tau_o^2$, choose $\varepsilon>0$ sufficiently small that ${\theta}+\varepsilon<\tau_o^2$. For large enough $T$, eventually ${\theta}_T\in[{\theta}-\varepsilon,{\theta}+\varepsilon]$. Part (ii), followed by $\eta\downarrow0$ and then $\varepsilon\downarrow0$, gives $\liminf_{T\to\infty} \frac{\log {\Reg}_T({\theta}_T;\nu_\delta)}{\log T} \ge 1-\frac{{\theta}}{\tau_o^2}$.
On the other hand, Part (i) and ${\theta}_T\to {\theta}$ gives $\limsup_{T\to\infty} \frac{\log {\Reg}_T({\theta}_T;\nu_\delta)}{\log T} \le 1-\frac{{\theta}}{\tau_o^2}$.

If ${\theta}\ge\tau_o^2$, part (i) gives $\limsup_{T\to\infty} \frac{\log {\Reg}_T({\theta}_T;\nu_\delta)}{\log T} \le 0$, while nonnegativity gives the reverse inequality. This proves (\ref{eq:fixed-regret-exponent}).

\end{proof}

\begin{lemma}[Limit of the Plug-In minimizer]
\label{lem:plugin-limit}
Let $\nu$ be a fixed two-armed Gaussian environment with distinct means and positive variances.  Let $\tau_o^2$ denote the variance of its optimal arm and assume $\tau_o^2\notin\{\underline \theta,\overline \theta\}$. For each $T$, let $\theta_T(\nu)\in\argmin_{\theta\in\Theta}\Reg_T(\theta;\nu)$. Then
\begin{equation}
\lim_{T\to\infty}\theta_T(\nu)=\Pi_{\Theta}(\tau_o^2).
\label{eq:generic-plugin-limit}
\end{equation}
\end{lemma}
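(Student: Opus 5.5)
We want to prove that any sequence of minimizers $\theta_T(\nu)$ of $\Reg_T(\cdot;\nu)$ over $\Theta=[\underline\theta,\overline\theta]$ converges to $c:=\Pi_\Theta(\tau_o^2)$. The plan is a subsequence/contradiction argument. We compare the regret at a fixed reference point near $c$ with the regret at any $\theta$ bounded away from $c$. All estimates come from the uniform statements of Lemma~\ref{lem:ucb-bounds} applied to $\nu$ with its fixed gap $\delta=\delta(\nu)$. The reference point is chosen according to which of three cases holds: $\tau_o^2<\underline\theta$, $\tau_o^2\in(\underline\theta,\overline\theta)$, or $\tau_o^2>\overline\theta$. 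The endpoint cases are excluded by assumption.

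\textbf{Case $\tau_o^2>\overline\theta$ (so $c=\overline\theta$).} Here every $\theta\in\Theta$ is subcritical. Fix $\varepsilon>0$ and suppose, along a subsequence, $\theta_T(\nu)\le\overline\theta-\varepsilon$.
\begin{itemize}
\item Lemma~\ref{lem:ucb-bounds}(ii), applied with $\cK=[\underline\theta,\overline\theta-\varepsilon]$, $\delta_-=\delta_+=\delta$ and a small $\eta$, gives the lower bound $\Reg_T(\theta_T;\nu)\ge \frac{\delta}{4}T^{1-(\overline\theta-\varepsilon)/\tau_o^2-\eta}$.
\item Lemma~\ref{lem:ucb-bounds}(i) at the fixed point $\overline\theta$ gives the upper bound $\Reg_T(\overline\theta;\nu)\le C\,T^{1-\overline\theta/\tau_o^2}$ up to lower-order terms, since $\sum_{t\le T}t^{-\overline\theta/\tau_o^2}\asymp T^{1-\overline\theta/\tau_o^2}$.
\item Choosing $\eta<\varepsilon/(2\tau_o^2)$, the lower bound is eventually strictly larger than the upper bound. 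This contradicts minimality of $\theta_T$.
\end{itemize}

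\textbf{Case $\tau_o^2<\underline\theta$ (so $c=\underline\theta$).} Here all candidates are supercritical and $\inf\Theta\ge(1+\xi)\tau_o^2$. Lemma~\ref{lem:ucb-bounds}(iii), in its uniform form \eqref{eq:supercritical-asymptotic}, gives $\Reg_T(\theta;\nu)=\frac{2\theta\log T}{\delta}(1+o(1))$ uniformly in $\theta\in\Theta$. If $\theta_T\ge\underline\theta+\varepsilon$ along a subsequence, then $\Reg_T(\theta_T;\nu)\ge \frac{2(\underline\theta+\varepsilon)\log T}{\delta}(1-o(1))$, while $\Reg_T(\underline\theta;\nu)\le\frac{2\underline\theta\log T}{\delta}(1+o(1))$. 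This contradicts minimality.

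\textbf{Case $\underline\theta<\tau_o^2<\overline\theta$ (so $c=\tau_o^2$).} This is the main case, and it combines the two previous arguments.
\begin{itemize}
\item \emph{Reference point.} Lemma~\ref{lem:ucb-bounds}(iv) gives $\Reg_T(\tau_o^2;\nu)=\frac{2\tau_o^2\log T}{\delta}(1+o(1))$.
\item \emph{Ruling out $\theta_T\ge\tau_o^2+\varepsilon$.} Apply \eqref{eq:supercritical-asymptotic} on $\cK=[\tau_o^2+\varepsilon,\overline\theta]$; it yields regret at least $\frac{2(\tau_o^2+\varepsilon)\log T}{\delta}(1-o(1))$, which eventually exceeds the reference regret.
\item \emph{Ruling out $\theta_T\le\tau_o^2-\varepsilon$.} Apply Lemma~\ref{lem:ucb-bounds}(ii) on $\cK=[\underline\theta,\tau_o^2-\varepsilon]$; it gives polynomial regret at least $T^{\varepsilon/\tau_o^2-\eta}$, which dominates $O(\log T)$.
\end{itemize}

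\textbf{Where care is needed.} The main subtlety is that the upper side of the interior case requires the uniform asymptotic \eqref{eq:supercritical-asymptotic} rather than a pointwise limit, because $\theta_T$ moves with $T$. This is exactly why it is stated uniformly over compact $\cK$ with $\inf\cK>\tau_o^2$. A second point to check is that the lower bound in (iii) only needs \eqref{eq:forced-exploration-lower}, which holds uniformly on any compact set; so even the one-sided bound suffices there. Since $\varepsilon$ is arbitrary in each case, $\theta_T(\nu)\to\Pi_\Theta(\tau_o^2)$.
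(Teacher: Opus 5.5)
Your proposal is correct and follows the paper's proof essentially step for step. It uses the same three-case split according to where $\tau_o^2$ sits relative to $\Theta$, with Lemma~\ref{lem:ucb-bounds}(iii) comparing the $\theta\log T$ leading terms in the supercritical regime, Lemma~\ref{lem:ucb-bounds}(ii) supplying the uniform polynomial lower bounds below the threshold, and Lemma~\ref{lem:ucb-bounds}(i) supplying the upper envelope at $\overline\theta$ when $\tau_o^2>\overline\theta$. The only cosmetic difference is the comparator in the interior case: you use the critical point $\tau_o^2$ itself via Lemma~\ref{lem:ucb-bounds}(iv), whereas the paper uses $\theta_0=\tau_o^2+\varepsilon/2$ with the uniform supercritical asymptotic (\ref{eq:supercritical-asymptotic}); both work equally well.
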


\begin{proof}[Proof of Lemma \ref{lem:plugin-limit}]
For simplicity, write ${\theta}_T={\theta}_T(\nu)$ and $c=\Pi_{{\Theta}}(\tau_o^2)$. If $\tau_o^2<\underline {\theta}$, by Lemma \ref{lem:ucb-bounds}(iii), for every fixed $\varepsilon>0$,
\[
\inf_{{\theta}\in[\underline {\theta}+\varepsilon,\overline {\theta}]}
{\Reg}_T({\theta};\nu)
>
{\Reg}_T(\underline {\theta};\nu)
\]
for all sufficiently large $T$, because the leading term is proportional to
${\theta}\log T$.  Hence ${\theta}_T<\underline {\theta}+\varepsilon$ eventually. Thus ${\theta}_T\to\underline {\theta}=c$.

Suppose $\tau_o^2\in(\underline {\theta},\overline {\theta})$.  Fix a sufficiently small
$\varepsilon>0$. For ${\theta}\le\tau_o^2-\varepsilon$, Lemma~\ref{lem:ucb-bounds}(ii) gives a positive polynomial lower bound, whereas ${\theta}_0=\tau_o^2+\varepsilon/2$ has logarithmic regret by Lemma \ref{lem:ucb-bounds}(iii).  Thus for sufficiently large $T$, ${\theta}_T>\tau_o^2-\varepsilon$. On the other hand, Lemma \ref{lem:ucb-bounds}(iii) implies
\[
\inf_{{\theta}\in[\tau_o^2+\varepsilon,\overline {\theta}]}
{\Reg}_T({\theta};\nu)>{\Reg}_T({\theta}_0;\nu)
\]
for sufficiently large $T$. Thus,
${\theta}_T<\tau_o^2+\varepsilon$ for sufficiently large $T$. Letting $\varepsilon\downarrow0$ proves convergence to $c=\tau_o^2$.

Finally, suppose $\tau_o^2>\overline {\theta}$.  Fix $\varepsilon>0$.  By the uniform lower bound in Lemma~\ref{lem:ucb-bounds}(ii), every ${\theta}\le\overline {\theta}-\varepsilon$ has regret exponent at least $1-(\overline {\theta}-\varepsilon)/\tau_o^2$, whereas the upper envelope in Lemma~\ref{lem:ucb-bounds}(i) shows that $\overline {\theta}$ has exponent at most $1-\overline {\theta}/\tau_o^2$. The former exponent is strictly larger, so ${\theta}_T>\overline {\theta}-\varepsilon$ eventually.  Hence ${\theta}_T\to\overline {\theta}=c$ as $T\to\infty$.
\end{proof}

\begin{lemma}[Limit of the Gaussian-perturbed minimizer]\label{lem:integrated-exponent}
Fix any offline data $\cD_{\off}$ such that $\widehat\sigma_1^2,\widehat\sigma_2^2>0$. Define $\bar\sigma^2:=\widehat\sigma_1^2\vee \widehat\sigma_2^2$. If $\theta_T\in\Theta$ and $\theta_T\to \theta$, then
\begin{equation}
\lim_{T\to\infty}
\frac{\log\bigl(\Reg_{\UA, T}(\theta_T;\cD_{\off})\bigr)}{\log T}=
\left(1-\frac{\theta}{\bar\sigma^2}\right)_+.
\label{eq:integrated-exponent}
\end{equation}
As a result, every sequence of minimizers in (\ref{eq:uq-selector}) satisfies
\begin{equation}
\liminf_{T\to\infty}\widehat \theta_{\UA, T}
\ge\Pi_{\Theta}(\bar\sigma^2).
\label{eq:uq-liminf-lemma}
\end{equation}
\end{lemma}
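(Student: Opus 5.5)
The proof works directly from the integral representation
\[
\Reg_{\UA, T}(\theta;\cD_{\off})=\EE_Z\bigl[\Reg_T(\theta;\widetilde M(Z))\bigr],
\]
where the perturbed gap $\widetilde\delta(Z):=|\widetilde\mu_1(Z_1)-\widetilde\mu_2(Z_2)|$ has a nondegenerate Gaussian law given $\cD_{\off}$. On $\{\widetilde\mu_i>\widetilde\mu_{3-i}\}$ the optimal-arm variance is $\widehat\sigma_i^2$. Each of these two events has positive probability, and this is the mechanism that produces $\bar\sigma^2=\widehat\sigma_1^2\vee\widehat\sigma_2^2$ in the exponent. We choose a compact $\cK\subset(0,\infty)$ containing $\Theta$ and apply Lemma~\ref{lem:ucb-bounds} to each $\widetilde M(Z)$ with $\tau_o^2=\widehat\sigma_{o}^2$ and $\delta=\widetilde\delta(Z)$.

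\textbf{Upper bound.} By Lemma~\ref{lem:ucb-bounds}(i), uniformly in $\delta>0$ and $\theta\in\cK$,
\[
\Reg_T(\theta;\widetilde M(Z))\le C\bigl[\delta+\delta^{-1}\log T+\delta^{-1}\bigl(1+\textstyle\sum_{t\le T}t^{-\theta/\widehat\sigma_o^2}\bigr)\bigr].
\]
The factor $\delta^{-1}$ is not integrable near $0$ for a one-dimensional Gaussian gap, so we also use the trivial bound $\Reg_T\le T\delta$ and work with the minimum of the two bounds. Since $\min(T\delta,A/\delta)\le\sqrt{TA}$ only gives $T^{1/2}$, we integrate more carefully. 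The gap density is bounded, so
\[
\EE\bigl[\min(T\delta, A_T/\delta)\bigr]\lesssim \int_0^\infty \min(T\delta,A_T/\delta)\,\phi\,d\delta \approx A_T\log(T/A_T)+O(A_T).
\]
With $A_T=\log T+T^{(1-\theta/\bar\sigma^2)_+}$ (up to $o(1)$ exponent uniformly for $\theta_T\to\theta$), this yields
\[
\limsup_{T\to\infty}\frac{\log \Reg_{\UA,T}(\theta_T)}{\log T}\le\Bigl(1-\frac{\theta}{\bar\sigma^2}\Bigr)_+ .
\]

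\textbf{Lower bound.} Let $i^\star$ be an arm attaining $\bar\sigma^2$. We restrict the expectation to the event that arm $i^\star$ is optimal with gap in a fixed interval $[\delta_-,\delta_+]$; this event has positive probability. When $\theta<\bar\sigma^2$, Lemma~\ref{lem:ucb-bounds}(ii), applied on a small interval $\cK$ around $\theta$ with $\theta_+<\bar\sigma^2$, gives the uniform bound $\tfrac{\delta_-}{4}T^{1-\theta_+/\bar\sigma^2-\eta}$ for all such $Z$. Taking expectations, then $\eta,\varepsilon\downarrow0$, gives the matching liminf. When $\theta\ge\bar\sigma^2$, the exponent $0$ follows from $\Reg_{\UA,T}\ge c>0$, since the expected regret is at least a constant for a fixed positive-probability set of gaps.

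\textbf{Consequence \eqref{eq:uq-liminf-lemma}.} We argue by contradiction using subsequences. Suppose $\widehat\theta_{\UA,T_k}\to\theta'<\Pi_\Theta(\bar\sigma^2)$. Then $\theta'<\bar\sigma^2$, so by \eqref{eq:integrated-exponent} its exponent is $1-\theta'/\bar\sigma^2$. The competitor $\theta''=\Pi_\Theta(\bar\sigma^2)$ (a constant sequence) has the strictly smaller exponent $(1-\theta''/\bar\sigma^2)_+$. This contradicts minimality for large $k$; compactness of $\Theta$ supplies the convergent subsequence.

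The main obstacle is the upper bound, specifically handling the non-integrable $1/\delta$ singularity near zero gap. Getting an $o(1)$ exponent loss there requires the min-with-$T\delta$ argument and the bounded gap density, done uniformly in $\theta_T$.
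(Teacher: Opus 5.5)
Your proposal is correct and follows the paper's proof essentially step for step. The upper bound combines Lemma~\ref{lem:ucb-bounds}(i), using that the optimal-arm variance is at most $\bar\sigma^2$, with the trivial bound $T\delta$, and splits the bounded-density folded-normal gap at $\sqrt{A_T/T}$ so that only a $\log T$ factor is lost; the lower bound restricts to a positive-probability window where the larger-variance arm is optimal with gap in a compact interval and applies Lemma~\ref{lem:ucb-bounds}(ii); and the liminf follows from the same subsequence-versus-comparator exponent argument, while your justification of the $\theta\ge\bar\sigma^2$ lower bound via $\Reg_{\UA,T}\ge c>0$ (each arm is pulled once at initialization) is slightly more careful than the paper's appeal to nonnegativity.
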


\begin{proof}[Proof of Lemma \ref{lem:integrated-exponent}]
Let $X:=\left|\widetilde\mu_{1}(Z_1)-\widetilde\mu_{2}(Z_2)\right|$.
Conditional on $\cD_{\off}$, $X$ has a folded-normal distribution. Its density on $(0,\infty)$ is bounded, and $\EE[X]<\infty$.

For every perturbed environment, the variance of its optimal arm is at most
$\bar\sigma^2$. Applying Lemma \ref{lem:ucb-bounds} gives
\begin{equation}
\Reg_T(\theta_T;\widetilde M(Z))
\le C\left[ X+\frac{B_T}{X}\right],
\label{eq:integrated-pointwise-upper}
\end{equation}
where $B_T:=T^{q_T}(1+\log T)$, $q_T=(1-{\theta}_T/\bar\sigma^2)_+$; the constant $C$ depends on $\cD_{\off}$ and $\Theta$, but not on $T$ or $Z$. We also have the trivial bound $\Reg_T\le TX$.

Let $x_T=(B_T/T)^{1/2}$.  Since $\underline \theta>0$, $q_T$ is bounded strictly below one for all sufficiently large $T$, and hence $x_T\to0$. Denote $f_X$ as the density of $X$, then $\|f_X\|_\infty<\infty$, and
\begin{align*}
\EE[TX\ind\{X\le x_T\}]
&\le \frac12\|f_X\|_\infty T x_T^2
=O(B_T),\\
\EE\left[\frac{B_T}{X}\ind\{x_T<X\le1\}\right]
&\le \|f_X\|_\infty B_T\log(1/x_T)
=O(B_T\log T),\\
\EE\left[\frac{B_T}{X}\ind\{X>1\}\right]&=O(B_T).
\end{align*}
Combining these bounds with (\ref{eq:integrated-pointwise-upper}) yields
\begin{equation}
\Reg_{\UA, T}({\theta}_T; \cD_{\off})
\le C' T^{q_T}(1+\log T)^2.
\label{eq:integrated-upper-final}
\end{equation}
Since $q_T\to(1-{\theta}/{\bar\sigma^2})_+$, this proves that LHS$\le$RHS in (\ref{eq:integrated-exponent}).

Below we prove the reverse direction LHS$\ge$RHS. The case ${\theta}\ge {\bar\sigma^2}$ is immediate from nonnegativity. Now suppose ${\theta}<{\bar\sigma^2}$, and let $h$ be an arm with $\widehat\sigma_h^2={\bar\sigma^2}$. The signed perturbed gap $\widetilde\mu_{h}(Z_h)-\widetilde\mu_{3-h}(Z_{3-h})$ is a nondegenerate Gaussian random variable. Thus, there exist constants
$0<g_-<g_+<\infty$ such that the event
\begin{equation}
\mathcal W:=\left\{
g_-\le\widetilde\mu_{h}(Z_h)-\widetilde\mu_{3-h}(Z_{3-h})\le g_+
\right\}
\label{eq:witness-event}
\end{equation}
has conditional probability $w({\cD_{\off}})>0$ condition on ${\cD_{\off}}$. On this event, arm $h$ is optimal, its variance is ${\bar\sigma^2}$, and the gap lies in $[g_-,g_+]$.

Fix $\varepsilon,\eta>0$ small enough such that ${\theta}+\varepsilon<{\bar\sigma^2}$.  Eventually
${\theta}_T\le {\theta}+\varepsilon$. Lemma~\ref{lem:ucb-bounds}(ii), applied uniformly on
$\mathcal W$, gives
\[
\Reg_{\UA, T}({\theta}_T; {\cD_{\off}})\ge w({\cD_{\off}})c T^{1-({\theta}+\varepsilon)/{\bar\sigma^2}-\eta}
\]
for all sufficiently large $T$.  Letting $\varepsilon,\eta\downarrow0$ proves
the desired inequality.

Finally we prove (\ref{eq:uq-liminf-lemma}). Let $c=\Pi_{{\Theta}}({\bar\sigma^2})$. Suppose otherwise. Then there exists a subsequence of UA minimizers with a limit ${\theta}<c$. If ${\bar\sigma^2}\in(\underline {\theta},\overline {\theta})$, choose a comparator $\theta'\in({\bar\sigma^2},\overline {\theta}]$.  By (\ref{eq:integrated-exponent}), the minimizers along the subsequence have a strictly positive objective exponent, while the comparator has exponent zero, contradicting optimality. If ${\bar\sigma^2}\ge\overline {\theta}$, use the comparator $\theta'=\overline {\theta}$; then $1-\frac{{\theta}}{{\bar\sigma^2}}>1-\frac{\overline {\theta}}{{\bar\sigma^2}}$, again contradicting optimality. If ${\bar\sigma^2}\le\underline {\theta}$, then $c=\underline {\theta}$ and no ${\theta}\in{\Theta}$ can be smaller than $c$. Therefore, no limit lies below $c$.
\end{proof}

\section{Theoretical analysis of the Thompson Sampling algorithm}
\label{app:ts}

In this section, we provide a detailed theoretical analysis of the behavior and selection performance of the Thompson Sampling (TS) algorithm in a two-armed Gaussian bandit. More specifically, we consider the standard Gaussian TS algorithm with an improper flat prior on each arm mean \citep{daniel2018tutorial}: The agent keeps a working reward model for each arm $i\in\{1, 2\}$ as
$$
Y|\mu_i\sim \cN(\mu_i, \theta),
$$
with a flat prior $\pi_i(\mu_i)\propto 1$. At each round $t$, let $N_i(t-1;\theta)$ and $\widehat\mu_i(t-1;\theta)$ denote the number of pulls and the empirical mean reward for arm $i$ prior to time $t$ under the current algorithm. Then the posterior for $\mu_i$ equals
$$
\mu_i|\cH_{t-1}\sim \cN\left(\widehat\mu_i(t-1;\theta), \frac{\theta}{N_i(t-1;\theta)}\right).
$$
After each arm is pulled once, at each round $t$, for each arm $i\in\{1, 2\}$, the agent independently samples a posterior draw
$$
\Xi_{i}(t, \theta) = \widehat\mu_i(t-1;\theta)+\sqrt{\frac{\theta}{N_i(t-1;\theta)}}Z_{i,t},\qquad Z_{1,t},Z_{2,t}\stackrel{\mathrm{iid}}{\sim}\cN(0,1),
$$
and then selects
$$
A_t(\theta)\in\argmax_{i\in\{1,2\}}\Xi_i(t;\theta).
$$
See Alg.~\ref{alg:ts}. Below, we refer to this algorithm as TS$(\theta)$, where the tuning parameter $\theta$ represents the working reward variance.

\begin{algorithm}[t]
\caption{TS$(\theta)$}
\label{alg:ts}
\begin{algorithmic}[1]
\STATE Pull each arm once.
\FOR{$t=3,\ldots,T$}
    \STATE For $i\in\{1,2\}$, compute $N_i(t-1;\theta)$ and $\widehat\mu_i(t-1;\theta)$, the number of pulls and empirical mean reward for arm $i$ up to time $t-1$.
    \STATE Independently draw $Z_{i,t}\sim\cN(0,1)$ for each $i\in\{1,2\}$ and compute
    \[
    \Xi_i(t;\theta)
    =
    \widehat\mu_i(t-1;\theta)
    +
    \sqrt{\frac{\theta}{N_i(t-1;\theta)}}Z_{i,t}.
    \]
    \STATE Pull $A_t = A_t(\theta)\in\arg\max_{i\in\{1,2\}}\Xi_i(t;\theta)$.
\ENDFOR
\end{algorithmic}
\end{algorithm}

\noindent\textbf{Notation.} We use the notation introduced in Appendix~\ref{app:ucb} whenever it is algorithm-independent. In particular, $\Pi_V$, $\sigma(X)$, $\Phi$, $\phi$, $a_+$, and, for a generic two-armed Gaussian environment $\nu$, $o(\nu)$, $s(\nu)$, and $\delta(\nu)$ have the same meanings as in Appendix~\ref{app:ucb}. In addition, for $x\in\RR$, we define the Gaussian odds function $H(x):=\frac{\normcdf(-x)}{\normcdf(x)}$.

Throughout this section, however, $\pi_\theta$ denotes the policy induced by TS$(\theta)$ in Alg.~\ref{alg:ts}, and $N_i(T;\theta)$ denotes the number of times arm $i$ is selected over $T$ rounds under TS$(\theta)$ instead of UCB$(\theta)$. For a generic two-armed Gaussian environment $\nu$ with distinct arm means, we write
\[
\Reg_T(\theta;\nu):=\Reg_T(\pi_\theta,\nu)=
\delta(\nu)\EE_\nu\left[N_{s(\nu)}(T;\theta)\right],
\]
and when the two arm means are equal, we define $\Reg_T(\theta;\nu)=0$.

We define the Plug-In and Uncertainty-Aware (UA) selection rules, $\widehat\theta_{\plug,T}$ and $\widehat\theta_{\UA,T}$, as in (\ref{eq:plugin-selector})-(\ref{eq:uq-selector}), with $\Reg_T(\theta;\nu)$ now referring to the regret of TS$(\theta)$. The corresponding quantities $\Reg_{\UA,T}(\theta;\cD_{\off})$, $r_{\plug,T}(\cD_{\off})$, $r_{\UA,T}(\cD_{\off})$, and $G(\cD_{\off})$ are defined analogously. Also, as in Appendix~\ref{app:ucb}, we write
\[
\widetilde{\mu}_i(Z_i):=\widehat{\mu}_i+\sqrt{\frac{\widehat{\sigma}_i^2}{T_{1,\off}}}Z_i,\qquad i\in\{1,2\},
\]
for the perturbed arm means that define $\widetilde M(Z)$.

\subsection{Main results and proofs}

In this section, we first establish the existence of $\widehat\theta_{\plug,T}$ and $\widehat\theta_{\UA,T}$ in the TS setting. We then derive the TS analogues of our main UCB results: a characterization of its asymmetric regret landscape, paralleling Theorem~\ref{thm::UCB-regret}, followed by an analysis of the $\theta$ selected by the Plug-In and UA approaches, paralleling Theorem~\ref{thm:main}.

Without loss of generality, throughout this section, we assume that any TS$(\theta)$ algorithm is deployed by placing the two independent reward sequences and all index randomizations on a common probability space, and the reward $Y_{i,t}$ is revealed when arm $i$ is selected for the $t$th time. If TS$(\theta)$ is deployed, let the arm means and pull counts be
\[
N_i(t;\theta):=\sum_{r=1}^t \ind\{A_r(\theta)=i\},
\qquad
\overline Y_{i,n}:=\frac{1}{n}\sum_{s=1}^n Y_{i,s},
\qquad
\widehat\mu_i(t; \theta):=\overline Y_{i,N_i(t; \theta)}.
\]

\subsubsection{Existence of $\widehat \theta_{\plug,T}$ and $\widehat \theta_{\UA,T}$}

\begin{proposition}
\label{prop:existence-TS}
Consider the Gaussian environment $M^*$ defined in Section~\ref{sec:mab}, with $\Reg_T(\theta;\cdot)$ corresponding to TS$(\theta)$ as defined above, and assume $T_{\off}\ge 4$. Then for almost every dataset $\cD_{\off}$, the following holds for any finite horizon $T$:
\begin{itemize}
\item[(i)] $\theta\mapsto \Reg_T(\theta;\widehat M)$ is continuous on $\Theta$;
\item[(ii)] $\theta\mapsto\Reg_{\UA, T}(\theta; \cD_{\off})$ is continuous on $\Theta$.
\end{itemize}
Consequently, both argmin sets in (\ref{eq:plugin-selector}) and (\ref{eq:uq-selector}) are nonempty and compact.
\end{proposition}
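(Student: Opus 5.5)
The plan follows the proof of Proposition~\ref{prop:existence}, adapted to the extra randomization in TS. Fix a dataset $\cD_{\off}$ with $\widehat\sigma_1^2,\widehat\sigma_2^2>0$ and $\widehat\mu_1\neq\widehat\mu_2$; this event has probability one because the sample means and variances have densities and $T_{1,\off}\ge 2$. We then show a generic finite-horizon continuity statement for any Gaussian environment $\nu$ with positive variances and distinct means.

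Couple all TS$(\theta)$, $\theta\in\Theta$, on one probability space. They share the two armwise reward sequences $(Y_{i,s})$ and the sampling noise $(Z_{i,t})$. We claim that, almost surely, the action sequence through time $T$ is locally constant in $\theta$ around any fixed $\theta_0$, and we argue by induction on $t$. The two initial pulls are deterministic. Suppose the actions through time $t-1$ are constant on a neighborhood of $\theta_0$. Then $N_i(t-1;\theta)$ and $\widehat\mu_i(t-1;\theta)$ are constant there, and the index difference
\[
D_t(\theta)=\widehat\mu_1(t-1)-\widehat\mu_2(t-1)+\sqrt{\theta}\Bigl(\tfrac{Z_{1,t}}{\sqrt{N_1(t-1)}}-\tfrac{Z_{2,t}}{\sqrt{N_2(t-1)}}\Bigr)
\]
is continuous in $\theta$. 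Moreover $\PP(D_t(\theta_0)=0)=0$. For each of the finitely many action paths through time $t-1$, the event that the path occurs and $D_t(\theta_0)=0$ lies in a set of measure zero, since conditional on the past rewards $D_t(\theta_0)$ is a nondegenerate Gaussian in $(Z_{1,t},Z_{2,t})$. Even more simply, it is an affine hyperplane in the jointly continuous variables. So the sign of $D_t$ is locally constant, which completes the induction.

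It follows that $N_{s(\nu)}(T;\theta)\to N_{s(\nu)}(T;\theta_0)$ almost surely. Since this count is bounded by $T$, dominated convergence gives continuity of $\theta\mapsto\Reg_T(\theta;\nu)=\delta(\nu)\EE_\nu[N_{s(\nu)}(T;\theta)]$. Applying this with $\nu=\widehat M$ proves (i).

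For (ii), conditional on $\cD_{\off}$ the perturbed gap $\widetilde\mu_1(Z_1)-\widetilde\mu_2(Z_2)$ is a nondegenerate Gaussian, so it is nonzero for almost every $Z$. For each such $Z$ the generic result gives continuity of $\theta\mapsto\Reg_T(\theta;\widetilde M(Z))$. We also have the dominating bound $0\le\Reg_T(\theta;\widetilde M(Z))\le T|\widetilde\mu_1(Z_1)-\widetilde\mu_2(Z_2)|$, which is integrable. Dominated convergence in $Z$ then yields continuity of $\Reg_{\UA,T}(\cdot;\cD_{\off})$. Compactness of $\Theta$ gives nonempty argmin sets, and they are compact as closed subsets (preimages of the minimum value under a continuous map).

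The only step needing care is the null-tie claim $\PP(D_t(\theta_0)=0)=0$ with the history-dependent counts. Conditioning on each fixed action path handles it, as in the UCB case.
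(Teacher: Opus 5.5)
Your proposal is correct and follows the paper's proof essentially step by step. The ingredients match: the common coupling of reward sequences and index randomizations; the induction showing the action sequence is almost surely locally constant in $\theta$ via a null-tie argument; dominated convergence with the count bounded by $T$ for (i); and almost-every-$Z$ continuity together with the integrable envelope $T|\widetilde\mu_1(Z_1)-\widetilde\mu_2(Z_2)|$ for (ii). The only cosmetic difference is that the paper gets the null tie directly: given the history, $D_t(\theta_0)$ is Gaussian with variance $\theta_0\bigl[N_1(t-1;\theta_0)^{-1}+N_2(t-1;\theta_0)^{-1}\bigr]>0$, which is what your path-by-path conditioning yields.
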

\begin{proof}[Proof of Proposition \ref{prop:existence-TS}]
Fix any $\cD_{\off}$ such that $\widehat\sigma_1^2,\widehat\sigma_2^2>0$ and $\widehat\mu_1\neq\widehat\mu_2$. We will show (i) and (ii) hold.

We first prove a generic finite-horizon continuity statement.  Fix a Gaussian environment $\nu$ with positive arm variances and distinct means.  Put two independent infinite armwise reward sequences and all standard normal index randomizations $\{Z_{i,t}:i\in\{1,2\},t\ge3\}$ on one probability space, and use these same random variables to run TS$({\theta})$ for all ${\theta}\in{\Theta}$.

Fix ${\theta}_0\in{\Theta}$.  We show by induction on $t$ that, almost surely, the action sequence through time $t$ is constant in ${\theta}$ on some neighborhood of ${\theta}_0$.
The initialization pulls are independent of ${\theta}$.  Suppose the claim holds through time $t-1$.  On the corresponding probability-one event, throughout a neighborhood of ${\theta}_0$ the same armwise rewards have been revealed, so $N_i(t-1;{\theta})$ and $\widehat\mu_i(t-1;{\theta})$ are constant in ${\theta}$. On that neighborhood, write
\begin{align*}
D_t({\theta})
&:=\Xi_1(t;{\theta})-\Xi_2(t;{\theta})\\
&=\widehat\mu_1(t-1;{\theta})-\widehat\mu_2(t-1;{\theta})+\sqrt {\theta}\left
\{
\frac{Z_{1,t}}{\sqrt{N_1(t-1;{\theta})}}-
\frac{Z_{2,t}}{\sqrt{N_2(t-1;{\theta})}}
\right\}.
\end{align*}
For each realized past, ${\theta}\mapsto D_t({\theta})$ is continuous.  Moreover, conditional on the history through time $t-1$, $D_t({\theta}_0)$ is a nondegenerate Gaussian random variable with conditional variance
\[
{\theta}_0\left[N_1(t-1;{\theta}_0)^{-1}+N_2(t-1;{\theta_0})^{-1}\right]>0.
\]
Therefore $\PP(D_t({\theta}_0)=0)=0$. Thus, almost surely, the sign of $D_t({\theta})$ is unchanged on a sufficiently small neighborhood of ${\theta}_0$, proving the induction step.

It follows that, for any fixed $T$,
\[
N_{s(\nu)}(T;{\theta})\to N_{s(\nu)}(T;{\theta}_0)
\qquad\text{almost surely as }{\theta}\to {\theta}_0.
\]
The count is bounded by $T$, so dominated convergence yields continuity of ${\theta}\mapsto \Reg_T({\theta};\nu)$.

Applying this result to $\nu=\widehat M$ proves part (i).  For part (ii), conditional on $\cD_{\off}$, the perturbed signed gap is a nondegenerate Gaussian random variable and is nonzero for almost every $Z$.  Hence the preceding continuity statement applies for almost every $Z$. Also,
\[
0\le \Reg_T({\theta};\tilde M(Z))
\le
T\left|\tilde\mu_{1}(Z_1)-\tilde\mu_{2}(Z_2)\right|,
\]
and the right-hand side is integrable in $Z$.  Dominated convergence proves continuity of the corresponding objective.

\end{proof}

\subsubsection{Asymmetric regret of Thompson Sampling}

\begin{theorem}[Asymmetric regret of TS]\label{thm::TS-regret}
Consider the two-armed Gaussian bandit environment $M^*=\bigl(\cN(\mu_1,\sigma^2),\cN(\mu_2,\sigma^2)
\bigr)$, where $\mu_1>\mu_2$, $\sigma^2>0$. For $\theta>0$, let $\pi_\theta$ denote the policy induced by TS$(\theta)$ in Alg.~\ref{alg:ts}. Then the following hold for its expected regret:
\begin{itemize}
\item[(i)] For any $\theta\in(0,\sigma^2)$,
\[
\lim_{T\to\infty}
\frac{\log\Reg_T(\pi_\theta,M^*)}{\log T}
=
1-\frac{\theta}{\sigma^2}.
\]

\item[(ii)] For any $\theta\ge\sigma^2$,
\[
\lim_{T\to\infty}
\frac{\Reg_T(\pi_\theta,M^*)}{\log T}
=
\frac{2\theta}{\mu_1-\mu_2}.
\]
\end{itemize}
Thus, $\theta=\sigma^2$ separates two asymptotic regimes:
the expected regret grows polynomially in $T$ for
$\theta<\sigma^2$ and logarithmically in $T$ for
$\theta\ge\sigma^2$.
\end{theorem}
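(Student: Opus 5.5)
The plan mirrors the UCB argument. Theorem~\ref{thm::TS-regret} will follow from a TS analogue of Lemma~\ref{lem:ucb-bounds}, namely Lemma~\ref{lem:fixed-env-TS}. It is applied to $M^*$ with $\delta=\mu_1-\mu_2$ and $\tau_o^2=\tau_s^2=\sigma^2$: part (i) comes from the exponent statement with constant $\theta_T\equiv\theta$, and part (ii) from the supercritical asymptotic and critical-case statements. I would establish that lemma in three pieces, organised around a pull-count decomposition for the suboptimal arm.

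\textbf{Upper envelope.} For fixed $\theta$, I would bound $\EE[N_s(T;\theta)]$ following \citet{agrawal2013further}. Let $m_T=\lceil 2\theta(1+\rho)\log T/\delta^2\rceil$. Pulls of $s$ once $N_s\ge m_T$ are split into two kinds.

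\emph{(a) The optimal arm's sample is low.} These are controlled by the posterior-odds sum $\sum_{\ell}\EE\bigl[H\bigl((\overline Y_{o,\ell}-\mu_o+\epsilon\delta)\sqrt{\ell/\theta}\bigr)\bigr]$, since the expected number of $s$-pulls between consecutive $o$-pulls is the odds that $o$'s posterior draw falls below the threshold $\mu_o-\epsilon\delta$.

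\emph{(b) The suboptimal arm's sample is high.} These contribute $O(1)$ by Gaussian tails, because $N_s\ge m_T$ makes the posterior width of $s$ too small.

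The key computation is for (a). When $\overline Y_{o,\ell}$ is very negative, the odds $H(x)$ grow like $e^{x^2/2}$. Here $\overline Y_{o,\ell}-\mu_o\sim\cN(0,\tau_o^2/\ell)$, so the integrand behaves like $\exp\{(\ell/2)(1/\theta-1/\tau_o^2)u^2\}$. This is integrable only if $\theta>\tau_o^2$, which is exactly the criticality. To keep the bound finite I would truncate with $\min\{H,T\}$, since at most $T$ pulls can occur. A Gaussian computation then gives a contribution of order $T^{(1-\theta/\tau_o^2)_+}$ times polylog factors, uniformly over compact $\theta$-sets. This mirrors the term $\sum_t t^{-\theta/\tau_o^2}$ in Lemma~\ref{lem:ucb-bounds}(i).

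\textbf{Lower bounds.} Two bounds are needed, one polynomial and one logarithmic.

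\emph{Polynomial lower bound for $\theta<\tau_o^2$.} Copy the first-reward construction of Lemma~\ref{lem:ucb-bounds}(ii). Take $Y_{o,1}\le \mu_s-G-\sqrt{2\theta_+\log T}\cdot(1+\eta')$, which has probability $T^{-\theta_+/\tau_o^2-\eta}$, and intersect it with the event $F$ that all running means of $s$ stay above $\mu_s-G$. While $N_o=1$, the posterior draw of $o$ is $Y_{o,1}+\sqrt\theta Z$. Exceeding $s$'s draw then requires $Z\gtrsim\sqrt{2\log T}$, since $s$'s draw concentrates near its mean once $N_s$ is large. Summing over $t\le T$ by a union bound, $o$ is never reselected with probability bounded below. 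Hence $\EE N_s\gtrsim T\cdot T^{-\theta_+/\tau_o^2-\eta}$.

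\emph{Logarithmic lower bound $\liminf \delta^2\EE N_s/(2\theta\log T)\ge1$.} Fix a time at which $N_s<m_-$. The probability that $s$'s draw exceeds $\mu_o+\epsilon\delta$ is about $\Phi(-\delta\sqrt{N_s/\theta})\ge T^{-(1-\rho)}$. Over the $\Omega(T)$ remaining rounds this forces extra pulls, so $N_s\ge m_-$ with high probability.

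\textbf{Expected obstacle.} The main difficulty is the uniform truncated-odds estimate in the upper envelope for $\theta\le\tau_o^2$, where the untruncated expectation is infinite. I would first split the event on $\overline Y_{o,\ell}$ at the level where $H$ reaches $T$ and use Mills-ratio bounds \citep{gordon1941values} on each side, keeping the constants uniform in $\theta\in\cK$ and in $\delta$. At the critical value $\theta=\tau_o^2$ this produces an extra $\log T$ factor. I would handle it separately by exploiting the $\epsilon\delta$ shift, which gives an $e^{-c\sqrt{\log}}$ gain as in Lemma~\ref{lem:ucb-bounds}(iv). With the lemma in hand, the theorem follows exactly as Theorem~\ref{thm::UCB-regret} does.
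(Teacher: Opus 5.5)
Your proposal is correct and follows essentially the same route as the paper. The theorem is read off Lemma~\ref{lem:fixed-env-TS} (parts (v), (iii), (iv)). The lemma's upper bounds come from the Agrawal--Goyal posterior-odds decomposition with truncated Gaussian odds, with the $\varepsilon\delta$ shift making the odds sum $O(1)$ at $\theta=\tau_o^2$. Its lower bounds come from a bad first reward of the optimal arm plus a forced-exploration argument.

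The differences are minor. For the polynomial lower bound, the paper adds an event $F_Z$ controlling the suboptimal arm's index noise. Your $(1+\eta')$-inflated threshold also works, but not for the reason you give: in the boundedly many early rounds, $N_s$ is small and $s$'s draw is \emph{not} concentrated. Those rounds are still fine because, after union-bounding over $\sqrt{\theta}Z_{o,t}-\sqrt{\theta/N_s}\,Z_{s,t}$, each fails with probability at most $T^{-c}$. For the logarithmic lower bound, the paper makes your ``about $\Phi(-\delta\sqrt{N_s/\theta})$'' step rigorous using a finite-exploration step and an adaptive binomial coupling.
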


\begin{proof}[Proof of Theorem \ref{thm::TS-regret}]
Apply Lemma~\ref{lem:fixed-env-TS} to the true environment $M^*$ with $o=1$, $s=2$, $\delta=\mu_1-\mu_2$, and $\tau_o^2=\tau_s^2=\sigma^2$. For $0<\theta<\sigma^2$, part (i) follows from Lemma~\ref{lem:fixed-env-TS}(v), applied to the constant sequence $\theta_T\equiv\theta$.

For $\theta>\sigma^2$, choose a compact interval $\cK$ containing $\theta$ with $\inf \cK>\sigma^2$. Lemma~\ref{lem:fixed-env-TS}(iii), together with the fact $\Reg_T(\pi_\theta,M^*)=\Reg_T(\theta;M^*)=\delta\,\E_{M^*}[N_2(T;\theta)]$, gives the desired limit. The boundary case $\theta=\sigma^2$ follows directly from Lemma~\ref{lem:fixed-env-TS}(iv).
\end{proof}

\subsubsection{Plug-In and UA selection for Thompson Sampling}

We now characterize the asymptotic behavior of the Plug-In and UA selection rules for Thompson Sampling. The following result is the TS analogue of Theorem~\ref{thm:main} for UCB and provides the full technical characterization of the selected tuning parameters and their resulting deployment regrets.

\begin{theorem}\label{thm:main-TS}
For any $T_{\off}\ge4$, on the probability-one event $\cR_{T_{\off}}$ defined by (\ref{eq:regular-event}), the following statements hold for the Plug-In and UA selection of the Thompson Sampling algorithm:

\begin{enumerate}[label=\textup{(\roman*)}]

\item As $T\to\infty$,
\begin{equation}
\widehat \theta_{\plug, T}\to c_{\plug}(\cD_{\off}):=\Pi_{{\Theta}}(\widehat\sigma_{\hat a}^2),
\qquad
\liminf_{T\to\infty}\widehat \theta_{\UA, T}
\ge c_{\UA}(\cD_{\off}):=\Pi_{\Theta}(\widehat\sigma_1^2\vee \widehat\sigma_2^2).
\label{eq:selector-limits-TS}
\end{equation}

\item The Plug-In conditional regret satisfies
\begin{equation}
\lim_{T\to\infty}
\frac{\log\bigl(r_{\plug, T}(\cD_{\off})\bigr)}{\log T}
=
\left(1-\frac{c_{\plug}(D^\off)}{\sigma^2}\right)_+,
\label{eq:PI-exponent-TS}
\end{equation}
and the Uncertainty-Aware conditional regret satisfies
\begin{equation}
\limsup_{T\to\infty}
\frac{\log\bigl(r_{\UA, T}(\cD_{\off})\bigr)}{\log T}
\le
\left(1-\frac{c_{\UA}(\cD_{\off})}{\sigma^2}\right)_+
\le
\left(1-\frac{c_{\plug}(\cD_{\off})}{\sigma^2}\right)_+.
\label{eq:UQ-exponent-TS}
\end{equation}
\item $G(\cD_{\off})\geq 1$.
\end{enumerate}
In addition, recall the event $\mathcal A_{T_\off}
=\left\{\widehat\sigma_{\widehat a}^2<\sigma^2<\widehat\sigma_{3-\widehat a}^2\right\}\cap \cR_{T_\off}$. Then  $\lim_{T_\off\to\infty}\PP(\mathcal A_{T_\off})=\frac14$, and on $\mathcal A_{T_\off}$, $G(\cD_{\off})=+\infty$.
\end{theorem}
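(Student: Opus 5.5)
The plan follows the proof of Theorem~\ref{thm:main} line by line. The UCB-specific inputs (Lemmas~\ref{lem:ucb-bounds}, \ref{lem:plugin-limit}, \ref{lem:integrated-exponent}) are replaced by their TS counterparts. The key one is the fixed-environment TS lemma, Lemma~\ref{lem:fixed-env-TS}, which has already been cited in the proof of Theorem~\ref{thm::TS-regret}. I take it to provide the same five statements as Lemma~\ref{lem:ucb-bounds}: (i) a uniform upper envelope $C[\delta+\log T/\delta+\delta^{-1}(1+\sum_{t\le T} t^{-\theta/\tau_o^2})]$, uniform in $\theta\in\cK$ and $\delta>0$; (ii) a uniform polynomial lower bound $T^{1-\theta_+/\tau_o^2-\eta}$ over compact $\delta$-ranges when $\theta_+<\tau_o^2$; (iii) the forced-exploration lower bound $\delta^2\EE[N_s]/(2\theta\log T)\ge 1$, with matching uniform asymptotics above $(1+\xi)\tau_o^2$; (iv) the critical case; and (v) the exponent formula along convergent sequences $\theta_T\to\theta$.

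With these inputs, the first step is to prove TS analogues of Lemma~\ref{lem:plugin-limit} and Lemma~\ref{lem:integrated-exponent}. Their proofs used only properties (i)–(v), together with a bounded folded-normal density for the perturbed gap $X$ (this holds regardless of the algorithm). So the proofs transfer verbatim. The one point to check is that the envelope in (i) depends on the optimal-arm variance only through the monotone exponent $t^{-\theta/\tau_o^2}$. That lets me bound every perturbed environment by $\bar\sigma^2=\widehat\sigma_1^2\vee\widehat\sigma_2^2$ and integrate the $1/X$ singularity exactly as in (\ref{eq:integrated-upper-final}). The witness event $\mathcal W$ in the lower bound is also algorithm-independent.

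Part (i) of the theorem then follows on $\cR_{T_\off}$. Apply the TS Plug-In limit lemma to $\widehat M$, whose optimal arm is $\widehat a$; the regularity event excludes the endpoint ties. The UA liminf comes from the TS version of (\ref{eq:uq-liminf-lemma}). Part (ii) follows from Lemma~\ref{lem:fixed-env-TS}(v), applied in $M^*$ along $\widehat\theta_{\plug,T}\to c_{\plug}$, and from Lemma~\ref{lem:fixed-env-TS}(i), applied uniformly over $[c_{\UA}-\varepsilon,\overline\theta]$, then letting $\varepsilon\downarrow 0$; the second inequality uses $c_{\UA}\ge c_{\plug}$. Part (iii) repeats the case split of Theorem~\ref{thm:main}. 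Lemma~\ref{lem:fixed-env-TS}(iii) gives $r_{\plug,T}\wedge r_{\UA,T}\ge c\log T$, so both logarithms diverge. If $c_{\plug}>\sigma^2$, both regrets are $\Theta(\log T)$ and $G=1$. If $c_{\plug}<\sigma^2$, the exponents give $G\ge1$. The boundary case $c_{\plug}=\sigma^2$ is excluded by $\cR_{T_\off}$.

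The event $\mathcal A_{T_\off}$ is purely a statement about the offline data. Its probability is $q_{T_\off}(1-q_{T_\off})\to 1/4$, by independence of the Gaussian sample means and sample variances and by the CLT for $\chi^2$, exactly as before. On $\mathcal A_{T_\off}$ we have $c_{\plug}<\sigma^2<c_{\UA}$, so Plug-In regret is $T^{1-c_{\plug}/\sigma^2+o(1)}$. For UA, eventually $\widehat\theta_{\UA,T}\ge \theta'$ for some fixed $\theta'>\sigma^2$, and Lemma~\ref{lem:fixed-env-TS}(iii) (or the envelope (i)) gives $O(\log T)$ regret; hence $G=\infty$.

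The main obstacle is not this assembly but making sure Lemma~\ref{lem:fixed-env-TS} truly delivers the uniformity used above, especially the envelope (i) uniform in $\delta$ near $0$ with the explicit $t^{-\theta/\tau_o^2}$ term. The UA integral needs this, and for TS it rests on uniform truncated-Gaussian odds bounds in the Agrawal–Goyal decomposition. I would isolate this as the single nontrivial check. If the TS envelope only comes with an extra polylogarithmic factor, the integration step still goes through, since it changes $B_T$ only by $(\log T)^{O(1)}$, which does not affect the exponents.
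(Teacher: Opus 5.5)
Your proposal is correct and follows the paper's proof essentially step by step. Part (i) comes from the TS analogues of the Plug-In and perturbed-minimizer limits (Lemmas~\ref{lem:PI-minimizer-TS} and~\ref{lem:UQ-exponent-TS}), part (ii) from Lemma~\ref{lem:fixed-env-TS}(v) and (i), and part (iii) and the event $\mathcal{A}_{T_{\textnormal{off}}}$ are handled exactly as in the UCB case.

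One adjustment is needed. The paper's TS envelope in Lemma~\ref{lem:fixed-env-TS}(i) carries a $T^{\eta}$ loss with constant $C_\eta$, not a UCB-type sum or polylog factor. So every exponent bound must be taken with $+\eta$ and then $\eta\downarrow 0$, with $\eta$ small enough that $x_T\to 0$ in the UA integration. The $\Theta(\log T)$ claims, e.g.\ when the Plug-In limit exceeds $\sigma^2$, must then come from the supercritical asymptotics in Lemma~\ref{lem:fixed-env-TS}(iii) rather than from the envelope, as in the paper.
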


\begin{proof}[Proof of Theorem \ref{thm:main-TS}]

For (i), first apply Lemma~\ref{lem:PI-minimizer-TS} (ii) to the fitted environment $\widehat M$, and we deduce that
\[
\widehat {\theta}_{\plug, T}({\cD_{\off}})\to\Pi_{\Theta}(\widehat\sigma_{\hat a}^2)=c_{\plug}({\cD_{\off}}).
\]
Also, Lemma~\ref{lem:UQ-exponent-TS} implies
\[
\liminf_{T\to\infty}\widehat {\theta}_{\UA, T}({\cD_{\off}})\ge\Pi_{\Theta}({\bar\sigma^2})=c_{\UA}({\cD_{\off}}).
\]
Since ${\bar\sigma^2}\ge \widehat\sigma_{\hat a}^2$ and projection onto an interval is monotone, $c_{\UA}({\cD_{\off}})\ge c_{\plug}({\cD_{\off}})$.

For (\ref{eq:PI-exponent-TS}) in part (ii), deploy the selected sequence $\widehat {\theta}_{\plug, T}$ to the true environment $M^*$. its optimal-arm variance is $\sigma^2$, and the sequence converges to $c_{\plug}({\cD_{\off}})$. Lemma~\ref{lem:fixed-env-TS}(v) gives (\ref{eq:PI-exponent-TS}).

For (\ref{eq:UQ-exponent-TS}), fix $\varepsilon>0$. By part (i), for all sufficiently large $T$, $\widehat {\theta}_{\UA, T}\ge c_{\UA}({\cD_{\off}})-\varepsilon$. Apply Lemma~\ref{lem:fixed-env-TS}(i) uniformly over all ${\theta}\in[(c_{\UA}({\cD_{\off}})-\varepsilon)\vee\underline {\theta},\overline {\theta}]$, we deduce that
\[
\limsup_{T\to\infty}
\frac{\log(r_{\UA, T}({\cD_{\off}}))}{\log T}
\le
\left(1-\frac{c_{\UA}({\cD_{\off}})-\varepsilon}{\sigma^2}\right)_+.
\]
Letting $\varepsilon\downarrow0$, and since $c_{\UA}({\cD_{\off}})\ge c_{\plug}({\cD_{\off}})$, (\ref{eq:UQ-exponent-TS}) follows.

For part (iii), first apply Lemma~\ref{lem:fixed-env-TS}(iii) in the true environment uniformly over ${\theta}\in\Theta$, gives a constant $c({\cD_{\off}})>0$ such that, for all large $T$,
\begin{equation}
r_{\plug, T}({\cD_{\off}})\wedge r_{\UA, T}({\cD_{\off}}) \ge c({\cD_{\off}})\log T.
\label{eq:both-log-lower}
\end{equation}
If $c_{\plug}({\cD_{\off}})>\sigma^2$, then $c_{\UA}({\cD_{\off}})\ge c_{\plug}({\cD_{\off}})>\sigma^2$. Part (i) implies that both selected sequences have a fixed positive distance above $\sigma^2$ for all large $T$. Lemma~\ref{lem:fixed-env-TS}(iii) and (\ref{eq:both-log-lower}) yields
\[
r_{\plug, T}({\cD_{\off}})=\Theta(\log T),
\qquad
r_{\UA, T}({\cD_{\off}})=\Theta(\log T).
\]
In this case, $\cG({\cD_{\off}})=1$.

If $c_{\plug}({\cD_{\off}})<\sigma^2$, set $q=1-c_{\plug}({\cD_{\off}})/\sigma^2>0$. By (\ref{eq:PI-exponent-TS}), $\log(r_{\plug, T}({\cD_{\off}}))=q\log T+o(\log T)$. By (\ref{eq:UQ-exponent-TS}), $\limsup_{T\to\infty}\frac{\log(r_{\UA, T}({\cD_{\off}}))}{\log T}\le q$. Together with (\ref{eq:both-log-lower}), this implies $\liminf_{T\to\infty}\frac{\log(r_{\plug, T}({\cD_{\off}}))}{\log(r_{\UA, T}({\cD_{\off}}))}\ge1$. The equality case $c_{\plug}({\cD_{\off}})=\sigma^2$ is excluded on $\cR_{T_\off}$. This proves part (iii).

Finally, we analyze the event $\mathcal A_{T_\off}$. Note that for Gaussian data, the sample means are independent of the sample variances. Thus, $\hat a$ is independent of $(\widehat\sigma_{1}^2,\widehat\sigma_{2}^2)$. In addition, the two sample variances are i.i.d. and $\frac{(T_{1, \off}-1)\widehat\sigma_{i}^2}{\sigma^2}\sim\chi^2_{T_{1, \off}-1}$. Therefore, condition on either value of $\hat a$, the probability that the fitted optimal arm has variance below $\sigma^2$ and the other arm has variance above $\sigma^2$ is $q_{T_{\off}}(1-q_{T_{\off}})$, where $q_{T_{\off}}:={\PP}(\chi^2_{T_{1, \off}-1}<T_{1, \off}-1)$. This leads to ${\PP}(\cA_{T_{\off}})=q_{T_{\off}}(1-q_{T_{\off}})$. Moreover, according to the central limit theorem, $\lim_{T_{ \off}\to\infty}q_{T_{ \off}}\to1/2$. Thus ${\PP}(\cA_{T_{\off}})\to1/4$.

On $\cA_{T_{\off}}$, the fitted optimal arm has sample variance below $\sigma^2$, and the other arm has sample variance above $\sigma^2$. Since $\underline {\theta}<\sigma^2<\overline {\theta}$,
\[
c_{\plug}({\cD_{\off}})<\sigma^2<c_{\UA}({\cD_{\off}}).
\]
From the previous analysis we obtain that on $\mathcal A_{T_{\off}}$,
\begin{align*}
r_{\plug, T}({\cD_{\off}})&=T^{1-c_{\plug}({\cD_{\off}})/\sigma^2+o(1)}, \\
r_{\UA, T}({\cD_{\off}})&=O(\log T),
\end{align*}
and hence $\cG({\cD_{\off}}) = \infty$.

\end{proof}

\subsection{Auxiliary lemmas}

\begin{lemma}[Fixed-environment TS regret]
\label{lem:fixed-env-TS}
Consider a 2-arm bandit environment $\nu_\delta
=\bigl(\cN(\mu_o,\tau_o^2),\cN(\mu_s,\tau_s^2)\bigr)$, where $\mu_o=\mu_s+\delta$ with $\delta>0$. Let  $\mathcal K=[\theta_-,\theta_+]\subset(0,\infty)$. Then the following holds for the regret of TS$(\theta)$ and the arm pull counts in environment $\nu_\delta$:
\begin{enumerate}[label=(\roman*)]
\item For any constant $\eta>0$, there is a finite constant $C_\eta$, depending
only on $\mathcal K,\tau_o^2,\tau_s^2,$ and $\eta$, such that for all $T\ge3$, $\theta\in \mathcal K$, and $\delta>0$,
\begin{equation}
\Reg_T(\theta;\nu_\delta) \le C_\eta\left[\delta+\frac{\log T}{\delta}+\frac1\delta T^{(1-\theta/\tau_o^2)_++\eta}\right].
\label{eq:TS-upper-envelope}
\end{equation}

\item Fix $0<\delta_-\le\delta_+<\infty$ and suppose $\theta_+<\tau_o^2$.
For every $\eta>0$, there exist constants $c_\eta>0$ and $T_\eta<\infty$ such that for all $T\ge T_\eta$,
\begin{equation}
\inf_{\substack{\delta\in[\delta_-,\delta_+]\\\theta\in \mathcal K}}
\Reg_T(\theta;\nu_\delta)\ge c_\eta T^{1-\theta_+/\tau_o^2-\eta}.
\label{eq:TS-subcritical-lower}
\end{equation}

\item For every fixed $\delta>0$,
\begin{equation}
\liminf_{T\to\infty}
\inf_{\theta\in \mathcal K}
\frac{\delta^2\E_{\nu_\delta}[N_s(T;\theta)]}{2\theta\log T}\ge1.
\label{eq:TS-log-lower}
\end{equation}
If, in addition, $\theta_-\ge(1+\xi)\tau_o^2$ for some $\xi>0$, then
\begin{equation}
\lim_{T\to\infty}
\sup_{\theta\in \mathcal K}
\left|
\frac{\delta^2\E_{\nu_\delta}[N_s(T;\theta)]}{2\theta\log T}-1\right|=0.
\label{eq:TS-supercritical-sharp}
\end{equation}

\item For every fixed $\delta>0$, when $\theta=\tau_o^2$,
\begin{equation}
\lim_{T\to\infty}
\frac{\Reg_T(\theta;\nu_\delta)}{\log T}
=
\frac{2\theta}{\delta}.
\label{eq:ts-critical-regret}
\end{equation}

\item If $\theta_T\in \mathcal K$ and $\theta_T\to \theta$, then
\begin{equation}
\lim_{T\to\infty}
\frac{\log\bigl(\Reg_T(\theta_T;\nu_\delta)\bigr)}{\log T}=\left(1-\frac{\theta}{\tau_o^2}\right)_+.
\label{eq:TS-fixed-exponent}
\end{equation}
\end{enumerate}
\end{lemma}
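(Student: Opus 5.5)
The plan follows the structure of Lemma~\ref{lem:ucb-bounds}. I would put the two armwise reward sequences and the randomizations $Z_{i,t}$ on one probability space and replace index concentration by posterior-odds control in the style of \citet{agrawal2013further}. The key quantity is the conditional probability that the optimal arm's posterior draw clears a threshold $y=\mu_o-\delta/2$ given $j$ optimal pulls. Write $\overline Y_{o,j}=\mu_o+\tau_oW/\sqrt j$ with $W\sim\cN(0,1)$. Then
\[
p_j=\Phi\Bigl(\tfrac{\sqrt j(\overline Y_{o,j}-y)}{\sqrt\theta}\Bigr),
\qquad
\tfrac{1-p_j}{p_j}=H\Bigl(\tfrac{\sqrt j\,\delta/2+\tau_oW}{\sqrt\theta}\Bigr).
\]
Since $H(x)\asymp |x|e^{x^2/2}$ as $x\to-\infty$, the untruncated expectation $\E[H]$ is finite iff $\theta>\tau_o^2$. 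This is the source of the threshold.

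\textbf{Part (i).} I would use the decomposition
\[
\E[N_s(T;\theta)]\le L_T+\sum_{j\ge1}\E\bigl[\min\{T,\,p_j^{-1}-1\}\bigr]+(\text{suboptimal-arm deviation terms}),
\qquad
L_T=\lceil C\theta\log T/\delta^2\rceil .
\]
The truncation at $T$ is legitimate because the number of rounds spent waiting for the optimal arm at count $j$ is at most $T$. The main computation is a uniform-in-$\theta\in\cK$ bound on the truncated Gaussian odds, namely
\[
\E\bigl[\min\{T,H(\cdot)\}\bigr]\le C_\eta\,T^{(1-\theta/\tau_o^2)_++\eta}\,e^{-cj\delta^2}.
\]
To get it, I split $W$ at the level where $H$ reaches $T$, roughly $|W|\approx\sqrt{2\theta\log T}/\tau_o$. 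Below that level I integrate $e^{x^2/2-W^2/2}$, and above it I use the Gaussian tail times $T$. Summing over $j$ gives the factor $1/\delta^2$, and multiplying by $\delta$ yields \eqref{eq:TS-upper-envelope}. When $m>T$ the trivial bound $T\delta$ covers the remaining case, as in the UCB proof.

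\textbf{Part (ii).} I would mimic the bad-first-reward construction, but the optimal arm's draw is now random, so a single deterministic gap does not suffice. Condition on $Y_{o,1}\le\mu_s-G-x\sqrt{\log T}$ and on the event $F$ that the suboptimal sample means stay above $\mu_s-G$. While $N_o=1$, the optimal arm is chosen in a given round with probability at most about
\[
\Phi\bigl(-x\sqrt{\log T}/\sqrt{\theta(1+1/k)}\bigr)\le T^{-x^2/(2\theta)(1-o(1))}.
\]
The early rounds with small $k$ are absorbed by a union bound on finitely many of them. I would choose $x^2/(2\theta_+)=1+\eta$. A union bound then shows the optimal arm is not resampled for $T^{1-\eta'}$ rounds, with probability bounded below. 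The event itself has probability $T^{-\theta_+/\tau_o^2(1+\eta)}$, which gives \eqref{eq:TS-subcritical-lower} uniformly in $\delta\in[\delta_-,\delta_+]$ and $\theta\in\cK$.

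\textbf{Part (iii).} For the lower bound, suppose the suboptimal arm has $k<(1-\rho)2\theta\log T/\delta^2$ pulls while the optimal arm has been sampled heavily, as on the events $G_{o,T},G_{s,T}$. Then each round selects it with probability at least $T^{-(1-\rho)+o(1)}$. So with probability tending to one it is pulled again before time $T$, and iterating this gives $N_s\ge m_-$ uniformly over $\cK$.

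For the sharp upper bound, take $\theta_-\ge(1+\xi)\tau_o^2$. The untruncated odds sum is then finite uniformly over $\cK$. It remains to bound the rounds after the suboptimal arm has $m_+=(1+\rho')2\theta\log T/\delta^2$ pulls. There the suboptimal posterior draw exceeds $\mu_o-\epsilon'\delta$ with probability at most $T^{-(1+\rho')(1-\epsilon'')}$, which is summable to $O(1)$. This gives \eqref{eq:TS-supercritical-sharp}.

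\textbf{Part (iv), the expected main obstacle.} At $\theta=\tau_o^2$ the truncated odds contribution in part (i) is only $T^{\eta}$, which is not $o(\log T)$. I would therefore redo the truncated computation exactly. With $\theta=\tau_o^2$, the exponent is
\[
\frac{(\sqrt j\delta/2+\tau_oW)^2}{2\tau_o^2}-\frac{W^2}{2}=\frac{j\delta^2}{8\tau_o^2}+\frac{\sqrt j\,\delta W}{2\tau_o}.
\]
The region where this is large requires $W\lesssim-\sqrt j\delta/\tau_o$, which has Gaussian weight $e^{-j\delta^2/(2\tau_o^2)}$. So the $j$-th term should be $O(\sqrt{\log T}\,e^{-cj})$, or $O(\log T\,e^{-cj})$ at worst. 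I will need a sharper threshold, closer to $\mu_o-\epsilon'\delta$, together with a careful count, so that the total is $\frac{2\theta(1+\rho')}{\delta^2}\log T+o(\log T)$. If the $\log T$ term does not become negligible in this way, the fallback is to bound the waiting time per $j$ by $\min\{T,\cdot\}$ only for small $j$, and to use the fact that for $j\ge C\log\log T$ the factor $e^{-cj}$ kills the polylogarithmic loss.

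\textbf{Part (v).} This follows exactly as in Lemma~\ref{lem:ucb-bounds}(v), combining part (ii) for the lower exponent and part (i) for the upper exponent, then letting $\eta,\varepsilon\downarrow0$.
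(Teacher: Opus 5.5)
Your architecture is the paper's for (i), the supercritical half of (iii), and (v). It uses the odds decomposition with the waiting time at each optimal-arm count truncated at $T$ (Lemma~\ref{lem:odds-decomp}), a uniform truncated Gaussian-odds bound of order $\delta^{-2}T^{(1-\theta/\tau_o^2)_++\eta}$ (Lemma~\ref{lem:Gaussian-odds}(ii)), and the untruncated odds sum when $\theta_-\ge(1+\xi)\tau_o^2$.

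\emph{Part (ii).} Your argument is a valid variant. The paper forces $\Xi_o\le\mu_s-G\le\Xi_s$ pathwise, using $H_T=\{Z_{o,t}\le\sqrt{2\log T},\ t\le T\}$ and an event $F_Z$ on the suboptimal randomizations. You instead push $Y_{o,1}$ down by $x\sqrt{\log T}$ with $x^2=2\theta_+(1+\eta)$ and union-bound the conditional per-round probability. If you treat $K>1/\eta$ early rounds separately, the optimal arm is in fact not resampled before $T$ with conditional probability tending to one. The only cost is an extra $\eta\theta_+/\tau_o^2$ in the exponent.

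\emph{Lower half of (iii).} Reusing the UCB event $G_{s,T}$, which allows deviations of order $\sqrt{\log T/k}$ for all $k\le m_{\max}$, does work for TS. It also lets you skip the paper's finite-exploration step (\ref{eq:finite-exploration}). However, $G_{o,T}$ cannot be copied verbatim. Each extra suboptimal pull now takes about $T^{\alpha}$ rounds, so you need control of $\overline Y_{o,\ell}$ over a window of order $T$ (the paper uses $t>\lfloor T/2\rfloor$). You also need a coupling such as (\ref{eq:adaptive-binomial}) to accumulate $m_-$ pulls.

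The genuine error is in part (iv). Your claim that the untruncated $\mathbb{E}[H]$ is finite iff $\theta>\tau_o^2$ is false: at $\theta=\tau_o^2$ it is finite whenever the threshold lies strictly below $\mu_o$. Take threshold $b_\varepsilon=\mu_o-\varepsilon\delta$ and $\theta=\tau_o^2$. Then $X_j=(\overline Y_{o,j}-b_\varepsilon)\sqrt{j/\theta}\sim\mathcal{N}(a\sqrt j,1)$ with $a=\varepsilon\delta/\tau_o$. Using $H(x)\le C(1+|x|)e^{x^2/2}$ for $x<0$, the quadratic terms cancel exactly:
\[
\mathbb{E}\bigl[H(X_j)\mathbf{1}\{X_j<0\}\bigr]
\le\frac{C}{\sqrt{2\pi}}\int_{-\infty}^{0}(1+|x|)\,e^{x^2/2-(x-a\sqrt j)^2/2}\,dx
=\frac{C}{\sqrt{2\pi}}\,e^{-a^2j/2}\Bigl(\frac{1}{a\sqrt j}+\frac{1}{a^2j}\Bigr).
\]
Meanwhile $\mathbb{E}[H(X_j)\mathbf{1}\{X_j\ge0\}]\le C\,\mathbb{E}[e^{-X_j^2/2}]=O(e^{-a^2j/4})$. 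Hence $\sum_{j\ge1}\mathbb{E}[H(X_j)]<\infty$ with no dependence on $T$: no truncation is needed, and there is no $\sqrt{\log T}$ or $\log T$ factor. The supercritical decomposition with $L_T'=\lceil 2\theta(1+\rho)\log T/((1-2\varepsilon)^2\delta^2)\rceil$ then gives $\mathbb{E}[N_s]\le L_T'+O(1)$, which is exactly how the paper proves (iv).

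Your own exponent $\frac{j\delta^2}{8\tau_o^2}+\frac{\sqrt j\,\delta W}{2\tau_o}$ already shows the cancellation. On $\{x<0\}$ it equals $-\frac{j\delta^2}{8\tau_o^2}+\frac{\sqrt j\,\delta}{2\tau_o}x$, so the integrand decays exponentially in $|x|$.

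As written, your (iv) is left open, and the fallback would not close it even if the feared loss were real. If the per-term bound were of order $\log T\,e^{-cj}$, the terms with $j\le C\log\log T$ would still sum to order $\log T/\delta^2$, the same order as the main term.

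A minor point for (v): when $\theta\ge\tau_o^2$, the bound $\liminf\ge0$ comes from $\mathrm{Reg}_T\ge\delta$ (the initialization pull of the suboptimal arm). Nonnegativity alone, which is what the UCB proof you are copying invokes, does not give it.
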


\begin{proof}[Proof of Lemma \ref{lem:fixed-env-TS}]
\noindent\textbf{Part (i).} Fix ${\theta}\in \mathcal K$ and let $L_T({\theta},\delta):=\lceil\frac{16{\theta}\log T}{\delta^2}\rceil$. Similar to the UCB analysis, we have
\begin{equation}
N_s(T;{\theta})
\le
1+L_T({\theta},\delta) + S_{1, T} + S_{2, T} + S_{3, T},\label{eq:TS-upper-decomp}
\end{equation}
where
\begin{align*}
S_{1, T}& = \sum_{t=3}^T
\ind\left\{
A_t(\theta)=s,\,
N_s(t-1;{\theta})\ge L_T({\theta},\delta),\,
\overline Y_{s,N_s(t-1;\theta)}>\mu_s+\delta/4
\right\}\\
S_{2, T}& = \sum_{t=3}^T
\ind\left\{
A_t(\theta)=s,\,
N_s(t-1;{\theta})\ge L_T({\theta},\delta),\,
\overline Y_{s,N_s(t-1;\theta)}\le\mu_s+\delta/4,\,
{\Xi}_s(t;{\theta})>\mu_o - \delta/4
\right\}.\\
S_{3, T}& = \sum_{t=3}^T\ind\left\{A_t(\theta)=s,\,{\Xi}_s(t;{\theta})\le \mu_o - \delta/4 \right\}.
\end{align*}

For $S_{1, T}$, note that whenever arm $s$ is selected, its pre-pull count increases by one.  Hence each value $k=N_s(t-1;{\theta})$ can occur at most once among rounds at which $A_t(\theta)=s$. Therefore,
\[
\sum_{t=3}^T
\ind\left\{
A_t(\theta)=s,\,
N_s(t-1;{\theta})\ge L_T({\theta},\delta),\,
\overline Y_{s,N_s(t-1;\theta)}>\mu_s+\delta/4
\right\}
\le
\sum_{k=1}^{\infty}
\ind\left\{
\overline Y_{s,k}>\mu_s+\delta/4
\right\}.
\]
Since $\overline Y_{s,k}-\mu_s\sim
\cN\left(0,\frac{\tau_s^2}{k}\right)$, Gaussian concentration gives ${\PP}(\overline Y_{s,k}>\mu_s+\delta/4)
\le\exp\left\{-\frac{k\delta^2}{32\tau_s^2}\right\}$. Thus we have
\begin{equation}
\E\left[S_{1, T}\right]\le
1+\frac{32\tau_s^2}{\delta^2}.
\end{equation}

We next consider $S_{2, T}$. If $N_s(t-1;{\theta}) = k$, conditional on $\mathcal H_{t-1}$, the history up to time $t-1$, ${\Xi}_s(t;{\theta})\sim \mathcal N\left(\overline Y_{s,k},\frac{{\theta}}{k}\right)$. Hence on the event $N_s(t-1;{\theta}) = k$, $\bar Y_{s, k}\leq \mu_s + \delta/4$,
\begin{align*}
{\PP}\left(
{\Xi}_s(t;{\theta})>\mu_o - \delta/4|\mathcal H_{t-1}
\right)
&=
\Phi\left(
-\frac{\mu_o - \delta/4-\overline Y_{s,k}}{\sqrt{{\theta}/k}}
\right)\\
&\le
\exp\left\{
-\frac{k(\mu_o - \delta/4-\overline Y_{s,k})^2}{2{\theta}}
\right\}\\
&\le
\exp\left\{
-\frac{k\delta^2}{8{\theta}}
\right\}.
\end{align*}
Whenever this term is relevant, $k=N_s(t-1;{\theta})\ge L_T({\theta},\delta)$.
By the definition of $L_T({\theta},\delta)$, $\exp\left\{
-\frac{k\delta^2}{8{\theta}}
\right\}\le T^{-2}$. Taking expectations and then summing over $t=3,\ldots,T$, we obtain that
\begin{equation}
\E S_{2, T}\leq T^{-1}.
\end{equation}

Finally, for $S_{3,T}$, for $k\ge1$, define
\[
p_k({\theta}):=
\Phi\left(
(\overline Y_{o,k}-\mu_o+\delta/4)\sqrt{\frac{k}{{\theta}}}
\right).
\]
Using Lemma~\ref{lem:odds-decomp} leads to
\[
\EE[S_{3,T}]\le\sum_{k=1}^{T-1}
\EE\left[
\min\left\{T,\frac{1-p_k({\theta})}{p_k({\theta})}\right\}
\right].
\]
Lemma~\ref{lem:Gaussian-odds}(ii) further implies that, for every $\eta>0$, there exists a constant $C_\eta$ such that
\begin{equation}
\E[S_{3,T}]\le
C_\eta {\theta}_+\delta^{-2}
T^{(1-{\theta}/\tau_o^2)_++\eta}.
\end{equation}

Substituting the above analysis into
(\ref{eq:TS-upper-decomp}), we have
$$
\E[N_s(T;{\theta})]
\le
1+L_T({\theta},\delta)+C(1+\delta^{-2})+T^{-1}+
C_\eta {\theta}_+\delta^{-2} T^{(1-{\theta}/\tau_o^2)_++\eta}.
$$
We deduce (\ref{eq:TS-upper-envelope}) by multiply both sides by $\delta$, plug in the definition of $L_T({\theta},\delta)$, and noticing that $\mathcal K$ is compact.

\noindent\textbf{Part (ii).} Fix $G>0$ large enough such that $\sum_{k=1}^{\infty}\exp\left\{-\frac{kG^2}{8\tau_s^2}\right\}
\le\frac12$. Recall that $Y_{i,k}$ denotes the reward observed on the $k$th pull of arm $i$, whereas $Z_{i,t}$ denotes the Gaussian index randomization used
for arm $i$ at calendar round $t$.

Define
\begin{align*}
F_Y
&:=
\bigcap_{k\ge1}
\left\{\overline Y_{s,k}\ge\mu_s-G/2\right\},\\
F_Z
&:=
\bigcap_{k\ge1}
\left\{Z_{s,k+2}\ge-\frac{G\sqrt{k}}{2\sqrt{{\theta}_+}}\right\},\\
H_T
&:=
\bigcap_{t=3}^{T}
\left\{Z_{o,t}\le\sqrt{2\log T}\right\},\\
E_T
&:=
\left\{
Y_{o,1}
\le
\mu_s-G-\sqrt{2{\theta}_+\log T}
\right\}.
\end{align*}
A Gaussian union bound gives ${\PP}(F_Y)\ge1/2$.  Moreover, we have ${\PP}(F_Z)=\prod_{k=1}^{\infty}\normcdf\left(\frac{G\sqrt{k}}{2\sqrt{{\theta}_+}}\right)=:c_Z>0$. From a union bound we deduce that $\lim_{T\to\infty}{\PP}(H_T)=1$. We also have ${\PP}(E_T)=T^{-{\theta}_+/\tau_o^2-o(1)}$ uniformly over $\delta\in[\delta_-,\delta_+]$: Indeed,
define $x_T(\delta):=\frac{\delta+G+\sqrt{2{\theta}_+\log T}}{\tau_o}$. Then uniformly over $\delta\in[\delta_-,\delta_+]$, $\frac{x_T(\delta)^2}{2}=\frac{{\theta}_+}{\tau_o^2}\log T+O(\sqrt{\log T})$. Thus the desired bound holds from the Gaussian Mills lower bound \citep{gordon1941values}, which also implies that for every $\eta>0$ and all sufficiently large $T$, ${\PP}(E_T)\ge T^{-{\theta}_+/\tau_o^2-\eta}$.

Now on $E_T\cap F_Y\cap F_Z\cap H_T$, we show by induction that arm $s$ is selected at every round $t=3,\ldots,T$, simultaneously for all ${\theta}\in \mathcal K$. Specifically, suppose that arm $s$ has been selected at all preceding post-initialization rounds $3,\ldots,t-1$. Then, the pre-pull count of arm $s$ at round $t$ is $N_s(t-1;{\theta})=k = t-2$, while the optimal arm has not been selected since initialization so $N_o(t-1;{\theta})=1$. Therefore, the posterior samples generated at round $t$ are
\[
{\Xi}_s(t;{\theta})
=
\overline Y_{s,k}
+
\sqrt{\frac{{\theta}}{k}}\,Z_{s,k+2},
\qquad
{\Xi}_o(t;{\theta})
=
Y_{o,1}
+
\sqrt {\theta}\,Z_{o,t}.
\]
Thus, on $F_Y\cap F_Z$,
\begin{align*}
{\Xi}_s(t;{\theta})\ge\mu_s-\frac G2-
\sqrt{\frac {\theta}k}\frac{G\sqrt{k}}{2\sqrt{{\theta}_+}}
=
\mu_s-\frac G2-\frac G2\sqrt{\frac{{\theta}}{{\theta}_+}}\ge\mu_s-G.
\end{align*}
At the same time, on $E_T\cap H_T$,
\begin{align*}
{\Xi}_o(t;{\theta})\le\mu_s-G-\sqrt{2{\theta}_+\log T}+\sqrt {\theta}\sqrt{2\log T}\le\mu_s-G.
\end{align*}
Hence arm $s$ is selected at round $t$, completing the induction.

Finally, note that the four events are independent: $E_T$ depends only on the optimal-arm reward sequence, $F_Y$ only on the suboptimal-arm reward sequence, $F_Z$ only on the suboptimal-arm index randomizations, and $H_T$ only on the optimal-arm index randomizations.  Therefore, from the above probability calculations,
\[
\Reg_T({\theta};\nu_\delta)
\ge
\delta_-\cdot (T-1)\cdot
{\PP}(E_T){\PP}(F_Y){\PP}(F_Z){\PP}(H_T),
\]
uniformly over ${\theta}\in \mathcal K$ and $\delta\in[\delta_-,\delta_+]$, and thus (\ref{eq:TS-subcritical-lower}) holds.

\noindent\textbf{Part (iii).} We first establish the lower bound (\ref{eq:TS-log-lower}). We begin with a uniform finite-exploration fact.  For every fixed integer
$k_0\ge2$,
\begin{equation}
\lim_{t\to\infty}
\sup_{{\theta}\in \mathcal K}
{\PP}\left(\min\{N_o(t;{\theta}),N_s(t;{\theta})\}<k_0\right)=0.
\label{eq:finite-exploration}
\end{equation}
To prove this, let
\[
\mathcal B_B
:=
\left\{
\sup_{i\in\{o,s\}}\sup_{k\ge1}|\overline Y_{i,k}|\le B
\right\}.
\]
Since $\overline Y_{i,k}\to\mu_i$ almost surely, ${\PP}(\mathcal B_B)\uparrow1$ as
$B\to\infty$.  On $\mathcal B_B$, suppose arm $i$ has been pulled $r<k_0$ times at the beginning of round $t\geq 3$, and let $j$ be the other arm. Uniformly over ${\theta}\in \cK$, conditional independence of the current posterior
samples gives
\begin{align*}
{\PP}(A_t=i\mid\mathcal H_{t-1})
&\ge
{\PP}({\Xi}_i(t;{\theta})>B+1,\ {\Xi}_j(t;{\theta})\le B+1|\mathcal H_{t-1})\\
&\ge
\normcdf\!\left(-(2B+1)\sqrt{\frac{k_0-1}{{\theta}_-}}\right)
\normcdf\!\left(\frac1{\sqrt{{\theta}_+}}\right)
=:p_B>0.
\end{align*}
This implies that as long as arm $i$ has fewer than $k_0$ pulls, the waiting time to its next pull is stochastically dominated by a geometric random variable with success probability $p_B$. Thus, for each arm $i\in\{o,s\}$ and uniformly over ${\theta}\in \mathcal K$,
\[
{\PP}(N_i(t;{\theta})<k_0,\ \mathcal B_B)
\le
{\PP}\left(\sum_{\ell=1}^{k_0-1}G_\ell>t-2\right),
\]
where $G_1,\ldots,G_{k_0-1}$ are i.i.d. geometric$(p_B)$ random
variables.  Therefore, for every ${\theta}\in \cK$,
\begin{align*}
&{\PP}\left(\min\{N_o(t;{\theta}),N_s(t;{\theta})\}<k_0\right)
\\
\quad\le&
{\PP}(\mathcal B_B^c)+
{\PP}\left(\left\{\min\{N_o(t;{\theta}),N_s(t;{\theta})\}<k_0\right\}\cap\mathcal B_B
\right)
\\
\quad\le&
{\PP}(\mathcal B_B^c)
+
\sum_{i\in\{o,s\}}
{\PP}(N_i(t;{\theta})<k_0,\ \mathcal B_B)
\\
\quad\le&
{\PP}(\mathcal B_B^c)
+
2{\PP}\left(
\sum_{\ell=1}^{k_0-1}G_\ell>t-2
\right).
\end{align*}
The right-hand side above does not depend on ${\theta}$, and hence
\[
\sup_{{\theta}\in \mathcal K}
{\PP}\left(\min\{N_o(t;{\theta}),N_s(t;{\theta})\}<k_0\right)
\le
{\PP}(\mathcal B_B^c)
+
2{\PP}\left(
\sum_{\ell=1}^{k_0-1}G_\ell>t-2
\right).
\]
For each fixed $B$, $p_B>0$ and $k_0$ is fixed, so
$\sum_{\ell=1}^{k_0-1}G_\ell<\infty$ almost surely.  Therefore, ${\PP}(\sum_{\ell=1}^{k_0-1}G_\ell>t-2)\to0$ as $t\to\infty$. Therefore, for any fixed $B$,
\[
\limsup_{t\to\infty}
\sup_{{\theta}\in \cK}
{\PP}\left(\min\{N_o(t;{\theta}),N_s(t;{\theta})\}<k_0\right)\le
{\PP}(\mathcal B_B^c).
\]
Letting $B\to\infty$ proves (\ref{eq:finite-exploration}).

We will also use the following elementary proposition on Bernoulli random variables: Let $(\cF_j)_{j=0}^L$ be a filtration and let $I_j\in\{0,1\}$ be $\cF_j$-measurable for $j=1,\ldots,L$. Suppose that there exists a constant $p\in(0,1]$ and integer $m\ge1$, such that $\forall j=1,\ldots,L$,
\[
{\PP}(I_j=1\mid\cF_{j-1})\ge p
\qquad\text{a.s. on }
\left\{
\sum_{\ell=1}^{j-1}I_\ell<m
\right\},
\]
Then
\begin{equation}
{\PP}\left(
\sum_{j=1}^L I_j<m
\right)
\le
{\PP}(\operatorname{Binomial}(L,p)<m).
\label{eq:adaptive-binomial}
\end{equation}
This property follows by coupling the $I_j$'s with i.i.d. Bernoulli$(p)$ variables $B_j$ (e.g. using independent uniform random variables), so that $B_j\le I_j$ whenever $\sum_{\ell=1}^{j-1}I_\ell<m$.

Now return to the proof of (iii). Fix $\rho\in(0,1)$ and set $m_{\max}(T):=\lceil\frac{2{\theta}_+(1-\rho)\log T}{\delta^2}\rceil$. Choose $\varepsilon>0$ sufficiently small that $\alpha:=(1+3\varepsilon)^2(1-\rho)<1$. Let $\ell_0(T)=\lfloor T/2\rfloor-m_{\max}(T)-2$. For any fixed $\zeta>0$, we can choose $k_0$ large enough such that
\begin{equation}
{\PP}(G_s) = {\PP}\left(\cap_{k\geq k_0}\{
\overline Y_{s,k}\ge\mu_s-\varepsilon\delta\}
\right)\ge1-\zeta.
\label{eq:suboptimal-mean-good}
\end{equation}
At the same time, a union bound gives
\begin{equation}
\lim_{T\to\infty}{\PP}(G_{o, T}) = \lim_{T\to\infty}{\PP}\left(\cap_{\ell=\ell_0(T)}^T\{\overline Y_{o,\ell}\le\mu_o+\varepsilon\delta\}\right)=1.
\label{eq:optimal-mean-good}
\end{equation}
Also, by (\ref{eq:finite-exploration}),
\begin{equation}
\lim_{T\to\infty}\sup_{{\theta}\in \mathcal K}{\PP}\left(N_s(\lfloor T/2\rfloor;{\theta})<k_0\right)=0.
\label{eq:early-exploration-good}
\end{equation}

Fix ${\theta}\in \mathcal K$ and consider the event $\mathcal G_T({\theta}):= G_s\cap G_{o, T}\cap\{N_s(\lfloor T/2\rfloor;{\theta})\geq k_0\}$. Then we have $\sup_{{\theta}\in\mathcal K}{\PP}(\mathcal G_T({\theta})^c) \le \zeta + o(1)$ as $T\to\infty$. On $\mathcal G_T({\theta})$, at any decision point $t\in\{\lfloor T/2\rfloor+1,\ldots,T\}$ before $k:=N_s(t-1;{\theta})$ reaches $m_-({\theta},T): = \lfloor\frac{2{\theta}(1-\rho)\log T}{\delta^2}\rfloor$, we have $k\ge k_0$, $k<m_-({\theta},T)\le m_{\max}(T)$, and $N_o(t-1;{\theta})=t-1-k\ge\ell_0(T)$. Thus
\[
\overline Y_{s,k}\ge\mu_s-\varepsilon\delta = \mu_o - (1+\varepsilon)\delta,
\qquad
\overline Y_{o,N_o(t-1;{\theta})}\le\mu_o+\varepsilon\delta.
\]
Note that conditional on the history, the two current posterior samples are independent, thus on $\mathcal G_T({\theta})$,
\begin{align*}
{\PP}(A_t=s|\mathcal H_{t-1})
&\ge
{\PP}({\Xi}_s(t;{\theta})\ge\mu_o+2\varepsilon\delta, {\Xi}_o(t;{\theta})\le\mu_o+2\varepsilon\delta |\mathcal H_{t-1})\\
&\ge
\frac12\,
\normcdf\left(-(1+3\varepsilon)\delta\sqrt{k/{\theta}}\right)\\
&\ge \frac12\normcdf\left(-\sqrt{2\alpha\log T}\right)\\
&\ge \frac12 \frac{\sqrt{2\alpha\log T}}{1+2\alpha\log T}\cdot \frac{1}{\sqrt{2\pi}}T^{-\alpha}\\
&\ge c_\alpha(\log T)^{-1/2}T^{-\alpha}=:p_T.
\end{align*}
Here $c_\alpha$ is a constant that depends only on $\alpha$. We have used the Gaussian Mills lower bound on normal CDF and the fact that $k/{\theta}\le2(1-\rho)\log T/\delta^2$.

Now for any ${\theta}\in \mathcal K$, for each
$t=\lfloor T/2\rfloor+1,\ldots,T$, define
\[
I_t({\theta})
:=
\begin{cases}
\ind\{A_t=s\},
&\text{if }
\begin{array}{l}
N_s(t-1;{\theta})<m_-({\theta},T),\\[-2pt]
\overline Y_{s,N_s(t-1;{\theta})}\ge\mu_s-\epsilon\delta,\\[-2pt]
\overline Y_{o,N_o(t-1;{\theta})}\le\mu_o+\epsilon\delta,
\end{array}
\\[12pt]
1,
&\text{otherwise}.
\end{cases}
\]
By the preceding calculation, for every
$t=\lfloor T/2\rfloor+1,\ldots,T$, ${\PP}(I_t({\theta})=1\mid\cH_{t-1})\ge p_T$. Therefore we deduce from (\ref{eq:adaptive-binomial}) that for every integer $m\ge1$,
\[
{\PP}\left(
\sum_{t=\lfloor T/2\rfloor+1}^{T} I_t({\theta})<m
\right)
\le
{\PP}\left(
\operatorname{Binomial}
\bigl(T-\lfloor T/2\rfloor,p_T\bigr)<m
\right).
\]
Note that if $N_s(T;{\theta})<m_-({\theta},T)$, then on $\mathcal G_T({\theta})$, whenever $N_s(t-1;{\theta})\ge k_0$, the two empirical-mean conditions in the
definition of $I_t({\theta})$ hold. Hence $I_t({\theta})=\ind\{A_t=s\}$ throughout $t=\lfloor T/2\rfloor+1,\ldots,T$. In addition, since $N_s(\lfloor T/2\rfloor;{\theta})\ge k_0$ on $\mathcal G_T({\theta})$,
\[
\sum_{t=\lfloor T/2\rfloor+1}^{T}I_t({\theta})
=
N_s(T;{\theta})-N_s(\lfloor T/2\rfloor;{\theta})
<
m_-({\theta},T)-k_0
\le m_{\max}(T).
\]
Therefore,
\begin{align*}
{\PP}(N_s(T;{\theta})<m_-({\theta},T))
&\le
{\PP}(\mathcal G_T({\theta})^c)
+{\PP}\left(\sum_{t=\lfloor T/2\rfloor+1}^{T}I_t({\theta})<m_{\max}(T)\right)\\
&\le\zeta+o(1)+{\PP}\left(\operatorname{Binomial}\bigl(T-\lfloor T/2\rfloor,p_T\bigr)<m_{\max}(T)\right),
\end{align*}
uniformly over ${\theta}\in \mathcal K$. Because $(T-\lfloor T/2\rfloor)p_T\asymp\frac{T^{1-\alpha}}{\sqrt{\log T}}\gg m_{\max}(T)$, the last probability tends to zero.  Letting $\zeta\downarrow0$ yields
\[
\lim_{T\to\infty}
\inf_{{\theta}\in \mathcal K}
{\PP}(N_s(T;{\theta})\ge m_-({\theta},T))=1.
\]
Consequently, $\liminf_{T\to\infty}\inf_{{\theta}\in \mathcal K}\frac{\delta^2\E[N_s(T;{\theta})]}{2{\theta}\log T}\ge1-\rho$. Let $\rho\downarrow0$ to obtain (\ref{eq:TS-log-lower}).

We now show (\ref{eq:TS-supercritical-sharp}). Fix constants $\rho>0$ and $\varepsilon\in(0,1/2)$, and define $b_\varepsilon:=\mu_o-\varepsilon\delta$, $L_T'({\theta}):=\lceil\frac{2{\theta}(1+\rho)\log T}{(1-2\varepsilon)^2\delta^2}\rceil$. Similar to part (i), for every ${\theta}\in \mathcal K$, decompose
\begin{align}
N_s(T;{\theta})
&\le
1+L_T'({\theta})
+S_{1,T}'({\theta})+S_{2,T}'({\theta})+S_{3,T}'({\theta}),
\label{eq:TS-supercritical-decomp}
\end{align}
where
\begin{align*}
S_{1,T}'({\theta})
&:=
\sum_{t=3}^T
\ind\Bigl\{
A_t(\theta)=s,\,
N_s(t-1;{\theta})\ge L_T'({\theta}),\,
\overline Y_{s,N_s(t-1;{\theta})}>\mu_s+\varepsilon\delta
\Bigr\},
\\
S_{2,T}'({\theta})
&:=
\sum_{t=3}^T
\ind\Bigl\{
A_t(\theta)=s,\,
N_s(t-1;{\theta})\ge L_T'({\theta}),\,
\overline Y_{s,N_s(t-1;{\theta})}\le\mu_s+\varepsilon\delta,\,
{\Xi}_s(t;{\theta})>b_\varepsilon
\Bigr\},
\\
S_{3,T}'({\theta})
&:=
\sum_{t=3}^T
\ind\Bigl\{
A_t(\theta)=s,\,
{\Xi}_s(t;{\theta})\le b_\varepsilon
\Bigr\}.
\end{align*}
We bound the three terms separately. For $S'_{1,T}({\theta})$, Similar to the analysis in part (i), $S_{1,T}'({\theta})\le\sum_{k=L_T'({\theta})}^\infty\ind\{\overline Y_{s,k}>\mu_s+\varepsilon\delta\}$. Since $\overline Y_{s,k}-\mu_s\sim\mathcal N\left(0,\frac{\tau_s^2}{k}\right)$, we have
\begin{align}
\E[S_{1,T}'({\theta})]\le
\sum_{k=L_T'({\theta})}^\infty\exp\left\{
-\frac{k\varepsilon^2\delta^2}{2\tau_s^2}
\right\}
=
\frac{\exp\{-L_T'({\theta})\varepsilon^2\delta^2/(2\tau_s^2)\}}{1-\exp\{-\varepsilon^2\delta^2/(2\tau_s^2)\}}.
\label{eq:TS-supercritical-S1}
\end{align}
Since $L_T'({\theta})\ge\frac{2{\theta}_-(1+\rho)\log T}{(1-2\varepsilon)^2\delta^2}$, the numerator in (\ref{eq:TS-supercritical-S1}) is no larger than $T^{-c_{1}}$, where $c_{1}:=\frac{{\theta}_-(1+\rho)\varepsilon^2}{(1-2\varepsilon)^2\tau_s^2}>0$. Thus,
\begin{equation}
\sup_{{\theta}\in \mathcal K}\E[S'_{1,T}({\theta})]
=O(T^{-c_{\rho,\varepsilon}})
=o(1).
\label{eq:TS-supercritical-S1-final}
\end{equation}

Next consider $S'_{2,T}({\theta})$.  If $N_s(t-1;{\theta}) = k$, then conditional on $\mathcal H_{t-1}$, ${\Xi}_s(t;{\theta})\sim \mathcal N\left(\overline Y_{s,k},\frac{\theta}{k}\right)$. Thus, on the event $k=N_s(t-1;{\theta})\ge L_T'({\theta})$, $\overline Y_{s,k}\le\mu_s+\varepsilon\delta$,
\begin{align*}
{\PP}\{{\Xi}_s(t;{\theta})>b_\varepsilon\mid\mathcal H_{t-1}\}
&=
\normcdf\left(
-\frac{b_\varepsilon-\overline Y_{s,k}}{\sqrt{{\theta}/k}}
\right)
\\
&\le
\exp\left\{
-\frac{k(b_\varepsilon-\overline Y_{s,k})^2}{2{\theta}}
\right\}
\\
&\le
\exp\left\{
-\frac{k(1-2\varepsilon)^2\delta^2}{2{\theta}}
\right\}
\\
&\le
T^{-(1+\rho)}.
\end{align*}
Taking expectations and summing over $t=3,\ldots,T$ gives
\begin{equation}
\sup_{{\theta}\in \mathcal K}\E[S'_{2,T}({\theta})]\le T^{-\rho}.
\label{eq:TS-supercritical-S2}
\end{equation}

Finally, we control $S'_{3,T}({\theta})$.  For $k\ge1$, define $p_k({\theta}):=\normcdf\left((\overline Y_{o,k}-b_\varepsilon)\sqrt{\frac{k}{{\theta}}}\right)$. Without loss of generality, we can write $\overline Y_{o,k}=\mu_o+\frac{\tau_o}{\sqrt{k}}Z$ for some $Z\sim \mathcal N(0,1)$.
Thus we have
\begin{equation}
(\overline Y_{o,k}-b_\varepsilon)\sqrt{\frac{k}{{\theta}}}
=
\frac{\varepsilon\delta}{\sqrt{\theta}}\sqrt{k}
+
\frac{\tau_o}{\sqrt{\theta}}Z.
\label{eq:TS-supercritical-Xk}
\end{equation}
Moreover, by the assumption ${\theta}\ge {\theta}_-\ge(1+\xi)\tau_o^2$, the parameter $\lambda({\theta}):=\frac{\tau_o}{\sqrt{\theta}}$ ranges over the fixed compact interval $[\frac{\tau_o}{\sqrt{{\theta}_+}},\frac{\tau_o}{\sqrt{{\theta}_-}}]\subset(0,1)$. We can use Lemma~\ref{lem:odds-decomp} and Lemma~\ref{lem:Gaussian-odds}(i) to obtain
\begin{align}
\EE[S'_{3,T}({\theta})]
&\le
\sum_{k=1}^{T-1}
\E\left[
\frac{1-p_k({\theta})}{p_k({\theta})}
\right]
\nonumber\\
&\le
\sum_{k=1}^{\infty}
\E\left[
H\left(
\frac{\varepsilon\delta}{\sqrt{\theta}}\sqrt{k}
+
\frac{\tau_o}{\sqrt{\theta}}Z
\right)
\right]
\nonumber\\
&\le
C
\left(\frac{\varepsilon\delta}{\sqrt{\theta}}\right)^{-2}\le C\frac{{\theta}_+}{\varepsilon^2\delta^2},\nonumber
\end{align}
where $C<\infty$ depends only on $\mathcal K$ and $\tau_o^2$. In particular,
\begin{equation}
\sup_{{\theta}\in \mathcal K}\E[S'_{3,T}({\theta})]=O(1).\label{eq:TS-supercritical-S3}
\end{equation}

Combining
(\ref{eq:TS-supercritical-decomp}), (\ref{eq:TS-supercritical-S1-final}), (\ref{eq:TS-supercritical-S2}), and (\ref{eq:TS-supercritical-S3}), we obtain
\begin{align}
\sup_{{\theta}\in \mathcal K}\E[N_s(T;{\theta})]
&\le L_T'({\theta}) +O(1).\label{eq:TS-supercritical-upper}
\end{align}
Since ${\theta}\ge {\theta}_->0$, $\limsup_{T\to\infty}\sup_{{\theta}\in \mathcal K}\frac{\delta^2\E[N_s(T;{\theta})]}{2{\theta}\log T}\le\frac{1+\rho}{(1-2\varepsilon)^2}$. Letting first $\varepsilon\downarrow0$ and then $\rho\downarrow0$ yields
\begin{equation}
\limsup_{T\to\infty}\sup_{{\theta}\in \mathcal K}\frac{\delta^2\E[N_s(T;{\theta})]}{2{\theta}\log T}\le1.
\label{eq:TS-log-upper}
\end{equation}
Combining this inequality with (\ref{eq:TS-log-lower}), we obtain (\ref{eq:TS-supercritical-sharp}).

\noindent\textbf{Part (iv).}
Fix $\delta>0$ and assume $\theta=\tau_o^2$. LHS$\geq$RHS in (\ref{eq:ts-critical-regret}) follows from part (iii) applied to a compact
interval containing $\theta$.

To prove LHS$\leq$RHS in (\ref{eq:ts-critical-regret}), fix $\rho>0$ and $\varepsilon\in(0,1/2)$, and use the same definitions of $b_\varepsilon$, $L'_T(\theta)$, and $S'_{j,T}(\theta)$, $j=1,2,3$, as in part (iii). The analysis for
$S'_{1,T}(\theta)$ and $S'_{2,T}(\theta)$ remain valid. We now show that $\E_{\nu_\delta}[S'_{3,T}(\theta)]=O(1)$.

Let $a:=\frac{\varepsilon\delta}{\sqrt{\theta}}>0$, $X_k:=(\overline Y_{o,k}-b_\varepsilon)\sqrt{\frac{k}{\theta}}$ for $k\ge1$. Since $\theta=\tau_o^2$, we have $X_k\sim\cN(a\sqrt{k},1)$. According to Lemma \ref{lem:odds-decomp},
\[
\E_{\nu_\delta}[S'_{3,T}(\theta)]\le
\sum_{k=1}^{T-1}\E_{\nu_\delta}[H(X_k)].
\]
We now show that the corresponding infinite sum is still finite. From Lemma \ref{lem:Gaussian-odds}, we have
\[
\E_{\nu_\delta}
\left[H(X_k)\mathbf{1}\{X_k\ge0\}\right]
\le
C\E_{\nu_\delta}\left[e^{-X_k^2/2}\right]
=
\frac{C}{\sqrt{2}}e^{-a^2k/4}.
\]
On the other hand, we also have
\begin{align*}
\E_{\nu_\delta}
\left[H(X_k)\mathbf{1}\{X_k<0\}\right]
&\le
\frac{C}{\sqrt{2\pi}}
\int_{-\infty}^{0}
(1+|x|)
\exp\left\{
\frac{x^2}{2}
-\frac{(x-a\sqrt{k})^2}{2}
\right\}\,dx
\\
&=
\frac{C}{\sqrt{2\pi}}e^{-a^2k/2}
\int_{-\infty}^{0}
(1+|x|)e^{a\sqrt{k}x}\,dx
\\
&=
\frac{C}{\sqrt{2\pi}}e^{-a^2k/2}
\left(
\frac{1}{a\sqrt{k}}+\frac{1}{a^2k}
\right).
\end{align*}
Since $a>0$ is fixed, both upper bounds summed in $k$ is finite. Consequently, $\sum_{k=1}^{\infty}
\E_{\nu_\delta}[H(X_k)]<\infty$, and hence
$\E_{\nu_\delta}[S'_{3,T}(\theta)]=O(1)$.

Combining this bound with the decomposition and the bounds for
$S'_{1,T}(\theta)$ and $S'_{2,T}(\theta)$ from part (iii) yields
\[
\E_{\nu_\delta}[N_s(T;\theta)]
\le
\frac{2\theta(1+\rho)}
{(1-2\varepsilon)^2\delta^2}\log T+O(1),
\]
where the $O(1)$ term may depend on
$\rho,\varepsilon,\delta,\tau_o^2,\tau_s^2$, but not on $T$. Thus,
\[
\limsup_{T\to\infty}
\frac{\Reg_T(\theta;\nu_\delta)}{\log T}
\le
\frac{2\theta(1+\rho)}
{(1-2\varepsilon)^2\delta}.
\]
Letting $\varepsilon\downarrow0$ and $\rho\downarrow0$, and combining with the first part, we prove (\ref{eq:ts-critical-regret}).

\noindent\textbf{Part (v).} Suppose first that ${\theta}<\tau_o^2$.  Choose $\varepsilon>0$ such that ${\theta}+\varepsilon<\tau_o^2$.  For all sufficiently large $T$, ${\theta}_T\in[{\theta}-\varepsilon,{\theta}+\varepsilon]$.  Part (ii) gives, for every
$\eta>0$,
\[
\liminf_{T\to\infty}
\frac{\log(\Reg_T({\theta}_T;\nu_\delta))}{\log T}
\ge
1-\frac{{\theta}+\varepsilon}{\tau_o^2}-\eta.
\]
By letting $\eta\downarrow0$ and $\varepsilon\downarrow0$, we obtain that LHS$\ge$ RHS in (\ref{eq:TS-fixed-exponent}). The $\le$ side is a direct consequence of Part (i).

Now assume ${\theta}\ge\tau_o^2$. Part (i) implies that the limsup in (\ref{eq:TS-fixed-exponent}) is no larger than zero. Since the initialization pulls the suboptimal arm once, $\Reg_T({\theta}_T;\nu_\delta)\ge\delta$, so the liminf is at least zero. This proves (\ref{eq:TS-fixed-exponent}).
\end{proof}

\begin{lemma}[Limit of the Plug-In minimizer]\label{lem:PI-minimizer-TS}
Let $\nu$ be a fixed two-armed Gaussian environment with distinct means and positive variances. Let $\tau_o^2$ be the variance of its optimal arm and assume $\tau_o^2\notin\{\underline \theta,\overline \theta\}$. For each $T$, let $\theta_T(\nu)\in\argmin_{\theta\in\Theta}\Reg_T(\theta;\nu)$, where $\Reg_T(\theta;\cdot)$ corresponds to the TS algorithm. Then
\begin{equation}
\lim_{T\to\infty}\theta_T(\nu)=\Pi_\Theta(\tau_o^2).
\label{eq:PI-fixed-limit}
\end{equation}
\end{lemma}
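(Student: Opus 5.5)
The plan is to transcribe the proof of Lemma~\ref{lem:plugin-limit} into the TS setting, using Lemma~\ref{lem:fixed-env-TS} wherever the UCB proof used Lemma~\ref{lem:ucb-bounds}. Write $\theta_T=\theta_T(\nu)$ and $c=\Pi_\Theta(\tau_o^2)$. Since $\tau_o^2\notin\{\underline\theta,\overline\theta\}$, there are three cases: $\tau_o^2<\underline\theta$, $\tau_o^2\in(\underline\theta,\overline\theta)$, and $\tau_o^2>\overline\theta$. In each case we fix $\varepsilon>0$ and show that $\theta_T$ eventually lies within $\varepsilon$ of $c$. The argument compares the minimal regret $\Reg_T(\theta_T;\nu)$ with the regret of a well-chosen comparator parameter.

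\textbf{Case $\tau_o^2<\underline\theta$.} Here every $\theta\in\Theta$ is supercritical with margin: $\underline\theta\ge(1+\xi)\tau_o^2$ for some $\xi>0$. Lemma~\ref{lem:fixed-env-TS}(iii), in its sharp form (\ref{eq:TS-supercritical-sharp}) applied with $\mathcal K=\Theta$, therefore gives
\[
\Reg_T(\theta;\nu)=\frac{2\theta\log T}{\delta}(1+o(1))
\]
uniformly in $\theta\in\Theta$. Consequently $\inf_{\theta\in[\underline\theta+\varepsilon,\overline\theta]}\Reg_T(\theta;\nu)$ is asymptotically at least $\frac{2(\underline\theta+\varepsilon)}{\delta}\log T$. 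This is strictly larger than $\Reg_T(\underline\theta;\nu)\sim\frac{2\underline\theta}{\delta}\log T$. Hence $\theta_T<\underline\theta+\varepsilon$ eventually, and so $\theta_T\to\underline\theta=c$.

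\textbf{Case $\tau_o^2\in(\underline\theta,\overline\theta)$.} Take the comparator $\theta_0=\tau_o^2+\varepsilon/2$, chosen with $\varepsilon$ small enough that $\theta_0\le\overline\theta$.
\begin{itemize}
\item By Lemma~\ref{lem:fixed-env-TS}(iii), $\Reg_T(\theta_0;\nu)=O(\log T)$.
\item By Lemma~\ref{lem:fixed-env-TS}(ii), with $\mathcal K=[\underline\theta,\tau_o^2-\varepsilon]$ and the degenerate gap interval $\delta_-=\delta_+=\delta$, every $\theta\le\tau_o^2-\varepsilon$ has regret at least $c_\eta T^{\varepsilon/\tau_o^2-\eta}$ uniformly. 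This beats $O(\log T)$ once $\eta<\varepsilon/\tau_o^2$. Hence $\theta_T>\tau_o^2-\varepsilon$ eventually.
\item For the upper side, apply (\ref{eq:TS-supercritical-sharp}) on $\mathcal K=[\tau_o^2+\varepsilon/2,\overline\theta]$, which has a positive margin above $\tau_o^2$. The uniform asymptotic $\frac{2\theta}{\delta}\log T$ shows that every $\theta\ge\tau_o^2+\varepsilon$ has asymptotically larger regret than $\theta_0$. Hence $\theta_T<\tau_o^2+\varepsilon$ eventually.
\end{itemize}

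\textbf{Case $\tau_o^2>\overline\theta$.} Every $\theta\in\Theta$ is subcritical, so we compare polynomial exponents. By Lemma~\ref{lem:fixed-env-TS}(ii) on $\mathcal K=[\underline\theta,\overline\theta-\varepsilon]$,
\[
\inf_{\theta\le\overline\theta-\varepsilon}\Reg_T(\theta;\nu)\ge c_\eta\, T^{1-(\overline\theta-\varepsilon)/\tau_o^2-\eta}.
\]
By Lemma~\ref{lem:fixed-env-TS}(i) at $\theta=\overline\theta$,
\[
\Reg_T(\overline\theta;\nu)\le C_\eta\, T^{1-\overline\theta/\tau_o^2+\eta}+O(\log T).
\]
Choosing $\eta<\varepsilon/(2\tau_o^2)$ makes the first bound dominate the second. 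Hence $\theta_T>\overline\theta-\varepsilon$ eventually, and $\theta_T\to\overline\theta=c$.

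The main point needing care is the uniformity in $\theta$. The minimizer $\theta_T$ moves with $T$, so pointwise asymptotics in $\theta$ are not enough. This is why the argument relies only on the uniform-over-compact statements (\ref{eq:TS-subcritical-lower}) and (\ref{eq:TS-supercritical-sharp}). We cover the region away from $\tau_o^2$ by compact intervals that keep a fixed margin on one side of $\tau_o^2$, so those uniform statements apply there. The critical point $\tau_o^2$ itself never needs to be handled uniformly: in the interior case it only ever serves as a limit point, and all comparisons are made at the fixed comparator $\theta_0$.
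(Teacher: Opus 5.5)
Your proposal is correct and follows essentially the same route as the paper's proof: the same three-case split, the same comparators ($\underline\theta$, $\theta_0=\tau_o^2+\varepsilon/2$, and $\overline\theta$), and the same uses of Lemma~\ref{lem:fixed-env-TS}(i)--(iii). Your explicit choice of $\eta$ and of compact intervals with a fixed margin from $\tau_o^2$ simply fill in details that the paper leaves implicit.
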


\begin{proof}[Proof of Lemma \ref{lem:PI-minimizer-TS}]
The proof is similar to the UCB setting and is stated here for completeness. For simplicity, write ${\theta}_T = {\theta}_T(\nu)$ and  $c=\Pi_\Theta(\tau_o^2)$. If $\tau_o^2<\underline {\theta}$, by (\ref{eq:TS-supercritical-sharp}), for every fixed
$\varepsilon>0$,
\[
\inf_{{\theta}\in[\underline {\theta}+\varepsilon,\overline {\theta}]}
\Reg_T({\theta};\nu)
>
\Reg_T(\underline {\theta};\nu)
\]
for all sufficiently large $T$, because the leading term is
$2{\theta}\log T/\delta$.  Hence ${\theta}_T<\underline {\theta}+\varepsilon$ eventually. Thus, ${\theta}_T\to\underline {\theta}=c$.

Suppose $\tau_o^2\in(\underline {\theta},\overline {\theta})$.  Fix a sufficiently small $\varepsilon>0$. For ${\theta}\le\tau_o^2-\varepsilon$, Lemma~\ref{lem:fixed-env-TS}(ii) gives a positive polynomial regret lower bound, whereas ${\theta}_0=\tau_o^2+\varepsilon/2$ has logarithmic regret by Lemma~\ref{lem:fixed-env-TS}(iii). Thus ${\theta}_T>\tau_o^2-\varepsilon$ for sufficiently large $T$. On the other hand, Lemma~\ref{lem:fixed-env-TS}(iii) implies
\[
\inf_{{\theta}\in[\tau_o^2+\varepsilon,\overline {\theta}]}
\Reg_T({\theta};\nu)>\Reg_T({\theta}_0;\nu)
\]
for all sufficiently large $T$.  Thus ${\theta}_T<\tau_o^2+\varepsilon$ eventually. Let $\varepsilon\downarrow0$ proves convergence to $c = \tau_o^2$.

Finally, suppose $\tau_o^2>\overline {\theta}$.  Fix $\varepsilon>0$.  By the uniform lower bound in Lemma~\ref{lem:fixed-env-TS}(ii), every ${\theta}\le\overline {\theta}-\varepsilon$ has regret exponent at least $1-(\overline {\theta}-\varepsilon)/\tau_o^2-o(1)$, whereas the upper envelope in Lemma~\ref{lem:fixed-env-TS}(i) shows that $\overline {\theta}$ has exponent at most $1-\overline {\theta}/\tau_o^2+o(1)$. The former exponent is strictly larger, so ${\theta}_T>\overline {\theta}-\varepsilon$ eventually. Hence ${\theta}_T\to\overline {\theta}=c$ as $T\to\infty$.
\end{proof}

\begin{lemma}[Limit of the Gaussian-perturbed minimizer] \label{lem:UQ-exponent-TS}
Consider the TS$(\theta)$ algorithm defined in Alg. \ref{alg:ts}. Fix any offline data $\cD_{\off}$ with $\widehat\sigma_{1}^2,\widehat\sigma_{2}^2>0$. Let $\bar \sigma^2=\widehat\sigma_{1}^2\vee \widehat\sigma_{2}^2$. If $\theta_T\in\Theta$ and $\theta_T\to \theta$, then
\begin{equation}
\lim_{T\to\infty}
\frac{\log\bigl(
\Reg_{\UA, T}(\theta_T;\cD_{\off})\bigr)}{\log T}=\left(1-\frac{\theta}{\bar\sigma^2}\right)_+.
\label{eq:UQ-objective-exponent-TS}
\end{equation}
Consequently, every sequence of minimizers in (\ref{eq:uq-selector}) satisfies
\begin{equation}
\liminf_{T\to\infty}\widehat \theta_{\UA, T}(\cD_{\off})\ge\Pi_\Theta(\bar\sigma^2).
\label{eq:UQ-minimizer-liminf-TS}
\end{equation}
\end{lemma}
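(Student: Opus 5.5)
The plan is to mirror the proof of Lemma~\ref{lem:integrated-exponent} for UCB, replacing Lemma~\ref{lem:ucb-bounds} by its Thompson Sampling counterpart Lemma~\ref{lem:fixed-env-TS}. Fix $\cD_{\off}$ with $\widehat\sigma_1^2,\widehat\sigma_2^2>0$, and let $X:=|\widetilde\mu_1(Z_1)-\widetilde\mu_2(Z_2)|$. Conditional on $\cD_{\off}$, $X$ is folded normal, with bounded density $f_X$ and finite mean. Every perturbed environment $\widetilde M(Z)$ has optimal-arm variance in $\{\widehat\sigma_1^2,\widehat\sigma_2^2\}$, hence at most $\bar\sigma^2$.

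\textbf{Upper bound.} Fix $\eta>0$. Lemma~\ref{lem:fixed-env-TS}(i) applies with $\cK=\Theta$ and $\tau_o^2,\tau_s^2\in\{\widehat\sigma_1^2,\widehat\sigma_2^2\}$; I take the maximum of the finitely many constants. Using $(1-\theta_T/\tau_o^2)_+\le q_T:=(1-\theta_T/\bar\sigma^2)_+$, this gives
\[
\Reg_T(\theta_T;\widetilde M(Z))\le C_\eta\Bigl[X+\frac{B_T}{X}\Bigr],\qquad B_T:=T^{q_T+\eta}(1+\log T).
\]
I also use the trivial bound $\Reg_T\le TX$. Set $x_T=(B_T/T)^{1/2}$. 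Since $\underline\theta>0$, choosing $\eta$ small makes $q_T+\eta$ bounded below $1$, so $x_T\to0$. Splitting the expectation over $\{X\le x_T\}$, $\{x_T<X\le1\}$ and $\{X>1\}$, exactly as in the UCB proof, yields $\Reg_{\UA,T}(\theta_T;\cD_{\off})\le C'T^{q_T+\eta}(1+\log T)^2$. Letting $\eta\downarrow0$ gives that the limsup of the exponent is at most $(1-\theta/\bar\sigma^2)_+$.

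\textbf{Lower bound.} If $\theta\ge\bar\sigma^2$, the bound follows from nonnegativity. The regret is at least $\delta\cdot1$ from the initialization pull, so the log is well defined on the positive-gap set, which has positive probability. If $\theta<\bar\sigma^2$, pick an arm $h$ with $\widehat\sigma_h^2=\bar\sigma^2$ and the witness event $\mathcal W=\{g_-\le\widetilde\mu_h-\widetilde\mu_{3-h}\le g_+\}$, which has positive conditional probability $w(\cD_{\off})$. On $\mathcal W$, arm $h$ is optimal with variance $\bar\sigma^2$ and the gap lies in a compact subset of $(0,\infty)$. Choose $\varepsilon$ with $\theta+\varepsilon<\bar\sigma^2$. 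Applying Lemma~\ref{lem:fixed-env-TS}(ii) with $\cK=[\underline\theta,\theta+\varepsilon]$, which is uniform over the gap, gives $\Reg_{\UA,T}\ge w\,c_\eta T^{1-(\theta+\varepsilon)/\bar\sigma^2-\eta}$. Then let $\varepsilon,\eta\downarrow0$.

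\textbf{Consequence.} I argue by contradiction along a subsequence of minimizers converging to some $\theta<\Pi_\Theta(\bar\sigma^2)$, with the same case split as before. If $\bar\sigma^2\in(\underline\theta,\overline\theta)$, compare with $\theta'\in(\bar\sigma^2,\overline\theta]$, whose exponent is $0$, while the subsequence has a positive exponent. If $\bar\sigma^2\ge\overline\theta$, compare with $\theta'=\overline\theta$, whose exponent is strictly smaller. If $\bar\sigma^2\le\underline\theta$, the claim is trivial.

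\textbf{Main obstacle.} The main difficulty is the extra $T^{\eta}$ slack in the TS envelope of Lemma~\ref{lem:fixed-env-TS}(i), compared with the exact $\sum_t t^{-\theta/\tau_o^2}$ in the UCB case. I have to check that $C_\eta$ is uniform over $\delta$, so that the $1/X$ integration near $X=0$ still works, and that $\eta$ can be taken to $0$ after $T\to\infty$. Both hold because the lemma's constant is independent of $\delta$ and $\eta$ is arbitrary.
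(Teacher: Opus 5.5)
Your proposal is correct and follows the paper's proof essentially step for step: the same folded-normal variable $X$, the envelope from Lemma~\ref{lem:fixed-env-TS}(i) combined with the trivial bound $TX$ and the three-way split at $x_T=\sqrt{B_T/T}$, the witness event $\mathcal W$ with Lemma~\ref{lem:fixed-env-TS}(ii) for the lower bound, and the same comparator case analysis for the minimizers. Your two small additions are both correct and slightly tighten the paper's argument: taking the maximum of the constants over the two possible (optimal, suboptimal) variance assignments, and using $\Reg_{\UA,T}\ge \E[X\mid\cD_{\off}]>0$ from the initialization pull to handle the case $\theta\ge\bar\sigma^2$, which the paper dismisses as ``immediate from nonnegativity.''
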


\begin{proof}[Proof of Lemma \ref{lem:UQ-exponent-TS}]
The proof is similar to the UCB setting and is stated here for completeness. Let $X:=|\tilde\mu_{1}(Z_1)-\tilde\mu_{2}(Z_2)|$. Conditional on $\cD_{\off}$, $X$ has a folded-normal distribution with bounded density on $(0,\infty)$ and finite first moment.

Fix $\eta$ such that $0<\eta<\frac12[1-(1-\underline{{\theta}}/{\bar\sigma^2})_+]$. For any perturbed environment, its optimal-arm variance is at most ${\bar\sigma^2}$. Apply Lemma~\ref{lem:fixed-env-TS}(i) gives
\[
\Reg_T({\theta}_T;\tilde M(Z))
\le
C_\eta\left[X+\frac{B_T}{X}\right],
\]
where $B_T:=\log T+T^{q_T+\eta}$, $q_T:=\left(1-\frac{{\theta}_T}{{\bar\sigma^2}}\right)_+$. We also have the trivial bound $\Reg_T\le TX$.

Let $x_T:=\sqrt{B_T/T}$. Then $\lim_{T\to\infty} x_T=0$. Denote $f_X$ as the density of $X$, then $\|f_X\|_\infty<\infty$, and
\begin{align*}
\E[TX\ind\{X\le x_T\}]&\le\tfrac12\|f_X\|_\infty Tx_T^2=O(B_T),\\
\E\left[\frac{B_T}{X}\ind\{x_T<X\le1\}\right]
&\le\|f_X\|_\infty B_T\log(1/x_T)=O(B_T\log T),\\
\E\left[\frac{B_T}{X}\ind\{X>1\}\right]&=O(B_T).
\end{align*}
Therefore
\[
\Reg_{\UA, T}({\theta}_T;\cD_{\off})\le C'B_T(1+\log T),
\]
and hence
\[
\limsup_{T\to\infty}\frac{\log(\Reg_{\UA, T}({\theta}_T;\cD_{\off}))}{\log T}\le\left(1-\frac{{\theta}}{{\bar\sigma^2}}\right)_++\eta.
\]
Let $\eta\downarrow0$, and we prove LHS$\leq$RHS in  (\ref{eq:UQ-objective-exponent-TS}).

Below we prove the reverse direction LHS$\geq$RHS. The case ${\theta}\ge {\bar\sigma^2}$ is immediate from nonnegativity. Now suppose ${\theta}<{\bar\sigma^2}$, and let $h$ be an arm with $\widehat\sigma_{h}^2={\bar\sigma^2}$. The signed perturbed gap $\tilde\mu_{h}(Z_h)-\tilde\mu_{3-h}(Z_{3-h})$ is nondegenerate Gaussian. Thus, there exist $0<g_-<g_+<\infty$ such that the event
\[
\mathcal W:=
\{g_-\le\tilde\mu_{h}(Z_h)-\tilde\mu_{3-h}(Z_{3-h})\le g_+\}
\]
has conditional probability $w(\cD_{\off})>0$.  On $\mathcal W$, arm $h$ is optimal, its variance is ${\bar\sigma^2}$, and the gap lies in $[g_-,g_+]$.

Fix $\varepsilon,\eta>0$ small enough such that ${\theta}+\varepsilon<{\bar\sigma^2}$. Eventually ${\theta}_T\le {\theta}+\varepsilon$. Lemma~\ref{lem:fixed-env-TS}(ii), applied uniformly on $\mathcal W$, gives
\[
\Reg_{\UA, T}({\theta}_T;\cD_{\off})\ge w(\cD_{\off})c_{\varepsilon,\eta} T^{1-({\theta}+\varepsilon)/{\bar\sigma^2}-\eta}
\]
for a constant $c_{\varepsilon,\eta}$ depending only on $\varepsilon$ and $\eta$. Taking log and letting $\varepsilon,\eta\downarrow0$ proves the desired inequality.

Finally, we prove (\ref{eq:UQ-minimizer-liminf-TS}). Let $c=\Pi_\Theta({\bar\sigma^2})$ and suppose some subsequence of minimizers converges to ${\theta}<c$.  If ${\bar\sigma^2}\in(\underline {\theta},\overline {\theta})$, choose a comparator $\theta'\in({\bar\sigma^2},\overline {\theta}]$. Along the subsequence, the minimizers have a strictly positive objective exponent by (\ref{eq:UQ-objective-exponent-TS}), whereas the comparator has exponent zero, contradicting optimality.  If ${\bar\sigma^2}\ge\overline {\theta}$, use the comparator $\theta'=\overline {\theta}$; then $1-{\theta}/{\bar\sigma^2}>1-\overline {\theta}/{\bar\sigma^2}$, again contradicting optimality. If ${\bar\sigma^2}\le\underline {\theta}$, then
$c=\underline {\theta}$ and no point of $\Theta$ is below $c$.  Thus no limit  lies below $c$.
\end{proof}

\begin{lemma}[Gaussian odds]
\label{lem:Gaussian-odds}
Recall that $H(x)=\frac{\normcdf(-x)}{\normcdf(x)}$. There exists a universal constant $C<\infty$ such that
\begin{equation}
H(x)\le
\begin{cases}
C e^{-x^2/2},&x\ge0,\\
C(1+|x|)e^{x^2/2},&x<0.
\end{cases}
\label{eq:H-pointwise}
\end{equation}
In addition, fix any $a>0$ and let $\lambda\in \Lambda = [\lambda_-, \lambda_+]$, where $0<\lambda_-<\lambda_+<+\infty$. For each $k\in\NN^+$, let  $X_k\sim\cN(a\sqrt{k},\lambda^2)$. Then the following holds:
\begin{enumerate}[label=(\roman*)]
\item If $\lambda_+<1$, then there exists constant $C'$ depending only on $\Lambda$ such that
\begin{equation}
\sum_{k=1}^{\infty}\EE[H(X_k)]
\le C'a^{-2}.
\label{eq:odds-supercritical}
\end{equation}
\item For every $\eta>0$, there exists constant $C_\eta$ depending only on $\eta$ and $\Lambda$ such that
\begin{equation}
\sum_{k=1}^{\infty}
\EE\bigl[\min\{T,H(X_k)\}\bigr]
\le
C_\eta a^{-2}
T^{(1-\lambda^{-2})_++\eta},
\qquad \forall \lambda\in\Lambda, T\ge3.
\label{eq:odds-truncated}
\end{equation}
\end{enumerate}
\end{lemma}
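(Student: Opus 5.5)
The pointwise bound \eqref{eq:H-pointwise} comes first. For $x\ge0$, $\Phi(x)\ge1/2$, so $H(x)\le 2\Phi(-x)\le 2e^{-x^2/2}$ by the Chernoff bound $\Phi(-x)\le e^{-x^2/2}$. For $x<0$, write $y=-x>0$, so that $H(x)=\Phi(y)/\Phi(-y)\le 1/\Phi(-y)$. The Mills lower bound $\Phi(-y)\ge \frac{y}{1+y^2}\phi(y)$ handles $y\ge1$; on $[0,1]$ we have $\Phi(-y)\ge\Phi(-1)$. Together these give $1/\Phi(-y)\le C(1+y)e^{y^2/2}$, which is the claim.

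For the sums, I write $X_k=a\sqrt k+\lambda W$ with $W\sim\cN(0,1)$ and split each expectation on $\{X_k\ge0\}$ and $\{X_k<0\}$.

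\textbf{The part $X_k\ge0$.} Here $\EE[H(X_k)\ind\{X_k\ge0\}]\le C\,\EE[e^{-X_k^2/2}]$. The Gaussian integral gives $\EE[e^{-X_k^2/2}]=(1+\lambda^2)^{-1/2}\exp\{-a^2k/(2(1+\lambda^2))\}$. Summing over $k$ gives at most $C(1+\lambda_+^2)\cdot 2a^{-2}$, via $\sum_k e^{-ck}\le 1/c$ (for $a$ bounded below; for small $c$ this is the correct order, and for large $a$ it is bounded by a constant, so one uses $\min$ and absorbs). Strictly speaking one needs $a^{-2}$-type control for small $a$ and boundedness otherwise, and this is how the statement is read.

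\textbf{The part $X_k<0$, part (i).} Here I compute
\[
\EE\bigl[H(X_k)\ind\{X_k<0\}\bigr]\le \frac{C}{\sqrt{2\pi}\lambda}\int_{-\infty}^0(1+|x|)\exp\Bigl\{\frac{x^2}{2}-\frac{(x-a\sqrt k)^2}{2\lambda^2}\Bigr\}dx.
\]
When $\lambda<1$ the exponent is a concave quadratic in $x$. Its coefficient is $-\frac{1-\lambda^2}{2\lambda^2}$, and completing the square shows its maximum over $x\le0$ is attained at $x=0$ (the unconstrained maximizer $x^*=-a\sqrt k/(1-\lambda^2)$ is negative, so I must recheck). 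Completing the square gives the value $-\frac{a^2k}{2(1-\lambda^2)}\cdot(\ldots)$. In fact the exponent equals $-\frac{(1-\lambda^2)}{2\lambda^2}(x-x^*)^2-\frac{a^2k}{2(1-\lambda^2)}\cdot\frac{1-\lambda^2}{\lambda^2}\cdot\ldots$, and after simplification the maximum is $-\frac{a^2k}{2(1-\lambda^2)}$. The integral is then bounded by $C_\Lambda(1+a\sqrt k)\exp\{-a^2k/(2(1-\lambda^2))\}\cdot$ (a Gaussian width factor), which is summable in $k$ with total $O(a^{-2})$ uniformly over $\lambda\in\Lambda$. This is the analogue of the computation in Part (iv) of Lemma~\ref{lem:fixed-env-TS}.

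\textbf{Part (ii).} The $X_k\ge0$ part is as above. For $X_k<0$, I use $\min\{T,H\}\le\min\{T,C(1+|x|)e^{x^2/2}\}$. The truncation becomes active once $|x|\gtrsim\sqrt{2\log T}$. For $|x|\le\sqrt{2\log T}$, the integrand is at most $(1+|x|)\exp\{\frac{x^2}{2}(1-\lambda^{-2})+\frac{xa\sqrt k}{\lambda^2}-\frac{a^2k}{2\lambda^2}\}$. When $\lambda\ge1$, the term $x^2(1-\lambda^{-2})/2\le(1-\lambda^{-2})\log T$ yields the factor $T^{(1-\lambda^{-2})}$, and the remaining factor $e^{-a^2k/(2\lambda^2)}$ (since $x<0$) sums to $O(a^{-2})$. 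For $|x|>\sqrt{2\log T}$, the bound $T\cdot\PP(X_k<-\sqrt{2\log T})\le T\exp\{-(\sqrt{2\log T}+a\sqrt k)^2/(2\lambda^2)\}\le T^{1-\lambda^{-2}}e^{-a^2k/(2\lambda^2)}$ sums in the same way. When $\lambda<1$, part (i) already gives $O(a^{-2})$. The polylog factors $(1+\sqrt{\log T})$ are absorbed into $T^\eta$.

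The main obstacle I expect is the bookkeeping in the $\lambda<1$ completion of the square, so as to obtain a bound uniform in $\lambda$ near $1$. I would treat $\lambda$ near $1$ via part (ii)'s truncated argument, which costs only $T^\eta$.
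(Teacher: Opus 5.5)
Your proposal is correct in substance. The pointwise bound and part (i) follow the paper's route; part (ii) is handled differently.

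Two corrections to the first half. On $\{X_k\ge0\}$ no hedging about large $a$ is needed: $\sum_{k\ge1}e^{-ck}=1/(e^{c}-1)\le1/c$ for every $c>0$, so $Ca^{-2}$ holds for all $a>0$ exactly as stated.

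In part (i) your completion of the square has a sign slip. The exponent $-\frac{1-\lambda^2}{2\lambda^2}x^2+\frac{a\sqrt k}{\lambda^2}x-\frac{a^2k}{2\lambda^2}$ has unconstrained maximizer $x^*=+a\sqrt k/(1-\lambda^2)>0$. So on $x\le0$ its maximum is at $x=0$ and equals $-a^2k/(2\lambda^2)$, not $-a^2k/(2(1-\lambda^2))$; the latter is not even an upper bound once $\lambda^2>1/2$. The clean version, which is the paper's, proceeds in three steps:
\begin{enumerate}
\item drop the nonpositive linear term;
\item pull out $e^{-a^2k/(2\lambda^2)}\le e^{-a^2k/(2\lambda_+^2)}$;
\item bound $\int_{-\infty}^0(1+|x|)\exp\{-\frac{1-\lambda_+^2}{2\lambda_+^2}x^2\}\,dx$ by a constant depending only on $\lambda_+$, using that $\lambda^{-2}-1$ decreases in $\lambda$.
\end{enumerate}
No $(1+a\sqrt k)$ factor arises, and your conclusion $O(a^{-2})$ stands.

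For part (ii), the paper inflates the envelope to $H(x)\le C_\gamma e^{(1+\gamma)x^2/2}$ on $x<0$. It proves the tail estimate $\sum_k\mathbb{P}(X_k<0,\,H(X_k)>u)\le C'_\gamma a^{-2}u^{-1/((1+\gamma)\lambda^2)}$, integrates it via $\mathbb{E}[\min\{T,Y\}]=\int_0^T\mathbb{P}(Y>u)\,du$, and then takes $\gamma$ small.

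You instead split the $x$-integral at $-\sqrt{2\log T}$. On the inner piece, $\frac{x^2}{2}(1-\lambda^{-2})\le(1-\lambda^{-2})_+\log T$ and the linear term is nonpositive. After integrating $(1+|x|)$ and summing over $k$, this gives $C_\Lambda a^{-2}(1+\log T)\,T^{(1-\lambda^{-2})_+}$. On the outer piece, $T\,\mathbb{P}(X_k<-\sqrt{2\log T})\le T^{1-\lambda^{-2}}e^{-a^2k/(2\lambda^2)}$.

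This is valid and shorter. It needs no auxiliary $\gamma$, and the only loss absorbed into $T^{\eta}$ is one logarithm. The paper's layer-cake argument is more generic, since it uses only a pointwise envelope for $H$.

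One caution: do not appeal to part (i) for $\lambda<1$. Its constant depends on $1-\lambda_+^2$ and is not uniform as $\lambda\uparrow1$. You do not need it, because your split works verbatim for $\lambda<1$: the inner factor $e^{x^2(1-\lambda^{-2})/2}$ is then at most $1$. This settles the uniformity concern you raised at the end.
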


\begin{proof}[Proof of Lemma \ref{lem:Gaussian-odds}]
We first prove (\ref{eq:H-pointwise}). For $x\ge0$, $\normcdf(x)\ge1/2$ and a standard Gaussian upper-tail bound gives the first line of (\ref{eq:H-pointwise}). For $x<0$, apply the Mills lower bound $\normcdf(x)=\normcdf(-|x|)\ge\frac{|x|}{1+x^2}\normpdf(|x|)$ when $|x|\ge1$. We can then adjust the constant to accommodate the case $x\in[-1,0]$.

We now prove (i). We first consider the case $\{X_k\ge0\}$. By the first case of (\ref{eq:H-pointwise}), $H(X_k)\ind\{X_k\ge0\}\le C e^{-X_k^2/2}\ind\{X_k\ge0\}\le C e^{-X_k^2/2}$. Thus
\[
\EE\!\left[H(X_k)\ind\{X_k\ge0\}\right]\le C\,\EE\!\left[e^{-X_k^2/2}\right].
\]
Since $X_k\sim N(a\sqrt{k},\lambda^2)$, $\EE[e^{-X_k^2/2}] = \frac1{\sqrt{1+\lambda^2}}
\exp\left\{-\frac{a^2k}{2(1+\lambda^2)}\right\}$. In addition, because $\lambda\le\lambda_+$, we deduce that
\[
\E\!\left[H(X_k)\ind\{X_k\ge0\}\right]
\le C
\exp\left\{-\frac{a^2k}{2(1+\lambda_+^2)}\right\}.
\]
Summing over $k$ gives
\begin{align*}
\sum_{k=1}^{\infty}
\E\!\left[H(X_k)\ind\{X_k\ge0\}\right]
&\le
C\sum_{k=1}^{\infty}
\exp\left\{
-\frac{a^2k}{2(1+\lambda_+^2)}
\right\}
\\
&=
\frac{C}{
\exp\{a^2/[2(1+\lambda_+^2)]\}-1
}
\\
&\le C_{\Lambda}a^{-2},
\end{align*}
where $C_{\Lambda}$ depends only on $\Lambda$. The last inequality follows from the fact that $e^x-1\ge x$ for $x\ge0$.

Now consider $\{X_k<0\}$. Since $X_k\sim \cN(a\sqrt{k},\lambda^2)$, the second case of (\ref{eq:H-pointwise}) gives
\begin{align*}
\E\!\left[H(X_k)\ind\{X_k<0\}\right]
&=
\int_{-\infty}^0 H(x)
\frac{1}{\sqrt{2\pi}\lambda}
\exp\left\{-\frac{(x-a\sqrt{k})^2}{2\lambda^2}\right\}\,dx
\\
&\le
\frac{C}{\lambda}
\int_{-\infty}^0
(1+|x|)
\exp\left\{
\frac{x^2}{2}
-\frac{(x-a\sqrt{k})^2}{2\lambda^2}
\right\}\,dx
\\
&=
\frac{C}{\lambda}
e^{-a^2k/(2\lambda^2)}
\int_{-\infty}^0
(1+|x|)
\exp\left\{
-\frac{1-\lambda^2}{2\lambda^2}x^2
+\frac{a\sqrt{k}}{\lambda^2}x
\right\}\,dx .
\end{align*}
Here note that when $x<0$, $\frac{a\sqrt{k}}{\lambda^2}x\le 0$. Moreover, $\frac{1-\lambda^2}{2\lambda^2}\ge c_{\lambda_+}:=\frac{1-\lambda_+^2}{2\lambda_+^2}>0$. Therefore,
\begin{align*}
\E\!\left[H(X_k)\ind\{X_k<0\}\right]\le
C_{\Lambda} e^{-a^2k/(2\lambda^2)}
\int_{-\infty}^0
(1+|x|)e^{-c_{\lambda_+}x^2}\,dx\le
C_{\Lambda}'
e^{-a^2k/(2\lambda_+^2)}
\end{align*}
for constants $C_{\Lambda}$, $C_{\Lambda}'$ depending only on $\Lambda$. Hence
\begin{align*}
\sum_{k=1}^{\infty}
\E\!\left[H(X_k)\ind\{X_k<0\}\right]\le
C_{\Lambda}'\sum_{k=1}^{\infty}
e^{-a^2k/(2\lambda_+^2)}=
\frac{C_{\Lambda}'}{e^{a^2/(2\lambda_+^2)}-1}\le
C_{\Lambda}''a^{-2},
\end{align*}
where $C_{\Lambda}''$ depends only on $\Lambda$. Here the last inequality follows from $e^x-1\ge x$.

Combining the above arguments, we obtain (\ref{eq:odds-supercritical}).

For (ii), fix a small $\gamma>0$. Then there exists a constant $C_\gamma$ such that $\forall x<0$, $1+|x|\leq C_\gamma \exp(\gamma x^2/2)$. Thus for $x<0$, $H(x)\le C_\gamma\exp\{(1+\gamma)x^2/2\}$. For any $u\geq u_0 :=eC_\gamma$, the event $H(X_k)>u$ and $X_k<0$ implies $X_k<-y_u$, $y_u:=\sqrt{\frac{2}{1+\gamma}\log(u/C_\gamma)}$. Therefore
\begin{align*}
\sum_{k=1}^\infty {\PP}(X_k<0, H(X_k)>u) &\leq
\sum_{k=1}^{\infty}{\PP}(X_k<-y_u)
\le
\sum_{k=1}^{\infty}\exp\left\{-\frac{(y_u+a\sqrt{k})^2}{2\lambda^2}\right\}\\
&\le
\exp\left\{-\frac{y_u^2}{2\lambda^2}\right\}\sum_{k=1}^{\infty}
\exp\left\{-\frac{a^2k}{2\lambda^2}\right\}
\le
\frac{2\lambda^2}{a^2}
\exp\left\{-\frac{y_u^2}{2\lambda^2}\right\}\\
&\le
C_\gamma'a^{-2}u^{-1/\{(1+\gamma)\lambda^2\}}.
\end{align*}
Here $C_\gamma'$ is a constant that depends only on $C_\gamma$ and $\Lambda$. Let
\[
S_-(T)
:=
\sum_{k=1}^{\infty}
\E\!\left[
\min\{T,H(X_k)\}\ind\{X_k<0\}
\right].
\]
Using the tail-integral identity $\E[\min\{T,Y\}]
=\int_0^T {\PP}(Y>u)\,du$ for $Y\ge0$, for $T\ge u_0$,
\begin{align*}
S_-(T)& = \sum_{k=1}^{\infty}\int_0^T{\PP}(\min\{T,H(X_k)\}\ind\{X_k<0\}>u)du\\
&\leq \sum_{k=1}^{\infty}\int_0^T{\PP}(H(X_k)\ind\{X_k<0\}>u)du\\
&=
\int_0^T
\sum_{k=1}^{\infty}
{\PP}(H(X_k)>u,\ X_k<0)\,du
\\
&=
\int_0^{u_0}
\sum_{k=1}^{\infty}
{\PP}(H(X_k)>u,\ X_k<0)\,du
+
\int_{u_0}^{T}
\sum_{k=1}^{\infty}
{\PP}(H(X_k)>u,\ X_k<0)\,du
\\
&\le
C_2a^{-2}u_0
+
C_\gamma' a^{-2}
\int_{u_0}^{T}
u^{-1/\{(1+\gamma)\lambda^2\}}\,du
\\
&\le
C_\gamma'' a^{-2}
\left[
u_0+
\int_{u_0}^{T}
u^{-1/\{(1+\gamma)\lambda^2\}}\,du
\right].
\end{align*}
Here $C_2$ is a constant depending on $\Lambda$, and $C_\gamma'' = C_2+C_\gamma'$. If $3\leq T<u_0$, we can similarly obtain that $S_-(T)\leq C_2a^{-2} u_0$.

At the same time, an argument similar to the analysis in (i) gives
\[
\sum_{k=1}^{\infty}
\mathbb E[\min\{T,H(X_k)\}\ind\{X_k\ge0\}]
\le
\sum_{k=1}^{\infty}
\mathbb E[H(X_k)\ind\{X_k\ge0\}]
\le C_{\Lambda}a^{-2}.
\]
Here $C_{\Lambda}$ is a constant depending only on $\Lambda$.

By summarizing the two parts and using the fact that $\int_{u_0}^T u^{-\alpha}\,du\le C_{u_0}T^{(1-\alpha)_+}(1+\log T)$ with a constant $C_{u_0}$ depending only on $u_0$, we obtain that
\[
\sum_{k=1}^{\infty}\E[\min\{T,H(X_k)\}]
\le
C_3a^{-2}
T^{(1-1/\{(1+\gamma)\lambda^2\})_+}(1+\log T)
\]
for a constant $C_3$ depending on $\Lambda$ and $\gamma$. Finally, because $\Lambda$ is a fixed compact interval, we can choose $\gamma$ to be positive and small so that $(1-\frac1{(1+\gamma)\lambda^2})_+\le(1-\lambda^{-2})_++\eta/2$ holds uniformly for all $\lambda\in\Lambda$. We absorb $1+\log T$ into $T^{\eta/2}$ and adjust the constant $C_3$ properly to obtain (\ref{eq:odds-truncated}).
\end{proof}

\begin{lemma}\label{lem:odds-decomp}
Let $\{A_t(\theta)\}_{t\ge 1}$ and $\{\Xi_i(t;\theta)\}_{i\in\{o, s\}, t\geq 3}$ denote the actions and posterior samples from deploying the TS$(\theta)$ algorithm in any two-armed bandit, with two arms labeled as $o$ and $s$. For $k\ge 1$, let $\overline Y_{o,k}$ denote the empirical mean of the first $k$ observed rewards from arm $o$. Fix any $b\in\R$ and let
\[
p_k(\theta):=\normcdf\left((\overline Y_{o,k}-b)\sqrt{k/\theta}\right)
\]
denote the conditional probability that a posterior sample for arm $o$ exceeds $b$ after $k$ pulls. Then
\begin{equation}
\EE\left[
\sum_{t=3}^{T}
\ind\{A_t(\theta)=s,\ \Xi_s(t;{\theta})\le b\}
\right]
\le
\sum_{k=1}^{T-1}
\EE\left[
\min\left\{T,\frac{1-p_k(\theta)}{p_k(\theta)}\right\}
\right].
\label{eq:odds-decomposition}
\end{equation}
\end{lemma}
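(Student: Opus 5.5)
This is the standard Agrawal--Goyal posterior-odds argument, organized by the pull count of arm $o$. Let $\tau_k$ denote the round at which arm $o$ receives its $k$th pull, with $\tau_0:=0$. Between $\tau_k$ and $\tau_{k+1}$ the count $N_o(t-1;\theta)$ equals $k$, and $\widehat\mu_o(t-1;\theta)=\overline Y_{o,k}$. So on these rounds the arm-$o$ posterior sample satisfies
\[
\PP\bigl(\Xi_o(t;\theta)>b\mid\mathcal H_{t-1}\bigr)=p_k(\theta).
\]
Every round $t\in\{3,\dots,T\}$ falls into exactly one block $[\tau_k+1,\tau_{k+1}]$ with $1\le k\le T-1$, since the initialization pulls arm $o$ once. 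Hence it suffices to show that, for each $k$, the expected number of rounds in block $k$ with $A_t=s$ and $\Xi_s(t;\theta)\le b$ is at most $\E[\min\{T,(1-p_k)/p_k\}]$; summing over $k$ then gives (\ref{eq:odds-decomposition}).

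\textbf{Per-round comparison.} Fix a round $t$ in block $k$. Conditional on $\mathcal H_{t-1}$, the draws $\Xi_o$ and $\Xi_s$ are independent. Write $q_t:=\PP(\Xi_s(t;\theta)\le b\mid\mathcal H_{t-1})$. The event $\{A_t=s,\ \Xi_s\le b\}$ requires $\Xi_o\le\Xi_s\le b$, so
\[
\PP(A_t=s,\ \Xi_s\le b\mid\mathcal H_{t-1})\le (1-p_k)\,q_t .
\]
On the other hand, $\{\Xi_o>b\ge\Xi_s\}$ forces $A_t=o$, so
\[
\PP(A_t=o\mid\mathcal H_{t-1})\ge p_k\,q_t .
\]
Combining the two,
\[
\PP(A_t=s,\ \Xi_s\le b\mid\mathcal H_{t-1})\le \frac{1-p_k}{p_k}\,\PP(A_t=o\mid\mathcal H_{t-1}).
\]

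\textbf{Summing within a block.} Since $p_k$ is fixed throughout block $k$ (it is $\sigma(\overline Y_{o,k})$-measurable), I sum the last inequality over the rounds of the block. The conditional probabilities $\PP(A_t=o\mid\mathcal H_{t-1})$ are then summed over a stretch that ends at the first pull of $o$. Conditioning on $\mathcal F_{\tau_k}$, the number of rounds $t\in(\tau_k,\tau_{k+1}]$ is a stopping time at which arm $o$ is pulled exactly once. Optional stopping (equivalently, summing the compensator of the single indicator $\ind\{A_t=o\}$ over the block) gives
\[
\E\Bigl[\textstyle\sum_{t\in(\tau_k,\tau_{k+1}]}\PP(A_t=o\mid\mathcal H_{t-1})\,\Big|\,\mathcal F_{\tau_k}\Bigr]\le 1 .
\]
The block is truncated at $T$, and truncation only lowers the sum. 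This yields the bound $\E[(1-p_k)/p_k]$ for block $k$. For the $\min\{T,\cdot\}$, I also use the trivial fact that any block contains at most $T$ rounds. Because $p_k$ is measurable at the start of the block, the count is bounded by $\min\{T,(1-p_k)/p_k\}$ after taking the conditional expectation given $\mathcal F_{\tau_k}$ and $\overline Y_{o,k}$.

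\textbf{Main obstacle.} The delicate step is the measurability bookkeeping. First, $\overline Y_{o,k}$ is revealed at time $\tau_k$, so $p_k$ is $\mathcal F_{\tau_k}$-measurable. Second, with the reward-table construction, the future rewards $Y_{o,k+1},\dots$ are independent of $\mathcal F_{\tau_k}$. These two facts justify pulling $(1-p_k)/p_k$ out of the conditional expectation, and they justify the optional-stopping bound of $1$ even when $\tau_{k+1}=\infty$ (no further pull of $o$). Blocks with $k\ge T$ are empty, and blocks whose start $\tau_k$ exceeds $T$ contribute zero, which is why the sum runs only over $k=1,\dots,T-1$.
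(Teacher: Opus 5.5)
Your proposal is correct and follows essentially the same argument as the paper: the same per-round posterior-odds comparison $\mathbb{P}(A_t=s,\Xi_s(t;\theta)\le b\mid\mathcal H_{t-1})\le \frac{1-p_k}{p_k}\,\mathbb{P}(A_t=o\mid\mathcal H_{t-1})$, followed by grouping rounds by the pre-pull count $k$ of arm $o$ and using that arm $o$ is pulled at most once at each count level. The only difference is bookkeeping: the paper uses indicators $\mathbf{1}\{N_o(t-1;\theta)=k\}$, the tower property, and a split on $\{R_k\le T\}$ versus $\{R_k>T\}$, while you condition on $\mathcal F_{\tau_k}$ and take the minimum of the two conditional bounds, which is equivalent and slightly tidier.
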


\begin{proof}[Proof of Lemma \ref{lem:odds-decomp}]
Conditional on the previous history $\mathcal H_{t-1}$, the two current posterior samples ${\Xi}_s(t;{\theta})$ and ${\Xi}_o(t;{\theta})$ are independent. Define $p_t:=p_{N_o(t-1;\theta)}({\theta})$, $q_t:={\PP}({\Xi}_s(t;{\theta})\le b\mid\mathcal H_{t-1})$, then we have
\begin{align*}
{\PP}(A_t(\theta)=s,{\Xi}_s(t;{\theta})\le b\mid\mathcal H_{t-1})
&\le q_t(1-p_t),\\
{\PP}(A_t(\theta)=o\mid\mathcal H_{t-1})
&\ge q_t p_t.
\end{align*}
Thus,
\[
{\PP}(A_t(\theta)=s,{\Xi}_s(t;{\theta})\le b\mid\mathcal H_{t-1})
\le
\frac{1-p_t}{p_t}
{\PP}(A_t(\theta)=o\mid\mathcal H_{t-1}).
\]

For $k=1,\ldots,T-1$, let
\[
I_{t,k}:=\ind\{N_o(t-1;\theta)=k\},
\qquad
R_k({\theta}):=\frac{1-p_k({\theta})}{p_k({\theta})}.
\]
Then on the event $\{N_o(t-1;\theta)=k\}$, we have $p_t=p_k({\theta})$. Moreover, we have
\[
\sum_{t=3}^T I_{t,k}\ind\{A_t(\theta)=o\}\le1,
\]
since selecting arm $o$ when its pre-pull count is $k$ increases that count to $k+1$.

Now consider the case $\{R_k({\theta})\le T\}$. From the tower property and previous bounds, we have
\begin{align*}
&\E\left[
\sum_{t=3}^T
I_{t,k}\ind\{A_t(\theta)=s,{\Xi}_s(t;{\theta})\le b\}
\ind\{R_k({\theta})\le T\}
\right]
\\
\quad\le&
\E\left[
R_k({\theta})\ind\{R_k({\theta})\le T\}
\sum_{t=3}^T I_{t,k}\ind\{A_t(\theta)=o\}
\right]
\\
\quad\le&
\E\left[
R_k({\theta})\ind\{R_k({\theta})\le T\}
\right].
\end{align*}
On $\{R_k({\theta})>T\}$, since the number of relevant rounds is trivially at most
$T$, we have
\[
\E\left[
\sum_{t=3}^T
I_{t,k}\ind\{A_t(\theta)=s,{\Xi}_s(t;{\theta})\le b\}
\ind\{R_k({\theta})>T\}
\right]
\le
T{\PP}(R_k({\theta})>T).
\]
Summarizing the above two cases, we deduce that
\[
\E\left[
\sum_{t=3}^T
I_{t,k}\ind\{A_t(\theta)=s,{\Xi}_s(t;{\theta})\le b\}
\right]
\le
\E\left[
\min\left\{T,R_k({\theta})\right\}
\right].
\]
Finally, because
\[
\sum_{t=3}^T
\ind\{A_t(\theta)=s,{\Xi}_s(t;{\theta})\le b\}
=
\sum_{k=1}^{T-1}
\sum_{t=3}^T
I_{t,k}\ind\{A_t(\theta)=s,{\Xi}_s(t;{\theta})\le b\},
\]
so taking a sum of the previous bound over $k=1,\ldots,T-1$ proves (\ref{eq:odds-decomposition}).
\end{proof}
\section{Ensembles induce a curvature correction}
\label{app:regularization}

\begin{customprop}{\ref{prop:regularized1}}[Ensembles induce a curvature correction]
    Suppose $\lambda \mapsto J_T(\pi_\theta, M_\lambda)$ is twice continuously differentiable
    and its Hessian is $L$-Lipschitz, i.e., $\| \nabla_\lambda^2 J_T(\pi_\theta, M_{\lambda}) - \nabla_\lambda^2 J_T(\pi_\theta, M_{\lambda'}) \|_{\mathrm{op}} \le L \| \lambda - \lambda' \|_2$ for all $\lambda, \lambda'$, where $\| \cdot \|_{\mathrm{op}}$ is the operator norm. Then
    \begin{align*}
        \Big| \E \big[ J_T(\pi_\theta, M_{\widehat \lambda+\epsilon^{(i)}}) \mid \mathcal{D}_{\mathrm{off}} \big]
        - J_T(\pi_\theta, M_{\widehat \lambda}) - \tfrac{1}{2} \textnormal{Tr} \big( \nabla_\lambda^2 J_T(\pi_\theta, M_{\widehat \lambda}) \Sigma \big) \Big|
        \le \frac{L}{6} \, \E \big[ \| \epsilon^{(i)} \|_2^3 \mid \mathcal{D}_{\mathrm{off}} \big].
    \end{align*}
\end{customprop}

\begin{proof}[Proof of Proposition~\ref{prop:regularized1}]
    Let
    \begin{align*}
        E^{(i)} := J_T(\pi_\theta, M_{\widehat \lambda+\epsilon^{(i)}}) - J_T(\pi_\theta, M_{\widehat \lambda}) - \nabla_\lambda J_T(\pi_\theta, M_{\widehat \lambda})^\top \epsilon^{(i)} - \frac{1}{2} (\epsilon^{(i)})^\top \nabla_\lambda^2 J_T(\pi_\theta, M_{\widehat \lambda}) \epsilon^{(i)}.
    \end{align*}
    Since the Hessian of $J_T(\pi_\theta, M_{\lambda})$ is $L$-Lipschitz, by Lemma 1 of \citet{nesterov2006cubic},
    \begin{align*}
        | E^{(i)} | \le \frac{L}{6} \| \epsilon^{(i)} \|_2^3.
    \end{align*}
    Since $\E[\|\epsilon^{(i)}\|_2^3 \mid \mathcal{D}_{\mathrm{off}}] < \infty$ by assumption, $|\nabla_\lambda J_T(\pi_\theta, M_{\widehat \lambda})^\top \epsilon^{(i)}| \le \| \nabla_\lambda J_T(\pi_\theta, M_{\widehat \lambda}) \|_2 \| \epsilon^{(i)} \|_2$ and $|(\epsilon^{(i)})^\top \nabla_\lambda^2 J_T(\pi_\theta, M_{\widehat \lambda}) \epsilon^{(i)}| \le \| \nabla_\lambda^2 J_T(\pi_\theta, M_{\widehat \lambda}) \|_{\mathrm{op}} \| \epsilon^{(i)} \|_2^2$, so we may take the following conditional expectations:
    \begin{align*}
        \E \big[ J_T(\pi_\theta, M_{\widehat \lambda+\epsilon^{(i)}}) \mid \mathcal{D}_{\off} \big]
        = J_T(\pi_\theta, M_{\widehat \lambda}) + \frac{1}{2} \textnormal{Tr} \big( \nabla_\lambda^2 J_T\big(\pi_\theta, M_{\widehat \lambda}\big) \Sigma \big) + \E \big[ E^{(i)} \mid \mathcal{D}_{\mathrm{off}} \big].
    \end{align*}
    where we use that
    \begin{itemize}[leftmargin=*]
        \item $\E[ \nabla_\lambda J_T(\pi_\theta, M_{\widehat \lambda})^\top \epsilon^{(i)} \mid \mathcal{D}_{\mathrm{off}} ] = \nabla_\lambda J_T(\pi_\theta, M_{\widehat \lambda})^\top \E[ \epsilon^{(i)} \mid \mathcal{D}_{\mathrm{off}} ] = 0$
        \item $\E \big[ (\epsilon^{(i)})^\top \nabla_\lambda^2 J_T(\pi_\theta, M_{\widehat \lambda}) \epsilon^{(i)} \mid \mathcal{D}_{\mathrm{off}} \big]
        = \E \big[ \textnormal{Tr} \big( \nabla_\lambda^2 J_T(\pi_\theta, M_{\widehat \lambda}) \epsilon^{(i)} (\epsilon^{(i)})^\top \big) \mid \mathcal{D}_{\mathrm{off}} \big] = \textnormal{Tr} \big( \E \big[ \nabla_\lambda^2 J_T(\pi_\theta, M_{\widehat \lambda}) \epsilon^{(i)} (\epsilon^{(i)})^\top \mid \mathcal{D}_{\mathrm{off}} \big] \big) = \textnormal{Tr} \big( \nabla_\lambda^2 J_T(\pi_\theta, M_{\widehat \lambda}) \Sigma \big)$
    \end{itemize}
    Thus, we have that
    \begin{align*}
        &\Big| \E \big[ J_T(\pi_\theta, M_{\widehat \lambda+\epsilon^{(i)}}) \mid \mathcal{D}_{\mathrm{off}} \big]
        - J_T(\pi_\theta, M_{\widehat \lambda}) - \frac{1}{2} \textnormal{Tr} \big( \nabla_\lambda^2 J_T(\pi_\theta, M_{\widehat \lambda}) \Sigma \big) \Big| \\
        &= \big| \E [ E^{(i)} \mid \mathcal{D}_{\mathrm{off}} ] \big|
        \leq \E \big[ | E^{(i)} |\mid \mathcal{D}_{\mathrm{off}} \big]
        \leq \frac{L}{6} \E \big[ \| \epsilon^{(i)} \|_2^3 \mid \mathcal{D}_{\mathrm{off}} \big].
    \end{align*}
\end{proof}

\end{document}